%% file: iclr2026_conference.tex
\documentclass{article} 
\usepackage{iclr2026_conference,times}

\input{math_commands.tex}

\input{preambles}

\title{Dual-Solver: A Generalized ODE Solver for Diffusion Models with Dual Prediction}

\author{Soochul Park \\
SteAI \\
MODULABS \\
\texttt{scpark20@gmail.com} \\
\And
Yeon Ju Lee \\
Korea University \\
MODULABS \\
\texttt{leeyeonju08@korea.ac.kr} \\
}

\iclrfinalcopy 
\begin{document}

\maketitle

\input{abstract/main}
\input{introduction/main}
\input{background/main}
\input{method/main}
\input{implementation/main}

\input{learning/main}
\input{experiment/main}
\input{conclusion/main}
\input{etc/acknowledgements}

\bibliography{iclr2026_conference}
\bibliographystyle{iclr2026_conference}

\newpage
\setcounter{tocdepth}{2}
\tableofcontents
\allowdisplaybreaks
\newpage
\appendix
\input{appendix/main}
\end{document}

%% file: math_commands.tex
\usepackage{amsmath,amsfonts,bm}

\def\eqref#1{equation~\ref{#1}}

\def\1{\bm{1}}

\def\vv{{\bm{v}}}
\def\vw{{\bm{w}}}
\def\vx{{\bm{x}}}

\def\mI{{\bm{I}}}

\DeclareMathAlphabet{\mathsfit}{\encodingdefault}{\sfdefault}{m}{sl}
\SetMathAlphabet{\mathsfit}{bold}{\encodingdefault}{\sfdefault}{bx}{n}



%% file: preambles.tex
\definecolor{cite}{HTML}{4F46E5}
\definecolor{sky}{HTML}{000000}
\definecolor{fire}{HTML}{BB1100}

\usepackage{kotex}
\usepackage{hyperref}
\usepackage{url}
\usepackage{amsmath}
\usepackage{bm}
\usepackage{tabularx}
\usepackage{booktabs}
\usepackage{array}
\usepackage{graphicx}
\usepackage{imakeidx}
\makeindex[columns=2, title=Index, intoc] 
\usepackage{mathtools}
\usepackage{amsthm}
\usepackage{algorithm}
\usepackage{algpseudocode}

\newtheorem{theorem}{Theorem}[section]

\theoremstyle{definition}
\newtheorem{definition}{Definition}[section]

\theoremstyle{remark}

\usepackage[most]{tcolorbox}
\tcbset{
  enhanced,
  colback=gray!8,      
  colframe=gray!40,    
  boxrule=0.5pt,
  arc=2pt,
  left=8pt, right=8pt, top=6pt, bottom=6pt,
  before skip=8pt, after skip=8pt
}

\usepackage{tikz}
\usetikzlibrary{arrows.meta,positioning,fit,calc,decorations.pathreplacing}

\usepackage{booktabs}
\usepackage{caption}    
\usepackage{adjustbox}  
\usepackage[table]{xcolor} 

\usepackage{graphicx}
\usepackage{wrapfig}
\usepackage{needspace}
\usepackage{centernot}

\usepackage{subcaption} 

\usepackage{algorithm}
\usepackage{algpseudocode}
\usepackage{bm}

\usepackage{graphicx}
\usepackage{subcaption}
\usepackage{tikz}
\usetikzlibrary{arrows.meta,calc}
\usepackage{titletoc}  
\usepackage{nccmath}
\usepackage{empheq}

\usepackage{tablefootnote} 
\usepackage{threeparttable} 
\usepackage[nameinlink,noabbrev]{cleveref}

\newcommand{\rowprompt}[1]{%
  \begin{minipage}{\textwidth}\centering\footnotesize\ttfamily #1\end{minipage}%
  \vspace{0.25\baselineskip}%
}
\usetikzlibrary{backgrounds,fit}
\usepackage{enumitem}  

%% file: abstract/main.tex
\begin{abstract}
Diffusion models achieve state-of-the-art image quality.
However, sampling is costly at inference time because it requires a large number of function evaluations (NFEs).
To reduce NFEs, classical ODE numerical methods have been adopted.
Yet, the choice of prediction type and integration domain leads to different sampling behaviors.
To address these issues, we introduce Dual-Solver, which generalizes multistep samplers through learnable parameters that continuously (i) interpolate among prediction types, (ii) select the integration domain, and (iii) adjust the residual terms.
It retains the standard predictor-corrector structure while preserving second-order local accuracy.
These parameters are learned via a classification-based objective using a frozen pretrained classifier (e.g., MobileNet or CLIP).
For ImageNet class-conditional generation (DiT, GM-DiT) and text-to-image generation (SANA, PixArt-$\alpha$), Dual-Solver improves FID and CLIP scores in the low-NFE regime ($3 \le$ NFE $\le 9$) across backbones.
\end{abstract}

%% file: introduction/main.tex
\section{Introduction}
\label{sec:introduction}
\input{introduction/diffusion}
\input{introduction/fastsolver}
\input{introduction/contribution}
\input{experiment/results/sana_results1}

%% file: introduction/diffusion.tex
Generative modeling aims to learn a data distribution and generate new samples that resemble real data.
Classic approaches include autoregressive models that factorize likelihoods over pixels or tokens \citep{van2016conditional,salimans2017pixelcnn++}, variational auto-encoders that optimize an evidence lower bound \citep{kingma2013auto,vahdat2020nvae}, flow-based models that construct exact, invertible density maps \citep{dinh2014nice,kingma2018glow}, and generative adversarial networks that learn via adversarial training between a generator and a discriminator~\citep{goodfellow2020generative,arjovsky2017wasserstein}.
Diffusion models have emerged as a powerful class of generative models.
In seminal work, \citet{sohl2015deep} introduced diffusion probabilistic modeling with a forward noising process and a learned reverse process trained by minimizing a KL-divergence objective.
Subsequent reformulations \citep{ho2020denoising} have streamlined training and inference with simple denoising objectives (e.g., noise or data prediction), leading to state-of-the-art fidelity and robust scaling.
Today, diffusion models drive progress across multiple modalities, including images~\citep{dhariwal2021diffusion,rombach2022high}, audio~\citep{kong2020diffwave,liu2023audioldm}, and video~\citep{ho2022video,kong2024hunyuanvideo}.
\\\\

%% file: introduction/fastsolver.tex
Diffusion models generate samples through iterative denoising steps, making the inference cost scale with the number of function evaluations (NFEs).
Leveraging the probability–flow formulation, which casts sampling as an ordinary differential equation \citep{song2021score}, a number of recent works have proposed ODE-based acceleration methods.
Along one axis, classical ODE methods—singlestep Runge–Kutta \citep{Runge1895} and multistep Adams–Bashforth \citep{BashforthAdams1883}—provide off-the-shelf accuracy–NFE trade-offs for a given evaluation budget \citep{butcher2016numerical}.
Along a second axis, diffusion-dedicated solvers exploit the structure of the denoising dynamics: they approximate noise or data predictions with low-order Taylor expansions or Lagrange interpolation and derive closed-form updates \citep{lu2022dpm,lu2022dpmpp,Zhang2023fast,zhao2023unipc,xue2024sa}.
Lastly, learned solvers optimize the timestep schedule and other sampling-related parameters~\citep{amed_solver,shaul2023bespoke,shaul2024bespoke,wang2025differentiable}.
Because these parameters depend on the backbone and dataset, such solvers are typically tailored to specific backbones and settings (e.g., a chosen NFE and guidance scale).
Training also incurs substantial preparation overhead, as it requires many teacher trajectories or final samples generated using high-NFE sampling.
However, compared to the classical methods and dedicated solvers discussed above, learned solvers deliver substantially better sample quality and remain an active area of research.

%% file: introduction/contribution.tex
We introduce Dual-Solver, a learned solver for diffusion models with three types of learnable parameters:
\begin{itemize}
  \item a prediction parameter $\gamma$ that interpolates among noise, velocity, and data predictions (Sec.~\ref{sec:dual_prediction});
  \item a domain change parameter $\tau$ that interpolates between logarithmic and linear domains (Sec.~\ref{sec:change_of_variables});
  \item a residual parameter $\kappa$ that adjusts the residual term while preserving second-order accuracy (Sec.~\ref{sec:second_order_approximation}).
\end{itemize}

We further propose a classification-based learning strategy that yields high-fidelity images even in the low-NFE regime (Sec.~\ref{sec:classifier_based_parameter_learning}).
Unlike regression-based learning, which typically requires many target samples generated at high NFEs, our approach requires no target samples.
Solver parameters are learned using either pretrained image classifiers~\citep{he2016deep,dosovitskiy2020image,howard2017mobilenets} or a zero-shot classifier~\citep{radford2021learning}.
For $3\le$ NFE $\le 9$, Dual-Solver outperforms prior state-of-the-art solvers (Sec.~\ref{sec:experiments}).

%% file: experiment/results/sana_results1.tex

\begin{figure*}[t]
  \centering
  \begin{subfigure}[t]{0.24\textwidth}
    \includegraphics[width=\linewidth]{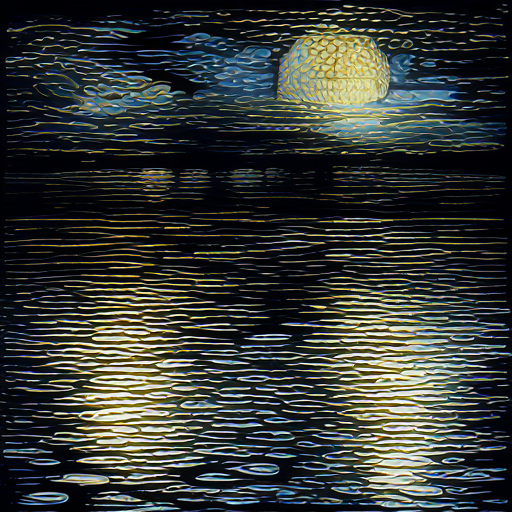}
    \captionsetup{justification=centering, singlelinecheck=false}
    \caption{DPM-Solver++~\\\citep{lu2022dpmpp}}
    \label{fig:comp_dpmpp}
  \end{subfigure}\hfill
  \begin{subfigure}[t]{0.24\textwidth}
    \includegraphics[width=\linewidth]{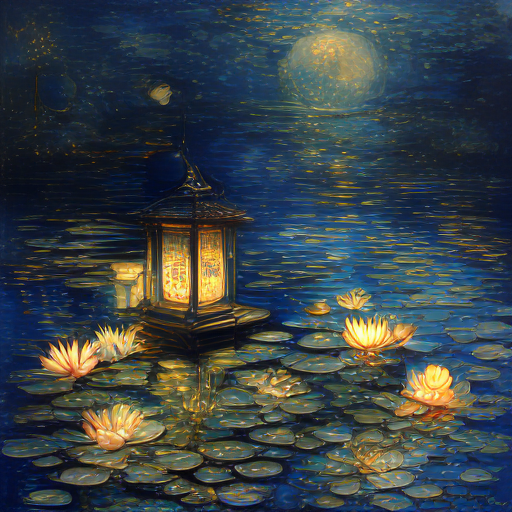}
    \caption{Dual-Solver (Ours)}
    \label{fig:comp_dual}
  \end{subfigure}\hfill
  \begin{subfigure}[t]{0.24\textwidth}
    \includegraphics[width=\linewidth]{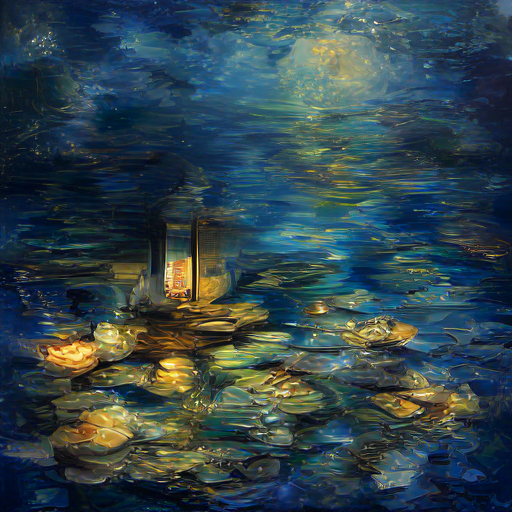}
    \captionsetup{justification=centering, singlelinecheck=false}
    \caption{BNS-Solver\\\citep{shaul2024bespoke}}
    \label{fig:comp_bns}
  \end{subfigure}\hfill
  \begin{subfigure}[t]{0.24\textwidth}
    \includegraphics[width=\linewidth]{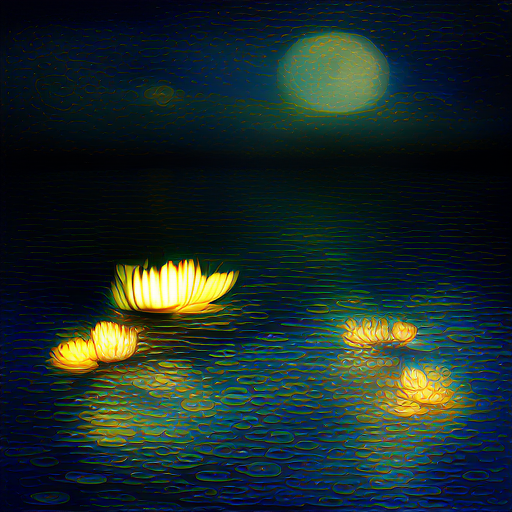}
    \captionsetup{justification=centering, singlelinecheck=false}
    \caption{DS-Solver\\\citep{wang2025differentiable}}
    \label{fig:comp_ds}
  \end{subfigure}

  \caption{\textbf{Sampling results.} SANA~\citep{xie2024sana}, NFE=3, CFG=4.5. See Fig.~\ref{fig:additional_sana} for further results.}
  \label{fig:comp_four}
\end{figure*}


%% file: background/main.tex
\section{Preliminaries}
\input{background/diffusion}
\input{background/prediction2}

%% file: background/diffusion.tex
\subsection{Diffusion Models}
\paragraph{Training process.}

Diffusion models~\citep{ho2020denoising,song2021score} are trained as follows.
A clean sample \(\vx_0\) is drawn from the data distribution, and noise is drawn independently from the standard Gaussian distribution. They are then linearly combined
with signal rate \(\alpha_t\) and noise rate \(\sigma_t\):
\begin{equation}
\label{eq:training}
\vx_t \;=\; \alpha_t\,\vx_0 \;+\; \sigma_t\,\bm{\epsilon},
\qquad \text{where }\,
\vx_0 \sim p_{\text{data}},\,
\bm{\epsilon}\sim\mathcal N(\mathbf 0,\mI),\,
0 \le t \le T.
\end{equation}
Here, \(\alpha_t\) and \(\sigma_t\) are set by a predefined schedule. Common choices include
variance-preserving (VP), with \(\alpha_t^2+\sigma_t^2=1\)~\citep{sohl2015deep,ho2020denoising};
variance-exploding (VE), with \(\alpha_t=1,\ \sigma_t\ge 0\)~\citep{song2019generative,song2020improved};
and optimal transport (OT), with \(\alpha_t=1-t,\ \sigma_t=t,\ T=1\)~\citep{lipman2022flow}. 
The backbone is trained to take a noisy sample \(\vx_t\) and the time \(t\) as input and to predict one of the following: the noise \(\bm{\epsilon}\), the clean sample \(\vx_0\)~\citep{ho2020denoising}, or the velocity \(\vv_t = \frac{d\alpha_t}{dt} \vx_0 + \frac{d\sigma_t}{dt} \bm{\epsilon}\)~\citep{lipman2022flow}.
By convention, the backbone parameters are denoted \(\theta\), and the predictions are written as \(\bm{\epsilon}_\theta\) (noise), \(\vx_\theta\) (data), and \(\vv_\theta\) (velocity).

\paragraph{Sampling process.}
The dynamics corresponding to Eq.~\ref{eq:training} can be expressed as the following stochastic differential equation (SDE)~\citep{song2021score}:
\begin{equation}
\label{eq:forward-sde}
d\vx_t \;=\; f_t\,\vx_t\,dt \;+\; g_t\,d\vw_t,
\qquad
f_t \;=\; \frac{d\log\alpha_t}{dt},\quad
g_t^{\,2} \;=\; \frac{d\sigma_t^{2}}{dt}\;-\;2\,\frac{d\log\alpha_t}{dt}\,\sigma_t^{2},
\end{equation}
where \(\vw_t\) is a standard Wiener process.
A probability-flow ODE that shares the same marginal distributions as the SDE at every time $t$ has been proposed~\citep{song2021score}:
\begin{equation}
\label{eq:pf-ode}
\frac{d\vx_t}{dt}
\;=\;
f_t\,\vx_t
\;-\;
\frac{1}{2}\,g_t^{\,2}\,\nabla_{\vx}\log q_t(\vx_t),
\quad
\text{where }\;
\nabla_{\vx}\log q_t(\vx_t)
\;=\;
-\frac{\mathbb{E}[\boldsymbol{\epsilon}\mid \vx_t]}{\sigma_t}
\;\approx\;
-\frac{\boldsymbol{\epsilon}_\theta(\vx_t,t)}{\sigma_t}.
\end{equation}

Given this ODE, one can perform sampling using classical numerical methods beyond Euler, such as singlestep Runge–Kutta~\citep{Runge1895} and multistep Adams–Bashforth schemes~\citep{BashforthAdams1883}.
Moreover, several works split the right-hand side into linear and nonlinear parts, and evaluate the nonlinear term using finite-difference approximations~\citep{lu2022dpm,lu2022dpmpp,zhao2023unipc} or via Lagrange interpolation~\citep{Zhang2023fast,xue2024sa}.

%% file: background/prediction2.tex
\subsection{Prediction Types}
\label{sec:model_prediction}

Diffusion backbones are typically trained with noise prediction~\citep{ho2020denoising,karras2022elucidating,peebles2023scalable}, whereas flow-matching backbones are trained with velocity prediction~\citep{lipman2022flow,chen2025gaussian,xie2024sana}.
However, noise, velocity, and data predictions can be converted into one another as follows.

\begin{equation}
\label{eq:relation_between_predictions}
\vx_\theta(\vx_t,t) \;=\; \frac{\vx_t - \sigma_t\,\bm{\epsilon}_\theta(\vx_t,t)}{\alpha_t},\quad
\vv_\theta(\vx_t,t) \;=\; \frac{d\alpha_t}{dt}\,\vx_\theta(\vx_t,t) \;+\; \frac{d\sigma_t}{dt}\,\bm{\epsilon}_\theta(\vx_t,t).
\end{equation}

Using these relationships, a different prediction type can be used at sampling time from that used during training.
For diffusion models, many early samplers use noise prediction~\citep{ho2020denoising,song2020denoising,liu2022pseudo,Zhang2023fast,lu2022dpm}.
More recently, data prediction has also been widely adopted to improve sample quality~\citep{lu2022dpmpp,xie2024sana,wang2025differentiable}.
In contrast, flow-matching backbones typically use velocity prediction at sampling time, as in training~\citep{lipman2022flow,chen2025gaussian,xie2024sana,shaul2024bespoke}.
Table~\ref{tab:predictions} summarizes the sampling equations in both differential and integral form for each prediction type, derived from Eq.~\ref{eq:pf-ode} and Eq.~\ref{eq:relation_between_predictions}.

\input{background/equations_table}

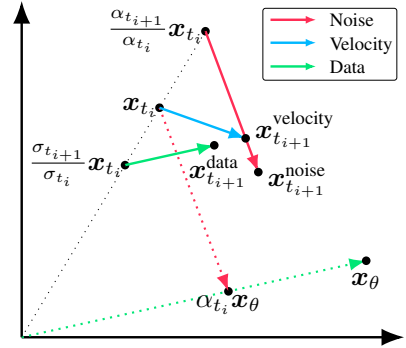
\begin{wrapfigure}{r}{0.38\textwidth}
  \vspace{-12pt} 
  \centering
  \resizebox{\linewidth}{!}{\input{background/figs/predictions}}
  \captionsetup{skip=8pt}
  \caption{\textcolor{sky}{Euler updates for noise, velocity, and data predictions.}}
  \label{fig:predictions}
\end{wrapfigure}

\paragraph{Discretization discrepancy.}
We ask a simple question: do the prediction types yield the same update?
In continuous time, yes; in discrete time, no.
Fig.~\ref{fig:predictions} illustrates this discrepancy in two dimensions.
For example, applying the Euler method in the $\lambda$-domain (half log-SNR) to noise prediction yields $
\vx_{t_{i+1}}^{\text{noise}}=\tfrac{\alpha_{t_{i+1}}}{\alpha_{t_i}}\big[\vx_{t_i}-\sigma_{t_i}\,\Delta\lambda_{t_i}\,\boldsymbol{\epsilon}_\theta(\vx_{t_i},t_i)\big],
$
where $\Delta\lambda_{t_i}:=\lambda_{t_{i+1}}-\lambda_{t_i}$.
Applying the same method to data prediction and rewriting the result using Eq.~\ref{eq:relation_between_predictions} gives $
\vx_{t_{i+1}}^{\text{data}}=\tfrac{\alpha_{t_{i+1}}}{\alpha_{t_i}}\big[(1+\Delta\lambda_{t_i})e^{-\Delta\lambda_{t_i}}\,\vx_{t_i}-e^{-\Delta\lambda_{t_i}}\sigma_{t_i}\,\Delta\lambda_{t_i}\,\boldsymbol{\epsilon}_\theta(\vx_{t_i},t_i)\big].
$
Since $e^{-\Delta\lambda_{t_i}}=1-\Delta\lambda_{t_i}+\tfrac12\Delta\lambda_{t_i}^2+\cdots$, they differ at order $O((\Delta\lambda_{t_i})^2)$.
This naturally raises the question of which update is preferable in practice.

\clearpage

%% file: background/equations_table.tex
\begin{table}[b]
\centering
\renewcommand{\arraystretch}{1.15}
\setlength{\tabcolsep}{8pt}
\caption{Differential and integral forms for noise, velocity, and data predictions. ($\alpha_t$: signal rate, $\sigma_t$: noise rate, $\lambda_t:=\log (\alpha_t/\sigma_t)$)}
\resizebox{\linewidth}{!}{%
\begin{tabular}{ccc}
\toprule
& \textbf{Differential form} & \textbf{Integral form on $[t_i,t_{i+1}]$} \\
\midrule
Noise
& $\displaystyle \frac{d{\vx}_t}{dt} = \tfrac{d\log \alpha_t}{dt}\,\vx_t - \sigma_t\,\tfrac{d\lambda_t}{dt}\,\bm{\epsilon}_\theta(\vx_t,t)$
& $\displaystyle \vx_{t_{i+1}} = \tfrac{\alpha_{t_{i+1}}}{\alpha_{t_i}}\,\vx_{t_i}
 - \alpha_{t_{i+1}}\!\int_{t_i}^{t_{i+1}} \tfrac{\sigma_t}{\alpha_t}\,\tfrac{d\lambda_t}{dt}\,
 \bm{\epsilon}_\theta(\vx_t,t)\,dt$ \\
Velocity
& $\displaystyle \frac{d{\vx}_t}{dt} = \tfrac{d\alpha_t}{dt}\,\vx_\theta(\vx_t,t) + \tfrac{d\sigma_t}{dt}\,\bm{\epsilon}_\theta(\vx_t,t)$
& $\displaystyle \vx_{t_{i+1}} = \vx_{t_i}
 + \!\int_{t_i}^{t_{i+1}}\!\!\big[\tfrac{d\alpha_t}{dt}\,\vx_\theta(\vx_t,t)
 + \tfrac{d\sigma_t}{dt}\,\bm{\epsilon}_\theta(\vx_t,t)\big]\,dt$ \\
 Data
& $\displaystyle \frac{d{\vx}_t}{dt} = \tfrac{d\log \sigma_t}{dt}\,\vx_t + \alpha_t\,\tfrac{d\lambda_t}{dt}\,\vx_\theta(\vx_t,t)$
& $\displaystyle \vx_{t_{i+1}} = \tfrac{\sigma_{t_{i+1}}}{\sigma_{t_i}}\,\vx_{t_i}
 + \sigma_{t_{i+1}}\!\int_{t_i}^{t_{i+1}} \tfrac{\alpha_t}{\sigma_t}\,\tfrac{d\lambda_t}{dt}\,
 \vx_\theta(\vx_t,t)\,dt$ \\
\bottomrule
\end{tabular}
}
\label{tab:predictions}
\end{table}

%% file: background/figs/predictions.tex

\begin{tikzpicture}[
  x=1cm,y=1cm,>=Latex,
  axis/.style={line width=0.9pt,-{Latex[length=3mm]}},
]
\definecolor{VelCol}{HTML}{00B8FF}
\definecolor{NoiseCol}{HTML}{FF2D55}
\definecolor{DataCol}{HTML}{00E676}
\tikzset{
  col/noise/.style = {draw=NoiseCol},
  col/data/.style  = {draw=DataCol},
  col/vel/.style   = {draw=VelCol},
  arr/.style  = {line width=0.9pt,-{Latex[length=2.2mm]}},
  darr/.style = {dotted,line width=0.8pt,-{Latex[length=2.2mm]}}
}

\def\AxisL{5.0}
\coordinate (O) at (0,0);
\draw[axis] (O) -- (\AxisL,0);
\draw[axis] (O) -- (0,4.4);

\pgfmathsetmacro{\alphaI}{0.60}      
\pgfmathsetmacro{\alphaIp}{0.80}     

\pgfmathsetmacro{\sigmaI}{sqrt(1 - \alphaI*\alphaI)}
\pgfmathsetmacro{\sigmaIp}{sqrt(1 - \alphaIp*\alphaIp)}

\pgfmathsetmacro{\alphaRatio}{\alphaIp/\alphaI}
\pgfmathsetmacro{\sigmaRatio}{\sigmaIp/\sigmaI}

\pgfmathsetmacro{\lambdaI}{ln(\alphaI/\sigmaI)}
\pgfmathsetmacro{\lambdaIp}{ln(\alphaIp/\sigmaIp)}
\pgfmathsetmacro{\DeltaLam}{\lambdaIp - \lambdaI}   

\pgfmathsetmacro{\DeltaAlpha}{\alphaIp - \alphaI}
\pgfmathsetmacro{\DeltaSigma}{\sigmaIp - \sigmaI}

\coordinate (XthetaDir) at (4.5,1.0);  
\draw[col/data,darr] (O) -- (XthetaDir);
\fill (XthetaDir) circle (1.6pt);
\node[anchor=north,yshift=-1pt] at (XthetaDir) {$\boldsymbol{x}_\theta$};

\coordinate (Xi) at (1.8,3.0);        
\fill (Xi) circle (1.6pt);
\node[anchor=east,xshift=3pt] at (Xi) {$\boldsymbol{x}_{t_i}$};

\coordinate (Top)  at ($(O)!\alphaRatio!(Xi)$);         
\fill (Top) circle (1.6pt);
\node[anchor=east,xshift=3pt] at (Top)
  {$\tfrac{\alpha_{t_{i+1}}}{\alpha_{t_i}}\boldsymbol{x}_{t_i}$};

\coordinate (AlphaXtheta) at ($(O)!\alphaI!(XthetaDir)$); 
\fill (AlphaXtheta) circle (1.6pt);
\node[anchor=north,yshift=2pt] at (AlphaXtheta) {$\alpha_{t_i}\boldsymbol{x}_\theta$};

\coordinate (Sig)  at ($(O)!\sigmaRatio!(Xi)$);         
\fill (Sig) circle (1.6pt);
\node[anchor=east,xshift=3pt] at (Sig)
  {$\tfrac{\sigma_{t_{i+1}}}{\sigma_{t_i}}\boldsymbol{x}_{t_i}$};

\draw[col/noise,darr] (Xi) -- (AlphaXtheta);

\pgfmathsetmacro{\NoiseScale}{\DeltaLam * \alphaRatio}       
\pgfmathsetmacro{\DataScale}{\DeltaLam * \sigmaRatio * \alphaI} 
\pgfmathsetmacro{\VelCoefA}{\DeltaAlpha}                     
\pgfmathsetmacro{\VelCoefS}{\DeltaSigma / \sigmaI}           

\coordinate (XnextNoise) at
  ($ (Top) + \NoiseScale*(AlphaXtheta) - \NoiseScale*(Xi) $);

\coordinate (XnextVel) at
  ($ (Xi)
   + \VelCoefA*(XthetaDir) - \VelCoefA*(O)
   + \VelCoefS*(Xi) - \VelCoefS*(AlphaXtheta) $);

\coordinate (XnextData) at
  ($ (Sig) + \DataScale*(XthetaDir) - \DataScale*(O) $);

\draw[col/noise,arr] (Top) -- (XnextNoise);
\draw[col/vel,  arr] (Xi)  -- (XnextVel);
\draw[col/data, arr] (Sig) -- (XnextData);

\fill (XnextNoise) circle (1.6pt);
\node[anchor=west,xshift=0pt,yshift=-3pt] at (XnextNoise) {$\boldsymbol{x}^{\text{noise}}_{t_{i+1}}$};

\fill (XnextVel) circle (1.6pt);
\node[anchor=west,xshift=0pt,yshift=3pt] at (XnextVel) {$\boldsymbol{x}^{\text{velocity}}_{t_{i+1}}$};

\fill (XnextData) circle (1.6pt);
\node[anchor=west,xshift=-13pt,yshift=-10pt] at (XnextData) {$\boldsymbol{x}^{\text{data}}_{t_{i+1}}$};

\begin{scope}[shift={(4.8,4.00)}, font=\scriptsize]
  \def\LegL{0.60}\def\Head{1.4mm}
  \fill[white] (-1.65,0.30) rectangle (0,-0.74);
  \draw[black,line width=0.3pt,rounded corners=1pt]
       (-1.65,0.30) rectangle (0.1,-0.64);

  \coordinate (L0) at (-1.55, 0.15);
  \draw[col/noise,arr,-{Latex[length=\Head]}] (L0) -- ++(\LegL,0);
  \node[anchor=west] at ($(L0)+(\LegL+0.05,0)$) {Noise};

  \coordinate (L1) at (-1.55,-0.15);
  \draw[col/vel,  arr,-{Latex[length=\Head]}] (L1) -- ++(\LegL,0);
  \node[anchor=west] at ($(L1)+(\LegL+0.05,-0.04)$) {Velocity};

  \coordinate (L2) at (-1.55,-0.46);
  \draw[col/data, arr,-{Latex[length=\Head]}] (L2) -- ++(\LegL,0);
  \node[anchor=west] at ($(L2)+(\LegL+0.05,0)$) {Data};
\end{scope}
\draw[dotted] (O) -- (Top); 
\end{tikzpicture}

%% file: method/main.tex
\section{Dual-Solver}
\vspace{-4pt}




\input{method/dual_prediction2}
\input{method/change2}

\input{method/second-order2}

%% file: method/dual_prediction2.tex
\subsection{Dual Prediction with Parameter $\gamma$}
\label{sec:dual_prediction}
\vspace{-4pt}
We propose a \emph{dual prediction} scheme that uses both $\vx_\theta$ and $\bm{\epsilon}_\theta$ together.
We call it \emph{dual} because it treats $\vx_\theta$ and $\bm{\epsilon}_\theta$ separately, unlike velocity prediction, which bundles them into $\vv_\theta$.
Moreover, we introduce the following integral formulation parameterized by $\gamma$, which interpolates between the integral forms of noise, velocity, and data prediction.
\vspace{-4pt}
\paragraph{Integral form of dual prediction.}
\begin{equation}\label{eq:dual_prediction_integral}
\begin{aligned}
&\vx_{t_{i+1}} =
A\,\vx_{t_i} + B\!\left[
\int_{t_i}^{t_{i+1}} C\,\vx_{\theta}(\vx_t, t)\,dt
+
\int_{t_i}^{t_{i+1}} D\,\bm{\epsilon}_{\theta}(\vx_t, t)\,dt
\right],\\
&\text{where }\;
\begin{cases}
A = \left({\sigma_{t_{i+1}}}/{\sigma_{t_i}}\right)^{\gamma},\,
B = \sigma_{t_{i+1}}^{\gamma},\,
C = \dfrac{d}{dt}\!\big(\alpha_t\,\sigma_t^{-\gamma}\big),\,
D = \dfrac{d}{dt}\!\big(\sigma_t^{\,1-\gamma}\big) \text{ for } \gamma \ge 0,\\[10pt]
A = \left({\alpha_{t_{i+1}}}/{\alpha_{t_i}}\right)^{-\gamma},\,
B = \alpha_{t_{i+1}}^{-\gamma},\,
C = \dfrac{d}{dt}\!\big(\alpha_t^{\,1+\gamma}\big),\,
D = \dfrac{d}{dt}\!\big(\sigma_t\,\alpha_t^{\,\gamma}\big) \text{ for } \gamma < 0.
\end{cases}
\end{aligned}
\end{equation}
It has the following properties:
\vspace{-4pt}
\begin{itemize}
\item When \(\gamma=-1\), the integral reduces to the noise-prediction form (Table~\ref{tab:predictions}).
\item When \(\gamma=0\), the integral reduces to the velocity-prediction form (Table~\ref{tab:predictions}).  
\item When \(\gamma=1\), the integral reduces to the data-prediction form (Table~\ref{tab:predictions}).
\end{itemize}
\vspace{-4pt}
From Eq.~\ref{eq:pf-ode}, we derive the differential form of dual prediction and, by integrating, obtain the integral form given above; the full derivation is provided in Appendix~\ref{app:derivation_dual_prediction}. Another role of $\gamma$ is to control how much $\vx_{t_i}$ contributes to $\vx_{t_{i+1}}$: increasing $\gamma$ decreases the coefficient $A$ multiplying $\vx_{t_i}$, whereas decreasing $\gamma$ increases $A$.

%% file: method/change2.tex
\subsection{Log-Linear Domain Change with Parameter $\tau$}
\label{sec:change_of_variables}
\vspace{-4pt}
\paragraph{Domain change.}
Let $L:(0,\infty)\to \mathcal{I}$ be a $C^1$ diffeomorphism onto an interval $\mathcal{I}\subseteq\mathbb{R}$.
To apply a change of variables to the integrals in Eq.~\ref{eq:dual_prediction_integral}, we define the following:
\begin{equation}\label{eq:def_L}
\begin{cases}
u(t)=L\big(\alpha_t\,\sigma_t^{-\gamma}\big),\,
v(t)=L\big(\sigma_t^{\,1-\gamma}\big)
\text{ for } \gamma \ge 0,\\[6pt]
u(t)=L\big(\alpha_t^{\,1+\gamma}\big),\,
v(t)=L\big(\sigma_t\,\alpha_t^{\,\gamma}\big)
\text{ for } \gamma < 0.
\end{cases}
\end{equation}
By the chain rule, $\tfrac{d}{dt}L^{-1}(u)=\tfrac{dL^{-1}(u)}{du}\,\tfrac{du}{dt}$ and similarly for $v$.
Thus, we obtain
\begin{equation}
\label{eq:invertible_transform_integral}
\vx_{t_{i+1}}=
A\vx_{t_i}
\;+\;
B\left[
\int_{u_i}^{u_{i+1}}
\frac{dL^{-1}(u)}{du}\,\vx_\theta(u)\,du
+
\int_{v_i}^{v_{i+1}}
\frac{dL^{-1}(v)}{dv}\,\bm{\epsilon}_\theta(v)\,dv
\right].
\end{equation}
Here, $u_i=u(t_i),\ u_{i+1}=u(t_{i+1})$ and $v_i=v(t_i),\ v_{i+1}=v(t_{i+1})$. $A$ and $B$ are as defined in Eq.~\ref{eq:dual_prediction_integral}, and
$\vx_\theta(u) \coloneqq \vx_\theta(\vx_{u^{-1}(u)},\,u^{-1}(u))$
and
$\bm{\epsilon}_\theta(v) \coloneqq \bm{\epsilon}_\theta(\vx_{v^{-1}(v)},\,v^{-1}(v))$.
\vspace{-4pt}
\paragraph{Log-linear transform.}
Previous works~\citep{dockhorn2022genie,Zhang2023fast,amed_solver} use the linear transform $L(y)=y$ with noise prediction.
Because $\tfrac{d}{du}L^{-1}(u)=1$, the integrand carries no weighting factor, making it straightforward to develop approximations such as Taylor expansions and Lagrange interpolation.
By contrast, other works~\citep{lu2022dpm,lu2022dpmpp,zhao2023unipc,xue2024sa} use a logarithmic transform $L(y)=\log y$.
Because $\tfrac{d}{du}L^{-1}(u)=e^{u}$, the integrand carries an exponential weight. A closed-form approximation can be obtained via an exponential integrator~\citep{hochbruck2010exponential} or by using Lagrange interpolation.
Motivated by these works, we propose a \emph{log-linear} transform parameterized by a scalar $\tau$.
\begin{equation}
\label{eq:log-linear}
L(y;\tau) \;=\; \frac{\log\!\left(1+\tau y\right)}{\tau},\quad \tau>0.
\end{equation}
This transform is invertible, with inverse $L^{-1}(u;\tau)=\bigl(e^{\tau u}-1\bigr)/\tau$; its weighting factor is $\tfrac{\mathrm{d}}{\mathrm{d}u}L^{-1}(u;\tau)=e^{\tau u}$. Consequently, it has the following properties:
\begin{itemize}
  \item As \(\tau \to 0^{+}:\,\) \(L(y;\tau) \to y,\,\) \(\;\tfrac{\mathrm{d}}{\mathrm{d}u}L^{-1}(u;\tau)\to 1\).
  \item When \(\tau = 1:\,\) \(L(y;\tau) = \log(1+y),\,\) \(\;\tfrac{\mathrm{d}}{\mathrm{d}u}L^{-1}(u;\tau)= e^{u}\).
\end{itemize}
We apply the log–linear transform to Eq.~\ref{eq:invertible_transform_integral}, allowing separate parameters $\tau_u$ and $\tau_v$ for the $u$- and $v$-integrals, respectively.
A comparison of domain-change functions is presented in Sec.~\ref{sec:domain_change_ablation}.

%% file: method/second-order2.tex
\subsection{Approximation with Parameter $\kappa$}
\label{sec:second_order_approximation}
On the interval \([u_i, u_{i+1}]\), we approximate \(\vx_\theta\) in two ways: (i) the first-order approximation \(\vx_\theta(u)=\vx_\theta(u_i)\); and (ii) the second-order forward-difference approximation \(\vx_\theta(u)=\vx_\theta(u_i)+\tfrac{\Delta \vx_\theta(u_i)}{\Delta u_i}(u-u_i)\). Here, \(\vx_{\theta}(u_i)\coloneqq \vx_{\theta}(\vx_{u^{-1}(u_i)},u^{-1}(u_i))\), \(\Delta\vx_{\theta}(u_i)\coloneqq \vx_{\theta}(u_{i+1})-\vx_{\theta}(u_i)\), and \(\Delta u_i \coloneqq u_{i+1}-u_i\). We also introduce \(K(\Delta u_i;\kappa_u)=\kappa_u(\Delta u_i)^2\), an \(\mathcal{O}((\Delta u_i)^2)\) term, to allow additional flexibility in the residual term while preserving local accuracy.
$\kappa_u$ is a real scalar parameter that controls the magnitude of the residual term.
Applying the same approximations to \(\bm\epsilon_\theta\) yields the following first-order predictor $\vx_{t_{i+1}}^{\text{1st-pred.}}$ and second-order corrector $\vx_{t_{i+1}}^{\text{2nd-corr.}}$:
\begin{equation}
\label{eq:fo-pred}
\begin{aligned}
\vx_{t_{i+1}}^{\text{1st-pred.}}= A \vx_{t_i}
+
B\Big[\vx_\theta(u_i)\!\big(\Delta L^{-1}(u_i)+K(\Delta u_i;\kappa_u)\big)
+
\bm\epsilon_\theta(v_i)\!\big(\Delta L^{-1}(v_i)+K(\Delta v_i;\kappa_v)\big)
\Big].
\end{aligned}
\end{equation}
\begin{equation}
\label{eq:so-corr}
\begin{aligned}
\vx_{t_{i+1}}^{\text{2nd-corr.}}=\;A\vx_{t_i}
+ B\Big[
&\vx_\theta(u_i)\,\Delta L^{-1}(u_i)+\frac{\Delta\vx_\theta(u_i)}{2}\,\big(\Delta L^{-1}(u_i)+K(\Delta u_i;\kappa_u)\big)+\\
&\bm{\epsilon}_\theta(v_i)\,\Delta L^{-1}(v_i)+
\frac{\Delta\bm{\epsilon}_\theta(v_i)}{2}\,\big(\Delta L^{-1}(v_i)+K(\Delta v_i;\kappa_v)\big)\Big].
\end{aligned}
\end{equation}
Here, $\Delta L^{-1}(u_i)\coloneqq L^{-1}(u_{i+1})-L^{-1}(u_i)$ and $A$ and $B$ are as defined in Eq.~\ref{eq:dual_prediction_integral}.
When evaluating $L^{-1}(u_i)$, with $\alpha_{t_i}$, $\sigma_{t_i}$, and $\gamma$ already known, it can be obtained without writing $L^{-1}$ explicitly.
For example, for $\gamma \ge 0$, we have $L^{-1}(u_i) = \alpha_{t_i}\sigma_{t_i}^{-\gamma}$ (see Eq.~\ref{eq:def_L}).

Detailed derivations of the predictor and corrector are provided in Appendix~\ref{app:derivation_sampling_equations}, and the corresponding local-accuracy theorem and its proof are given in Appendix~\ref{app:local_truncation_error}.

%% file: implementation/main.tex
\section{Implementation Details}
\vspace{-4pt}
Dual-Solver performs sampling using a predictor--corrector scheme~\citep{butcher2016numerical} based on the equations developed in the previous section.
We detail the sampling scheme in Sec.~\ref{sec:sampling_scheme} and then present all learnable parameters in Sec.~\ref{sec:learning_parameters}.

\input{implementation/sampling}
\input{implementation/parameters}
\begin{figure}[t]
\centering
\begin{minipage}[t]{0.49\textwidth}
  \input{implementation/sampling_algorithm}
\end{minipage}\hfill
\begin{minipage}[t]{0.49\textwidth}
  \input{learning/learning_algorithm}
\end{minipage}
\vspace{-6pt}
\end{figure}

%% file: implementation/sampling.tex
\subsection{Sampling Scheme}
\label{sec:sampling_scheme}
\vspace{-4pt}
Alg.~\ref{alg:predictor-corrector} describes the sampling procedure of Dual-Solver.
Sampling requires a backbone that provides both $\vx_{\theta}$ and $\bm{\epsilon}_{\theta}$; when only one prediction is available, the other can be obtained via Eq.~\ref{eq:relation_between_predictions}.
Given $M$ steps, we use timesteps $\{t_i\}_{i=0}^{M}$ with $t_0=T$ and draw the initial noise $\vx_{t_0}\sim\mathcal{N}(0,I)$.
We also use a list $\ell$ to store previous evaluations.
Empirically, a first-order predictor with a second-order corrector performs best, as shown in Sec.~\ref{sec:ablation_study}.
At step $i$, the first-order predictor takes the current state $\vx_{t_i}$ and the model evaluations $\{\vx_{\theta}(\vx_{t_i},t_i),\,\bm{\epsilon}_{\theta}(\vx_{t_i},t_i)\}$ to produce a provisional sample $\vx_{t_{i+1}}'$.
The second-order corrector then combines the evaluations at $t_i$ with fresh evaluations at $t_{i+1}$, i.e., $\{\vx_{\theta}(\vx_{t_{i+1}}',t_{i+1}),\,\bm{\epsilon}_{\theta}(\vx_{t_{i+1}}',t_{i+1})\}$, to yield the corrected sample $\vx_{t_{i+1}}$.
At the final step $i = M-1$, the corrector is not applied.

%% file: implementation/parameters.tex
\subsection{Learnable Parameters}
\label{sec:learning_parameters}
\vspace{-4pt}
For each $i$-th step, the parameter sets are
$\phi_i^{\mathrm{pred}}=\{\gamma_i^{\mathrm{pred}},\tau_{u,i}^{\mathrm{pred}},\tau_{v,i}^{\mathrm{pred}},\kappa_{u,i}^{\mathrm{pred}},\kappa_{v,i}^{\mathrm{pred}}\}$
and
$\phi_i^{\mathrm{corr}}=\{\gamma_i^{\mathrm{corr}},\tau_{u,i}^{\mathrm{corr}},\tau_{v,i}^{\mathrm{corr}},\kappa_{u,i}^{\mathrm{corr}},\kappa_{v,i}^{\mathrm{corr}}\}$.
Thus, each step uses $2 \times 5 = 10$ parameters, except for the last step ($i = M - 1$), which does not use the corrector and therefore has only 5 parameters.
Fig.~\ref{fig:learned_params} shows the learned parameters for the NFE=5 setting using a DiT~\citep{peebles2023scalable} backbone.
Assuming a noise-prediction backbone (the same reasoning applies to data- and velocity-prediction backbones) and a first-order predictor with a second-order corrector (Sec.~\ref{sec:sampling_scheme}),
$\phi_i^{\mathrm{pred}}$ and $\phi_i^{\mathrm{corr}}$ determine the coefficients for an update that combines the current state $\vx_{t_i}$ and two model evaluations,
$\bm\epsilon_\theta(\vx'_{t_i},t_i)$ and $\bm\epsilon_\theta(\vx'_{t_{i+1}},t_{i+1})$,
to produce $\vx_{t_{i+1}}$.
This may seem heavy, but Sec.~\ref{sec:parameters_setting} shows it is necessary.

In addition to these parameters, we also learn the evaluation times $\{t_i\}_{i=1}^{M-1}$ (with $t_0$ and $t_M$ fixed), where $M$ denotes the number of steps.
Following prior work \citep{tong2025ld3,wang2025differentiable},
we employ unnormalized step variables $\{\Delta t'_i\}_{i=0}^{M-1}$ and apply a softmax over $i=0,\dots,M-1$ to obtain normalized step sizes $\{\Delta t_i\}_{i=0}^{M-1}$ (nonnegative and summing to one).
The timesteps are obtained via a cumulative sum: $t_i = t_0 + (t_M - t_0)\sum_{k=0}^{i-1}\Delta t_k,\, i=1,\dots,M-1.$

%% file: implementation/sampling_algorithm.tex
\newcommand{\StateLong}[1]{%
  \State \parbox[t]{\dimexpr\linewidth-\algorithmicindent\relax}{#1}%
}
\newcommand{\StatexLong}[1]{%
  \Statex\hspace{\algorithmicindent}%
  \parbox[t]{\dimexpr\linewidth-\algorithmicindent\relax}{#1}%
}
\begin{algorithm}[H]
  \caption{Dual-Solver predictor–corrector sampling (Sec.~\ref{sec:sampling_scheme})}
  \label{alg:predictor-corrector}
  \begin{algorithmic}[1]
    \Require Diffusion backbone with dual prediction $\{\vx_{\theta},\bm{\epsilon}_{\theta}\}$, timesteps $\{t_i\}_{i=0}^{M}$, initial noise $\vx_{t_0}$, empty list $\ell$, parameters $\phi$
    \StateLong{Evaluate $\{\vx_{\theta}(\vx_{t_0}, t_0), \bm{\epsilon}_{\theta}(\vx_{t_0}, t_0)\}$ and add to $\ell$}
    \For{$i = 0$ \textbf{to} $M-1$}
      \StateLong{${\vx}'_{t_{i+1}} \gets \mathrm{Predictor}({\vx}_{t_i}, \ell;\phi_i^{\mathrm{pred}})$}
      \State \textbf{if} $i == M-1$ \textbf{ then break}
      \StateLong{Evaluate $\{\vx_{\theta}({\vx}'_{t_{i+1}}, t_{i+1}),\;\\ \bm{\epsilon}_{\theta}({\vx}'_{t_{i+1}}, t_{i+1})\}$ and add to $\ell$}
      \StateLong{${\vx}_{t_{i+1}} \gets \mathrm{Corrector}({\vx}_{t_i}, \ell;\phi_i^{\mathrm{corr}})$}
    \EndFor
    \State \Return ${\vx}'_{t_M}$
  \end{algorithmic}
\end{algorithm}

%% file: learning/learning_algorithm.tex
\begin{algorithm}[H]
\color{sky}
\caption{Hard-label classification for parameter learning (Sec.~\ref{sec:classifier_based_parameter_learning})}
\label{alg:dual_solver_training_short}
\begin{algorithmic}[1]
\Require Diffusion backbone with parameters $\theta$, VAE decoder $\mathcal{D}$, pretrained classifier $\mathcal{C}$, solver $\mathcal{S}$ with parameters $\phi$, label dataset $\mathcal{Y}$, learning rate $\eta$
\While{not converged}
    \State Sample $\vx_T \sim \mathcal{N}(0, I)$ \Comment{initial noise}
    \vspace{0.7pt}
    \State Sample $y \sim \mathcal{Y}$ \Comment{class label}
    \vspace{0.7pt}
    \State $\vx_0 \leftarrow \mathcal{S}(\vx_T\,; y,\phi,\theta)$ \Comment{sampling}
    \vspace{0.7pt}
    \State $\hat{\vx}_0 \leftarrow \mathcal{D}(\vx_0)$ \Comment{decoding}
    \vspace{0.7pt}
    \State $p \leftarrow \mathcal{C}(\hat{\vx}_0)$ \Comment{class probabilities}
    \vspace{0.7pt}
    \State $\mathcal{L} \leftarrow \mathrm{CrossEntropy}(p, y)$ \Comment{loss}
    \vspace{0.7pt}
    \State $\phi \leftarrow \phi - \eta \nabla_\phi \mathcal{L}$ \Comment{parameter update}
    \vspace{0.7pt}
\EndWhile
\end{algorithmic}
\end{algorithm}

%% file: learning/main.tex
\section{Solver Parameter Learning}
\label{sec:learning}
\vspace{-2pt}

In this section, we review existing regression-based parameter learning methods, identify their limitations, and introduce a classification-based approach.
Fig.~\ref{fig:classifier_based_learning} provides a schematic overview of all these methods.
We apply them to the proposed Dual-Solver and report FID results for each method in Sec.~\ref{sec:parameter_learning_methods}.

\begin{figure}[b]
  \centering
    \includegraphics[width=\linewidth]{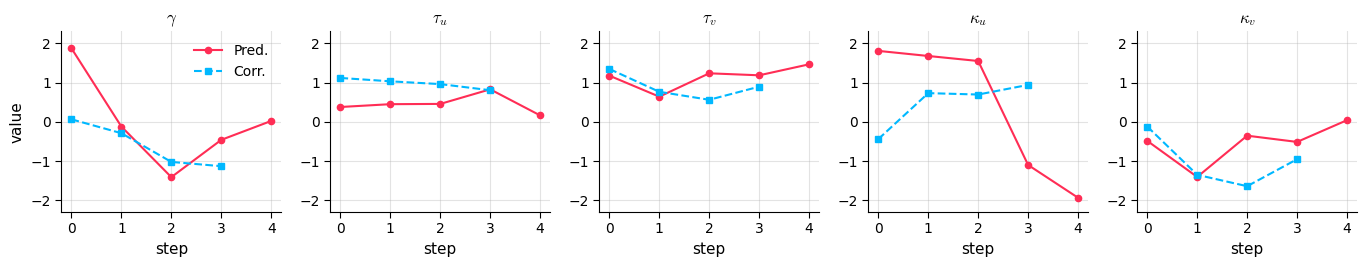} 
    \caption{\textbf{Learned parameters.} Values of \(\{\gamma,\tau_u,\tau_v,\kappa_u,\kappa_v\}\) across sampling steps, learned on DiT~\citep{peebles2023scalable} at NFE = 5. See Figs.~\ref{fig:learned_dit}, \ref{fig:learned_gmdit}, \ref{fig:learned_sana}, and \ref{fig:learned_pixart} for further results.}
  \label{fig:learned_params}
\end{figure}

\input{learning/regression}
\input{learning/classifier_figure3}
\input{learning/classifier}

%% file: learning/regression.tex
\subsection{Regression-based Parameter Learning}
\label{sec:Regression-based Parameter Learning}
\vspace{-2pt}

In regression-based learning, a solver with trainable parameters is referred to as a student solver, while an existing fixed solver is referred to as a teacher solver.
The student is then trained to imitate the behavior of the teacher running at a high NFE.
Most prior works adopt \emph{trajectory regression}~\citep{shaul2023bespoke,amed_solver,wang2025differentiable}, which compares the trajectories generated by the teacher and the student, or \emph{sample regression}~\citep{shaul2024bespoke}, which compares only the final samples.
Since comparisons in the trajectory or sample space often show a mismatch with visual perceptual quality, \emph{feature regression} has been proposed~\citep{tong2025ld3}, in which the discrepancy is measured in a feature space using metrics such as LPIPS~\citep{zhang2018unreasonable}.
However, all of these methods require a teacher solver and incur substantial overhead in preparing supervision targets, and they tend to perform poorly in the very low NFE regime (e.g., NFE $\leq 5$; see Table~\ref{tab:learning_strategies}).

%% file: learning/classifier_figure3.tex
\definecolor{Csampling}{HTML}{FF2D55} 
\definecolor{Cmodule}{HTML}{00B8FF}   
\definecolor{Closs}{HTML}{000000}     

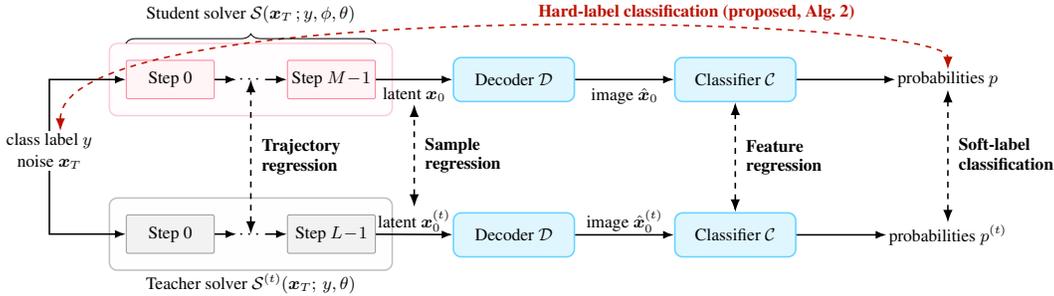
\begin{figure}[h]
\resizebox{\textwidth}{!}{%
\begin{tikzpicture}[
  font=\footnotesize,
  block/.style={draw=Csampling!60, rounded corners, thick, align=center,
                inner sep=4pt, minimum width=22mm, minimum height=8mm, fill=Csampling!8},
  teacherblock/.style={draw=black!40, rounded corners, thick, align=center,
                       inner sep=4pt, minimum width=22mm, minimum height=8mm, fill=black!8},
  samplestep/.style={draw=Csampling!60, rectangle, rounded corners=1pt, align=center,
                     inner sep=1.5pt, minimum width=16mm, minimum height=7mm, fill=Csampling!5},
  teachersamplestep/.style={draw=black!40, rectangle, rounded corners=1pt, align=center,
                            inner sep=1.5pt, minimum width=16mm, minimum height=7mm, fill=black!5},
  frozen/.style={block, draw=Cmodule!60, fill=Cmodule!12},
  arrow/.style={-{Latex[length=2mm]}, thick},
  cls/.style={Latex-Latex, thick, dashed, draw=fire},
  reg/.style={Latex-Latex, thick, dashed},
  lab/.style={align=center, inner sep=1pt}
]

\node[lab] (inputs) at (0,-1.0) {class label $y$\\[1pt] noise $\vx_T$};

\node[samplestep] (s1) at (2.2,0.3) {Step 0};
\node[lab]        (dots) [right=4mm of s1] {$\cdots$};
\node[samplestep] (sn)  [right=4mm of dots] {Step $M\!-\!1$};

\begin{scope}[on background layer]
\node[
  block,
  fit=(s1) (sn),
  inner sep=3mm,
  fill=Csampling!18,
  draw=Csampling!40,
  fill opacity=0.05,
  draw opacity=0.6
] (sampler) {};
\end{scope}

\node[frozen, right=14mm of sn] (dec) {Decoder $\mathcal{D}$};
\node[frozen, right=18mm of dec] (clf) {Classifier $\mathcal{C}$};
\node[lab, right=18mm of clf] (ps) {probabilities $p$};

\draw[arrow] (inputs) |- (s1.west);
\draw[arrow] (s1) -- (dots);
\draw[arrow] (dots) -- (sn);

\draw[arrow] (sn) -- node[below, lab, yshift=-1mm] (x0label) {latent $\vx_0$} (dec);

\draw[arrow] (dec) -- node[below, lab, yshift=-1mm]{image $\hat{\vx}_0$} (clf);
\draw[arrow] (clf) -- (ps);

\draw[cls]
  ($(inputs.north)+(2mm,0)$)
    .. controls ($($(inputs.north)+(2mm,0)$)+(0,25mm)$) and ($(ps.north)+(0mm,10mm)$) ..
  node[pos=0.5, anchor=south, xshift=35mm, yshift=1mm, lab]{\textcolor{fire}{\textbf{Hard-label classification (proposed, Alg.~\ref{alg:dual_solver_training_short})}}}
  (ps.north);

\draw[decorate,decoration={brace, amplitude=5pt}]
      ($(s1.north west)+(0,0.30)$) -- node[above=7pt]{Student solver $\mathcal S(\vx_T\,; y,\phi,\theta)$} ($(sn.north east)+(0,0.30)$);

\node[teachersamplestep, below=21mm of s1] (s1T) {Step 0};
\node[lab]                  (dotsT) [right=4mm of s1T] {$\cdots$};
\node[teachersamplestep]    (snT)  [right=4mm of dotsT] {Step $L\!-\!1$};

\begin{scope}[on background layer]
\node[
  teacherblock,
  fit=(s1T) (snT),
  inner sep=3mm,
  fill=black!8,
  draw=black!40,
  fill opacity=0.05,
  draw opacity=0.6,
  label={[lab]south:Teacher solver $\mathcal S^{(t)}(\vx_{T};\,y,\theta)$}
] (samplerT) {};
\end{scope}

\node[frozen, below=20mm of dec] (decT) {Decoder $\mathcal{D}$};
\node[frozen, below=20mm of clf] (clfT) {Classifier $\mathcal{C}$};

\node[lab] (pt) at (ps |- clfT) {probabilities $p^{(t)}$};

\draw[arrow] (inputs) |- (s1T.west);
\draw[arrow] (s1T) -- (dotsT);
\draw[arrow] (dotsT) -- (snT);

\draw[arrow] (snT) -- node[above, lab] (x0tlabel) {latent $\vx_0^{(t)}$} (decT);

\draw[arrow] (decT) -- node[above, lab]{image $\hat{\vx}_0^{(t)}$} (clfT);
\draw[arrow] (clfT) -- (pt);


\coordinate (x0s)    at (x0label.south);
\coordinate (x0t)    at (x0tlabel.north);
\coordinate (traj_s) at (sampler.center);
\coordinate (traj_t) at (samplerT.center);

\draw[reg] (x0t) -- (x0s)
      node[pos=0.5, anchor=west, xshift=1.5mm, lab]
  {\begin{tabular}{@{}l@{}}
    \textbf{Sample}\\
    \textbf{regression}
   \end{tabular}};

\draw[reg] (traj_t) -- (traj_s)
      node[pos=0.5, anchor=west, xshift=1.5mm, lab]
  {\begin{tabular}{@{}l@{}}
    \textbf{Trajectory}\\
    \textbf{regression}
   \end{tabular}};

\draw[reg] (clfT.north) -- (clf.south)
      node[pos=0.5, anchor=west, xshift=1.5mm, lab]
  {\begin{tabular}{@{}l@{}}
    \textbf{Feature}\\
    \textbf{regression}
   \end{tabular}};

\draw[reg] (pt) -- (ps)
      node[pos=0.5, anchor=west, xshift=1.5mm, lab]
  {\begin{tabular}{@{}l@{}}
    \textbf{Soft-label}\\
    \textbf{classification}
   \end{tabular}};

\end{tikzpicture}%
}
\caption{
{
\color{sky}
\textbf{Solver parameter learning methods.}
It schematically illustrates trajectory, sample, and feature regression, as well as soft- and hard-label classification methods.
}
}
\label{fig:classifier_based_learning}
\vspace{-8pt}
\end{figure}

%% file: learning/classifier.tex
\subsection{Classification-based Parameter Learning}
\label{sec:classifier_based_parameter_learning}
\vspace{-4pt}
Beyond feature regression, we consider \emph{soft-label classification}, where we apply a cross-entropy loss between the classifier outputs (probabilities) of the student and the teacher.
This approach yields improved results in the low-NFE regime (Table~\ref{tab:learning_strategies}), but it still requires a teacher solver to generate the target probabilities. 

To remove this dependency, we further propose \emph{hard-label classification}, which compares the student's classifier outputs (probabilities) against the input class label using a cross-entropy loss, without any teacher solver.
The overall training procedure is described in Alg.~\ref{alg:dual_solver_training_short}.
The sample $\vx_0$ generated by the solver $\mathcal{S}$ is passed through the decoder $\mathcal{D}$ and the classifier $\mathcal{C}$ to obtain class probabilities $p$. We then compute a cross-entropy loss between $p$ and the class label $y \sim \mathcal{Y}$, and update the solver parameters $\phi$ for all time steps via backpropagation.
For text-to-image tasks, the class labels are replaced with text prompts, which can be obtained from datasets such as MSCOCO 2014~\citep{lin2014microsoft} or MJHQ-30K~\citep{li2024playground}, and the cross-entropy loss is replaced with CLIP loss~\citep{radford2021learning}.

Unlike regression to teacher targets, this method focuses on whether the generated samples lie on the correct side of the classifier's decision boundary, enabling less restrictive training.
A potential issue is a mismatch between the distribution learned by the classifier and the ground-truth data distribution, but this can be mitigated by selecting an appropriate classifier.
In Sec.~\ref{sec:classifier_model_selection}, we examine the relationship between the classifier's accuracy and the resulting FID scores.

%% file: experiment/main.tex
\vspace{-4pt}
\section{Experiments}
\label{sec:experiments}
\vspace{-6pt}
We benchmark Dual-Solver against two families of baselines:
\vspace{-4pt}
\begin{itemize}
  \item Dedicated solvers: DDIM~\citep{song2020denoising}, DPM-Solver++~\citep{lu2022dpmpp}.
  \item Learned solvers: BNS-Solver~\citep{shaul2024bespoke}, DS-Solver~\citep{wang2025differentiable}.
\end{itemize}
\vspace{-5pt}

We select backbones that span diffusion and flow matching, covering both ImageNet~\citep{deng2009imagenet} conditional generation and text-to-image tasks:
\vspace{-5pt}
\begin{itemize}
  \item DiT-XL/2-256$\times$256~\citep{peebles2023scalable}: diffusion, ImageNet.
  \item PixArt-$\alpha$ XL-2-512~\citep{chen2023pixart}: diffusion, text-to-image.
  \item GM-DiT 256$\times$256~\citep{chen2025gaussian}: flow matching, ImageNet.
  \item SANA 600M-512px~\citep{xie2024sana}: flow matching, text-to-image.
\end{itemize}
\vspace{-5pt}
Solver implementations are taken from official sources or reimplemented by us. The backbones can be run via the diffusers library~\citep{von-platen-etal-2022-diffusers}. Further details are provided in Appendix~\ref{app:experiment_details}.

\input{experiment/FID_and_CLIP}
\input{experiment/quantitative}
\input{experiment/ablations2/main}

\input{experiment/interpolation/main}
\input{experiment/results/pixart_results1}
\input{experiment/results/gmdit_results}

%% file: experiment/FID_and_CLIP.tex

\begin{figure*}[t]
  \centering
  \captionsetup[subfigure]{}
  \renewcommand{\thesubfigure}{\alph{subfigure}}

  \begin{subfigure}[t]{0.31\textwidth}
    \centering
    \includegraphics[width=\linewidth]{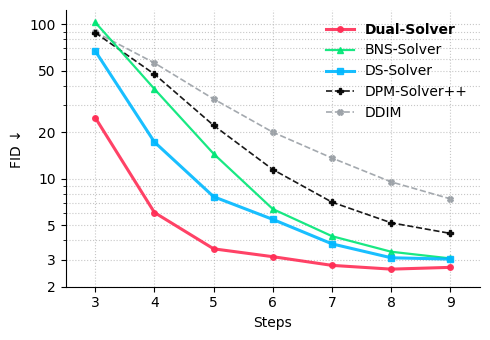}
    \caption{DiT, FID ↓ (Table~\ref{tab:dit_fid}).}
    \label{fig:sub-dit-fid}
  \end{subfigure}\hfill
  \begin{subfigure}[t]{0.31\textwidth}
    \centering
    \includegraphics[width=\linewidth]{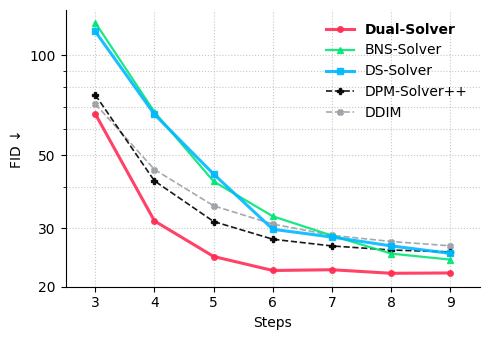}
    \caption{PixArt-$\alpha$, FID ↓ (Table~\ref{tab:pixart_fid}).}
    \label{fig:sub-pixartalpha-fid}
  \end{subfigure}\hfill
  \begin{subfigure}[t]{0.31\textwidth}
    \centering
    \includegraphics[width=\linewidth]{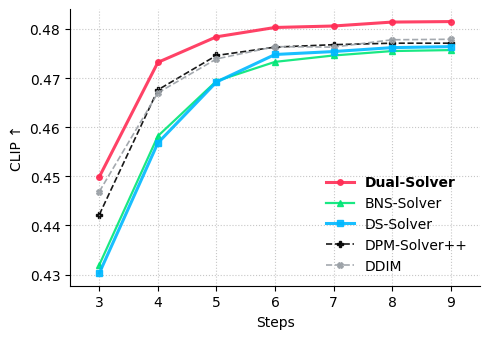}
    \caption{PixArt-$\alpha$, CLIP ↑ (Table~\ref{tab:pixart_fid}).}
    \label{fig:sub-pixartalpha-clip}
  \end{subfigure}

  \begin{subfigure}[t]{0.31\textwidth}
    \centering
    \includegraphics[width=\linewidth]{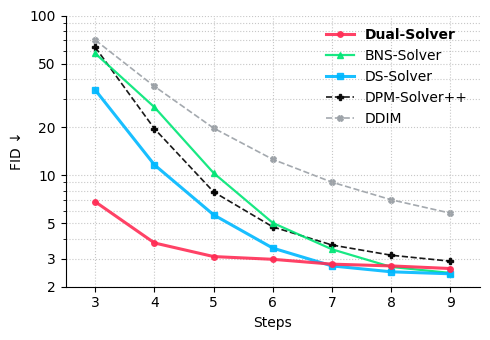}
    \caption{GM-DiT, FID ↓ (Table~\ref{tab:gmdit_fid}).}
    \label{fig:sub-gmdit-fid}
  \end{subfigure}\hfill
  \begin{subfigure}[t]{0.31\textwidth}
    \centering
    \includegraphics[width=\linewidth]{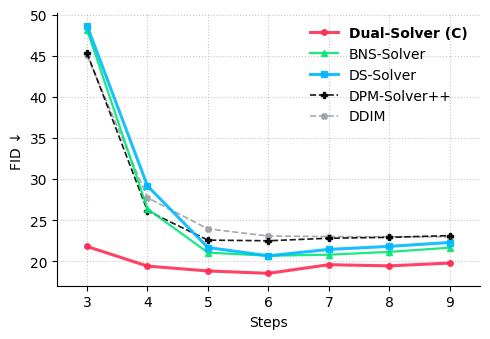}
    \caption{SANA, FID ↓ (Table~\ref{tab:sana_fid}).}
    \label{fig:sub-sana-fid}
  \end{subfigure}\hfill
  \begin{subfigure}[t]{0.31\textwidth}
    \centering
    \includegraphics[width=\linewidth]{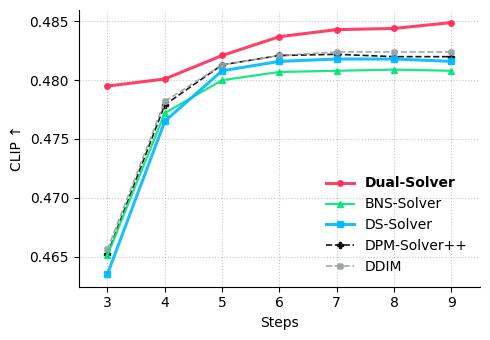}
    \caption{SANA, CLIP ↑ (Table~\ref{tab:sana_fid}).}
    \label{fig:sub-sana-clip}
  \end{subfigure}

  \vspace{-2pt}
  \caption{\textbf{Main quantitative results.} FID and CLIP score; evaluated on 50k (DiT/GM\mbox{-}DiT) and 30k (SANA/PixArt-$\alpha$) samples; CFG: DiT=1.5, GM\mbox{-}DiT=1.4, SANA=4.5, PixArt-$\alpha$=3.5.}
  \label{fig:main_FID}
\vspace{-12pt}
\end{figure*}

%% file: experiment/quantitative.tex
\vspace{-4pt}
\subsection{Main Quantitative Results}
\label{sec:main_quantitative_results}
\vspace{-6pt}
We evaluate quantitative performance using FID~\citep{heusel2017gans} and CLIP score~\citep{radford2021learning}.
For DiT and GM-DiT, FID is computed on 50k images uniformly sampled across the 1,000 ImageNet~\citep{deng2009imagenet} classes.
For SANA and PixArt-$\alpha$, FID and CLIP are computed on the MSCOCO 2014~\citep{lin2014microsoft} validation set (30k image--caption pairs).
The CLIP score is computed as the cosine similarity between text and image features.
As described in Sec.~\ref{sec:learning}, we train Dual-Solver with a classification-based objective.
For DiT and GM-DiT, MobileNetV3-Large~\citep{howard2019searching} is used, and for SANA and PixArt-$\alpha$, CLIP(RN101) is used.
Fig.~\ref{fig:main_FID} shows the measured FID and CLIP scores.
Across all evaluated NFEs, Dual-Solver outperforms competing solvers on both FID and CLIP for DiT, SANA, and PixArt-$\alpha$.
For GM-DiT, Dual-Solver underperforms at NFE 7–9; however, when trained with a trajectory regression-based objective, it surpasses the baselines at NFE = 8 and 9 (Table~\ref{tab:gmdit_fid}).

%% file: experiment/ablations2/main.tex
\subsection{Ablation Study}
\label{sec:ablation_study}
\input{experiment/ablations2/pred_corr}
\input{experiment/ablations2/parameter}
\input{experiment/ablations2/domain}
\input{experiment/ablations2/accuracy}
\input{experiment/ablations2/learning}

%% file: experiment/ablations2/pred_corr.tex
\vspace{-2pt}
\subsubsection{Predictor-corrector configurations}
\vspace{-0pt}
\begin{wraptable}{l}{0.38\columnwidth}
\centering
\vspace{-15pt}
\caption{Ablation of predictor--corrector configurations on DiT.}
\label{tab:ablation_study1}
\vspace{-2pt}
\resizebox{\linewidth}{!}{
\begin{tabular}{lcccc}
\toprule
& \multicolumn{4}{c}{\textbf{Cross-entropy loss (↓)}} \\
\cmidrule(lr){2-5}
\textbf{NFE} & \textbf{3} & \textbf{5} & \textbf{7} & \textbf{9} \\
\midrule
\texttt{p1}      & 0.667 & 0.225 & 0.183 & 0.175 \\
\texttt{p1c2}    & \textbf{0.574} & \textbf{0.197} & \textbf{0.178} & \textbf{0.173} \\
\texttt{p2}      & 1.023 & 0.253 & 0.222 & 0.181 \\
\texttt{p2c2}    & 5.009 & 0.317 & 0.203 & 0.191 \\
\bottomrule
\end{tabular}
}
\vspace{-5pt}
\vspace{-12pt}
\end{wraptable}

We ablate the predictor--corrector configurations: \texttt{p1} (first-order predictor only),
\texttt{p1c2} (first-order predictor + second-order corrector),
\texttt{p2} (second-order predictor only), and
\texttt{p2c2} (second-order predictor + second-order corrector).
The equations for the first-order predictor and second-order corrector are given by
Eqs.~\ref{eq:fo-pred} and \ref{eq:so-corr}, respectively, and the equation for the second-order predictor
is provided in Eq.~\ref{eq:so-pred}.
As shown in Table~\ref{tab:ablation_study1}, \texttt{p1c2} yields the best performance across
NFE = 3, 5, 7, and 9.
\\
\\

%% file: experiment/ablations2/parameter.tex
\vspace{-27pt}
\subsubsection{Parameter settings for $(\gamma,\tau,\kappa)$}
\label{sec:parameters_setting}
\vspace{-5pt}
\begin{wraptable}{r}{0.43\columnwidth}
\centering
\vspace{-13pt}
\caption{Ablation of the $(\gamma,\tau,\kappa)$ parameter settings on DiT.}
\label{tab:ablation_study2}
\vspace{-4pt}
\resizebox{\linewidth}{!}{
\begin{tabular}{lcccc}
\toprule
& \multicolumn{4}{c}{\textbf{Cross-entropy loss (↓)}} \\
\cmidrule(lr){2-5}
\textbf{NFE} & \textbf{3} & \textbf{5} & \textbf{7} & \textbf{9} \\
\midrule
$\gamma=1$ fixed         & 0.816 & 0.223 & 0.182 & 0.176 \\
$\gamma=0$ fixed         & 0.600 & 0.202 & 0.183 & 0.180 \\
$\gamma=-1$ fixed         & 7.871 & 7.676 & 0.238 & 0.196 \\
$\tau=1$ fixed        & 0.601 & 0.217 & \textbf{0.175} & 0.178 \\
$\kappa=0$ fixed      & 0.944 & 0.256 & 0.202 & 0.190 \\
$\tau,\kappa$ shared  & 0.667 & 0.221 & 0.177 & \textbf{0.169} \\
all learnable         & \textbf{0.574} & \textbf{0.197} & 0.178 & 0.173 \\
\bottomrule
\end{tabular}
}
\vspace{-4pt}
\end{wraptable}

We ablate the parameterization by fixing selected values or sharing parameters to reduce the degrees of freedom.
Setting $\gamma$ to $1$, $0$, or $-1$ recovers data, velocity, and noise prediction, respectively; setting $\tau=1$ yields the log1p transform; and setting $\kappa=0$ removes the second-order residual term.
We also tie $\tau_u=\tau_v$ and $\kappa_u=\kappa_v$ to share parameters across the integrations for $\vx_\theta$ and $\bm\epsilon_\theta$.
As shown in Table~\ref{tab:ablation_study2}, leaving all parameters free yields the best performance at NFE = 3 and 5.
Configurations that fix or share certain parameters occasionally perform slightly better at NFE = 7 and 9.

%% file: experiment/ablations2/domain.tex
\vspace{-17pt}
\subsubsection{Domain-Change Function Choice}
\vspace{-5pt}
\label{sec:domain_change_ablation}
\vspace{-0pt}
\begin{wraptable}{l}{0.36\linewidth}
\centering
\vspace{-8pt}
\caption{Ablation of Domain Change Functions on DiT.}
\label{tab:domain_change_ablation}
\vspace{-2pt}
\resizebox{\linewidth}{!}{
\begin{tabular}{lcccc}
\toprule
& \multicolumn{4}{c}{\textbf{FID (↓)}} \\
\cmidrule(lr){2-5}
\textbf{NFE} & \textbf{3} & \textbf{5} & \textbf{7} & \textbf{9} \\
\midrule
Type 1   & \textbf{23.38} & 4.09 & 3.56 & 3.64 \\
Type 2   & 24.91 & \textbf{3.52} & \textbf{2.75} & \textbf{2.67} \\
\bottomrule
\end{tabular}
}
\end{wraptable}

As candidates for the domain-change function, we consider Type 1, which linearly interpolates between the linear and logarithmic transforms: $L(y;\tau)=(1-\tau)y+\tau\log y$, and Type 2, defined in Eq.~\ref{eq:log-linear}.
In Table~\ref{tab:domain_change_ablation}, Type 2 achieves noticeably better results at NFE = 5, 7, and 9.
According to Eq.~\ref{eq:def_L}, the argument $y$ of the function $L$ can become very close to zero; in this regime, the log function diverges, whereas log1p is numerically stable near zero, and we attribute the improved performance to this stability.
\\

%% file: experiment/ablations2/accuracy.tex
\vspace{-10pt}
\subsubsection{Classification model selection}
\vspace{-10pt}
\label{sec:classifier_model_selection}

\begin{wrapfigure}{l}{0.55\columnwidth}
\centering
\vspace{-10pt}
\includegraphics[width=\linewidth]{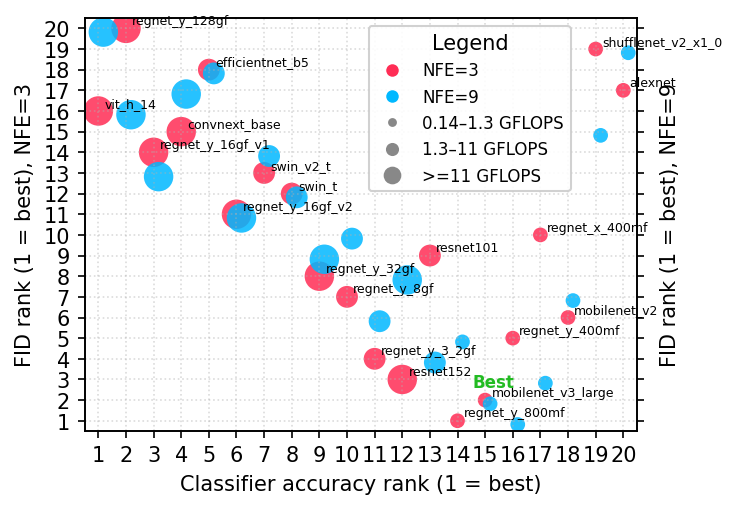}
\vspace{-15pt}
\caption{Classifier Accuracy vs. FID on GM-DiT.}
\label{fig:accuracy}
\vspace{-6pt}
\end{wrapfigure}

A natural question for classification-based learning is which classifier to use.
To answer this, we evaluate 20 pretrained classifiers from TorchVision~\citep{torchvision2016}.
Fig.~\ref{fig:accuracy} shows FID (on 10k samples) versus accuracy for each model, ordered by rank.
The curve exhibits a clear V-shape: as accuracy decreases, FID improves, but beyond ranks 14--16 the FID degrades sharply.
This suggests that neither very high nor very low classifier accuracy is optimal; a moderate level is most beneficial for FID.
A detailed analysis linking this pattern to precision and recall~\citep{kynkaanniemi2019improved} is provided in Appendix~\ref{app:accuracy_vs_fid}.
\\

%% file: experiment/ablations2/learning.tex
\vspace{-12pt}
\subsubsection{Parameter learning methods}
\label{sec:parameter_learning_methods}
\vspace{-5pt}
\begin{wraptable}{r}{0.48\columnwidth}
\centering
\vspace{-6pt}
\caption{Comparison of parameter learning methods on DiT.}
\label{tab:learning_strategies}
\resizebox{\linewidth}{!}{
\begin{tabular}{lcccc}
\toprule
& \multicolumn{4}{c}{\textbf{FID (↓)}} \\
\cmidrule(lr){2-5}
\textbf{NFE} & \textbf{3} & \textbf{5} & \textbf{7} & \textbf{9} \\
\midrule
\multicolumn{5}{l}{\textbf{Regression-based learning} (Sec.~\ref{sec:Regression-based Parameter Learning})}\\
\midrule
Sample & 107.13 & 11.71 & 4.60 & 2.99 \\
Trajectory & 100.89 & 11.59 & 3.66 & 2.84 \\
Feature (AlexNet) & 47.75 & 7.24 & 3.42 & 2.91 \\
Feature (VGG)     & 41.58 & 5.48 & 3.23 & 2.88 \\
Feature (SqueezeNet) & 44.00 & 7.07 & 3.31 & 2.79 \\
\midrule
\multicolumn{5}{l}{\textbf{Classification-based learning} (Sec.~\ref{sec:classifier_based_parameter_learning})}\\
\midrule
Soft-label & 25.13 & 4.90 & 3.37 & 3.01 \\
Hard-label & \textbf{24.91} & \textbf{3.52} & \textbf{2.75} & \textbf{2.67} \\
\bottomrule
\end{tabular}
}
\vspace{-5pt}
\end{wraptable}

Table~\ref{tab:learning_strategies} reports the results of applying the regression- and classification-based parameter learning methods discussed in Sec.~\ref{sec:learning} to Dual-Solver.
Feature regression is implemented using LPIPS~\citep{zhang2018unreasonable}, and it generally outperforms trajectory- and sample-space regression.
The improvement is particularly pronounced at NFE = 3 and 5. Depending on which classifier
(AlexNet~\citep{krizhevsky2012imagenet_alexnet}, VGG~\citep{simonyan2014very_vgg}, or SqueezeNet~\citep{iandola2016squeezenet})
is used to extract features, the results vary significantly, underscoring the importance of classifier choice in feature space.
Classification-based methods further improve performance over feature regression at NFE = 3 and 5.
In particular, the method that uses hard labels achieves the best results across all NFEs. We attribute this to the choice of an appropriate classifier, as discussed in Sec.~\ref{sec:classifier_model_selection}.

%% file: experiment/interpolation/main.tex
\vspace{-6pt}
\subsection{Parameter Interpolation across NFEs}
\label{sec:interpolation}
\vspace{-5pt}

\begin{wrapfigure}{l}{0.45\columnwidth}
\centering
\vspace{-14pt}
\includegraphics[width=\linewidth]{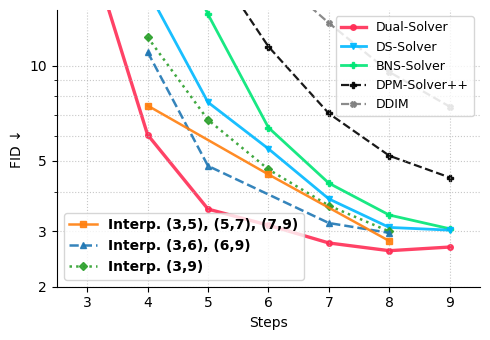}
\caption{FID results for parameter interpolation across NFEs with DiT.}
\label{fig:parameter_interpolation}
\vspace{-12pt}
\end{wrapfigure}

From the learned parameters of Dual-Solver (Appendix~\ref{sec:learned_parameters}), we observe that their overall shapes remain similar across different NFEs.
Motivated by this, we test the robustness of the learned parameters by applying them to different NFEs.
To obtain the parameters for an unseen NFE, we interpolate between the parameters of its two neighboring NFEs.
The detailed formulas are provided in Appendix~\ref{app:parameter_interpolation}.
Fig.~\ref{fig:parameter_interpolation} shows the results.
\textit{Interp. (3,5), (5,7), (7,9)} denotes that we interpolate between the parameters learned at NFEs (3,5), (5,7), and (7,9) to obtain the parameters for the intermediate NFEs 4, 6, and 8, respectively.
The other interpolation schemes are defined analogously.
Although these interpolated parameters do not match the performance of parameters directly optimized for each NFE, the gaps are modest, and the resulting FID scores still outperform those of other solvers.

%% file: experiment/results/pixart_results1.tex


\begin{figure*}[t]
  \centering


  \captionsetup[subfigure]{justification=centering, singlelinecheck=false}
  \begin{subfigure}[t]{0.24\textwidth}
    \includegraphics[width=\linewidth]{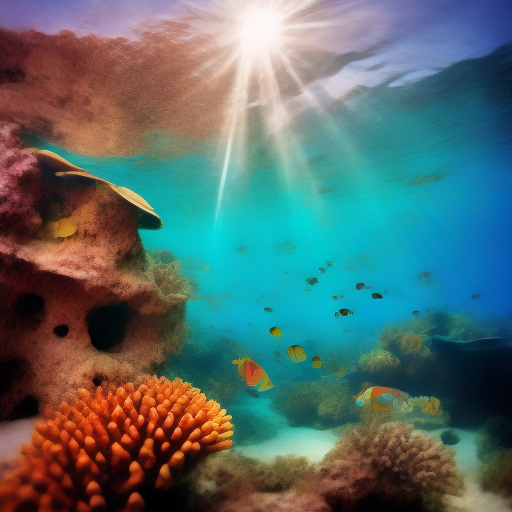}
    \caption{DPM-Solver++\\\citep{lu2022dpmpp}}
    \label{fig:comp_dpmpp}
  \end{subfigure}\hfill
  \begin{subfigure}[t]{0.24\textwidth}
    \includegraphics[width=\linewidth]{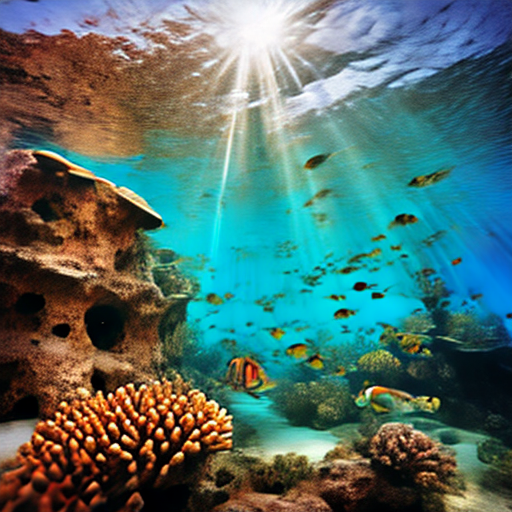}
    \caption{Dual-Solver (Ours)}
    \label{fig:comp_dual}
  \end{subfigure}\hfill
  \begin{subfigure}[t]{0.24\textwidth}
    \includegraphics[width=\linewidth]{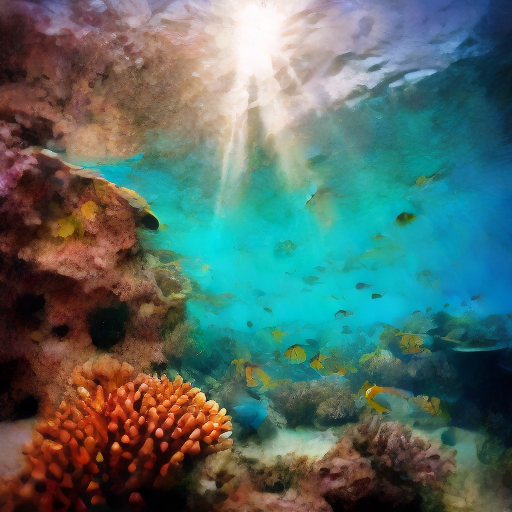}
    \caption{BNS-Solver\\\citep{shaul2024bespoke}}
    \label{fig:comp_bns}
  \end{subfigure}\hfill
  \begin{subfigure}[t]{0.24\textwidth}
    \includegraphics[width=\linewidth]{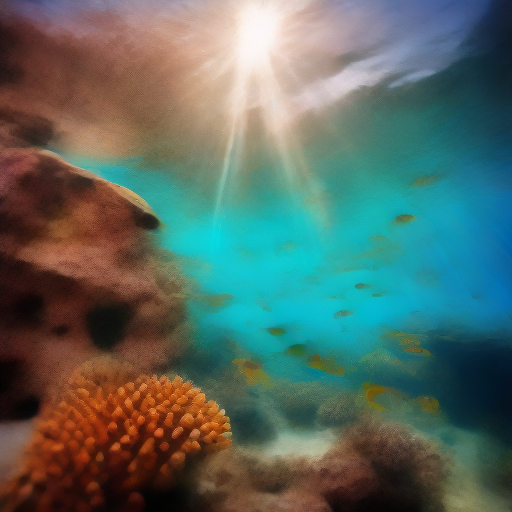}
    \caption{DS-Solver\\\citep{wang2025differentiable}}
    \label{fig:comp_ds}
  \end{subfigure}

  \vspace{-0.0\baselineskip}

  \caption{\textbf{Sampling results.} PixArt-$\alpha$~\citep{chen2023pixart}, NFE=5, CFG=3.5. See Fig.~\ref{fig:additional_pixart} for further results.}
  \label{fig:comp_four}
\vspace{-8pt}
\end{figure*}


%% file: experiment/results/gmdit_results.tex
\begin{figure}[t]
  \centering

  \begin{subfigure}[h]{\linewidth}
    \centering
    \includegraphics[width=\linewidth]{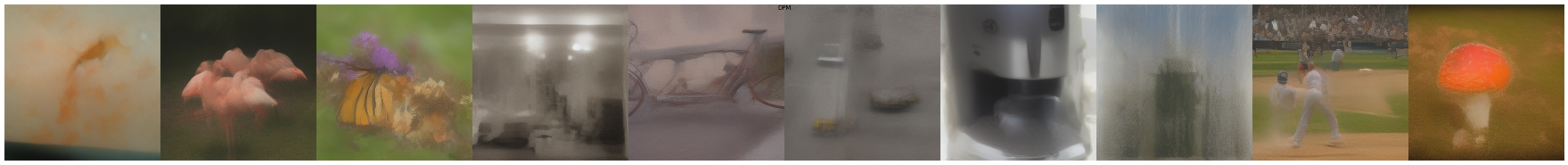}
    \caption{DPM-Solver++~\citep{lu2022dpmpp}}\label{fig:stack:a}
  \end{subfigure}
  
  \begin{subfigure}[h]{\linewidth}
    \centering
    \includegraphics[width=\linewidth]{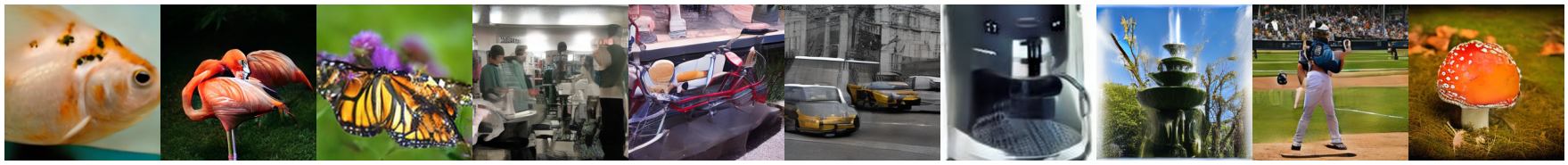}
    \caption{Dual-Solver (Ours)}\label{fig:stack:b}
  \end{subfigure}

  \begin{subfigure}[h]{\linewidth}
    \centering
    \includegraphics[width=\linewidth]{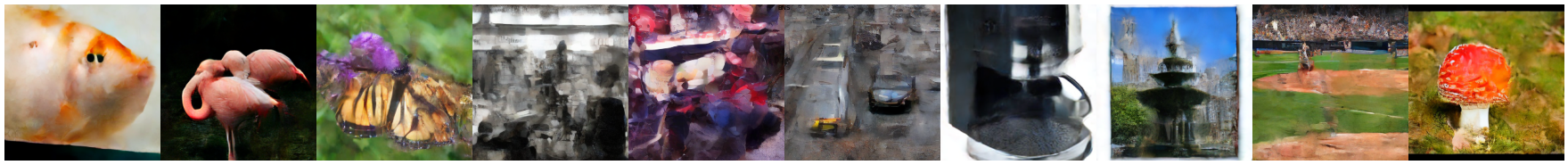}
    \caption{BNS-Solver~\citep{shaul2024bespoke}}\label{fig:stack:c}
  \end{subfigure}

  \begin{subfigure}[h]{\linewidth}
    \centering    
    \includegraphics[width=\linewidth]{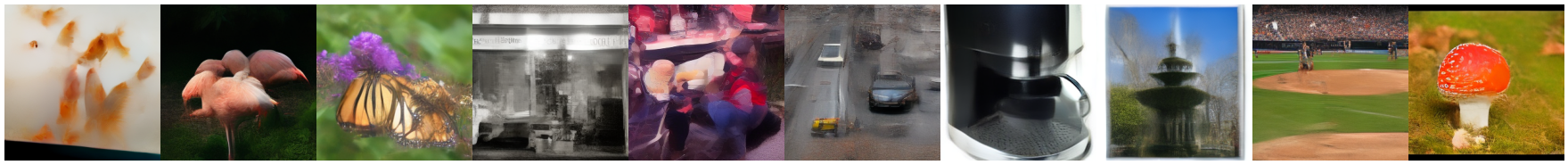}
    \caption{DS-Solver~\citep{wang2025differentiable}}\label{fig:stack:d}
  \end{subfigure}
  \vspace{-0pt}
  \caption{\textbf{Sampling results.} GM-DiT~\citep{chen2025gaussian}, NFE=3, CFG=1.4. See Fig.~\ref{fig:additional_gmdit} for further results.}
  \label{fig:stack}
\end{figure}

%% file: conclusion/main.tex
\section{Conclusions and Limitations}
This paper introduces Dual-Solver, a predictor–corrector sampler that achieves second-order numerical accuracy.
It features per-step learnable parameters $(\gamma,\tau,\kappa)$ that govern the prediction types, the integration domain, and the second-order residual adjustment.
All solver parameters are optimized end-to-end with a classification-based objective using a pretrained image classifier.
Across diverse diffusion and flow matching backbones, experiments show substantial improvements over competing solvers in the low-NFE regime ($3\le$ NFE $\le9$), as measured by FID and CLIP score.
Limitations include the absence of unconditional backbones and a lack of analysis beyond second-order accuracy; both are left for future work.

%% file: etc/acknowledgements.tex
\subsubsection*{Acknowledgements}
We would like to thank Juhee Lee, Jiyub Shin, and Kyungdo Min for their helpful discussions and careful review of the manuscript.
This research was partially supported by MODULABS.
This work was also supported by the Basic Science Research Program through the National Research Foundation of Korea (NRF), funded by the Ministry of Education (Grant No.\ RS-2025-25424642).

%% file: appendix/main.tex
\input{appendix/llm}
\input{appendix/derivations/main}
\clearpage
\input{appendix/experiment_details/main}
\clearpage
\input{appendix/accuracy/main}
\clearpage
\input{appendix/learned_parameters/main}
\clearpage
\input{appendix/parameter_interpolation/main}
\clearpage
\input{appendix/additional_sampling_results/main}

%% file: appendix/llm.tex
\section{LLM Usage}
\label{app:llm}
We disclose that Large Language Models (LLMs) were used as assistive tools for:
\begin{itemize}
    \item verifying the consistency of mathematical derivations,
    \item assisting literature search and surfacing related work,
    \item helping with formal writing (style, grammar, typos),
    \item drafting figure and table scripts.
\end{itemize}
All scientific ideas, methods, and results originate from the authors, who take full responsibility for the content of this paper. 

%% file: appendix/derivations/main.tex
\section{Derivations}
\input{appendix/derivations/dual_ode2}
\input{appendix/derivations/samplings/main}

\input{appendix/derivations/proofs/main}

%% file: appendix/derivations/dual_ode2.tex
\subsection{Derivation of Dual Prediction}
\label{app:derivation_dual_prediction}
\paragraph{Differential Form of Dual Prediction.}
The differential form corresponding to the integral form in Eq.~\ref{eq:dual_prediction_integral} is given by
\begin{equation}
\label{eq:diff_dual}
\frac{d\vx_t}{dt}
= \beta \vx_t
+\alpha_t\!\Big(\frac{d\log\alpha_t}{dt}-\beta\Big)\vx_\theta(\vx_t,t) +
\sigma_t\!\Big(\frac{d\log\sigma_t}{dt}-\beta\Big)\bm{\epsilon}_\theta(\vx_t,t).
\end{equation}

\noindent To derive this form, we start from the probability-flow ODE in Eq.~\ref{eq:pf-ode}:
\begin{equation*}
\begin{aligned}
&\frac{d\vx_t}{dt}
= f_t\,\vx_t - \frac{1}{2}\,g_t^{2}\,\nabla_\vx \log q(\vx_t),\,\\
&\text{ where }
f_t=\frac{d\log\alpha_t}{dt},\,
g_t^{2}=\frac{d}{dt}\sigma_t^{2}-2\,f_t\,\sigma_t^{2},\,
\nabla_\vx\log q(\vx_t)=-\,\frac{\bm{\epsilon}_\theta(\vx_t,t)}{\sigma_t}.
\end{aligned}
\end{equation*}

\noindent Substituting $f_t$, $g_t^{2}$, and $\nabla_\vx\log q(\vx_t)$ yields:
\begin{equation*}
\begin{aligned}
\frac{d\vx_t}{dt}
&= \frac{d\log \alpha_t}{dt}\,\vx_t
  + \frac{1}{2\sigma_t}\!\left(\frac{d}{dt}\sigma_t^{2}
  - 2 \frac{d\log \alpha_t}{dt}\,\sigma_t^{2}\right)\bm{\epsilon}_\theta(\vx_t,t) \\
&= \frac{d\log \alpha_t}{dt}\,\vx_t
  + \sigma_t\!\left(\frac{d\log\sigma_t}{dt}-\frac{d\log\alpha_t}{dt}\right)\bm{\epsilon}_\theta(\vx_t,t).
\end{aligned}
\end{equation*}

\noindent Introducing an arbitrary $\beta\in\mathbb{R}$, we can rewrite:
\begin{equation*}
\begin{aligned}
\frac{d\vx_t}{dt}
&= \beta \vx_t
  + \Big(\frac{d\log\alpha_t}{dt}-\beta\Big) \vx_t +
  \sigma_t\!\left(\frac{d\log\sigma_t}{dt}-\frac{d\log\alpha_t}{dt}\right)\bm{\epsilon}_\theta(\vx_t,t) \\
&= \beta \vx_t
  + \Big(\frac{d\log\alpha_t}{dt}-\beta\Big)\big(\alpha_t \vx_\theta(\vx_t,t)+\sigma_t\bm{\epsilon}_\theta(\vx_t,t)\big)
  +\sigma_t\!\left(\frac{d\log\sigma_t}{dt}-\frac{d\log\alpha_t}{dt}\right)\bm{\epsilon}_\theta(\vx_t,t) \\
&= \beta \vx_t
  + \alpha_t\!\Big(\frac{d\log\alpha_t}{dt}-\beta\Big)\vx_\theta(\vx_t,t)
  +\sigma_t\!\Big(\frac{d\log\sigma_t}{dt}-\beta\Big)\bm{\epsilon}_\theta(\vx_t,t).
\end{aligned}
\end{equation*}

\noindent Thus, we obtain the differential form of \emph{dual prediction}. It is straightforward to verify that choosing $\beta=\tfrac{d}{dt}\log\alpha_t$ recovers the noise-prediction form, $\beta=\tfrac{d}{dt}\log\sigma_t$ recovers the data-prediction form, and $\beta=0$ recovers the velocity-prediction form in Table~\ref{tab:predictions}.

\paragraph{Integral Form of Dual Prediction.}
We next apply the variation-of-constants method to Eq.~\ref{eq:diff_dual} to obtain the following integral representation:
\begin{equation*}
\begin{aligned}
\vx_{t_{i+1}}
=&\exp\!\Bigl(\int_{t_i}^{t_{i+1}}\beta_u\,du\Bigr)\,\vx_{t_i}
\;+\;
\\
&\int_{t_i}^{t_{i+1}}
\exp\!\Bigl(\int_{s}^{t_{i+1}}\beta_u\,du\Bigr)\Bigl[
  \alpha_s\Bigl(\frac{d\log \alpha_s}{ds}-\beta_s\Bigr)\,\vx_\theta
  \;+\;
  \sigma_s\Bigl(\frac{d\log \sigma_s}{ds}-\beta_s\Bigr)\,\bm{\epsilon}_\theta
\Bigr]\,ds.
\end{aligned}
\end{equation*}

\noindent We reparameterize $\beta$ in terms of a new variable $\gamma \in \mathbb{R}$ as follows:
\begin{equation*}
\beta(\gamma)
=
\begin{cases}
\displaystyle
\gamma\,\frac{d\log\sigma_t}{dt}
\;=\;\gamma\,\frac{\dot\sigma_t}{\sigma_t}, 
& \gamma\ge0,\\
\displaystyle
-\gamma\,\frac{d\log\alpha_t}{dt}
\;=\;-\gamma\,\frac{\dot\alpha_t}{\alpha_t}, 
& \gamma<0.
\end{cases}
\end{equation*}

\noindent Then we obtain the $\gamma$-interpolated integral form of dual prediction (Eq.~\ref{eq:dual_prediction_integral}).

%% file: appendix/derivations/samplings/main.tex
\subsection{Derivation of Sampling Equations}
\label{app:derivation_sampling_equations}
\input{appendix/derivations/samplings/first_predictor2}
\input{appendix/derivations/samplings/second_predictor2}
\input{appendix/derivations/samplings/second_corrector2}

%% file: appendix/derivations/samplings/first_predictor2.tex
\subsubsection{Derivation of First-Order Predictor}
\label{app:derivation_fo_pred}
First, we take a first-order approximation of $\vx_\theta(u)$ and $\bm{\epsilon}_\theta(v)$ in Eq.~\ref{eq:invertible_transform_integral}.
\begin{equation*}
\begin{aligned}
\vx_{t_{i+1}} \approx
A\vx_{t_i} +
B \bigg[
\vx_\theta(u_i)
\Delta L^{-1}(u_i)
+ 
\bm\epsilon_\theta(v_i)
\Delta L^{-1}(v_i)
\bigg].\\[4pt]
\end{aligned}
\end{equation*}

\noindent Here, we write $\vx_{\theta}(u_i)\coloneqq \vx_{\theta}(\vx_{u^{-1}(u_i)},u^{-1}(u_i))$, and $\Delta L^{-1}(u_i)\coloneqq L^{-1}(u_{i+1})-L^{-1}(u_i)$; the definitions for $\bm{\epsilon}_{\theta}(v_i)$ and $\Delta L^{-1}(v_i)$ are analogous.

\noindent Next, while preserving first-order accuracy, we incorporate the $K(\Delta u_i;\kappa_u)=\mathcal O((\Delta u_i)^2)$ and $K(\Delta v_i;\kappa_v)=\mathcal O((\Delta v_i)^2)$, which yields the first-order predictor of Dual-Solver (Eq.~\ref{eq:fo-pred}).

%% file: appendix/derivations/samplings/second_predictor2.tex
\subsubsection{Derivation of Second-Order Predictor}
\label{app:derivation_so_pred}
First, we approximate $\vx_\theta(u)$ and $\bm{\epsilon}_\theta(v)$ at $u_i$ and $v_i$ by a second-order backward-difference expansion.

\begin{equation*}
\begin{aligned}
\vx_{t_{i+1}} \approx
A\vx_{t_i}
\;+\;
B \bigg[&\vx_\theta(u_i) 
\Delta L^{-1}(u_i)\;+\;
\frac{\Delta \vx_\theta(u_{i-1})}{\Delta u_{i-1}}
\int_{u_i}^{u_{i+1}}
(u-u_i)\frac{dL^{-1}(u)}{du}du\;+\;\\
&\bm\epsilon_\theta(v_i) 
\Delta L^{-1}(v_i)\;+\;
\frac{\Delta \bm\epsilon_\theta(v_{i-1})}{\Delta v_{i-1}}
\int_{v_i}^{v_{i+1}}
(v-v_i)\frac{dL^{-1}(v)}{dv}dv
\bigg].
\end{aligned}
\end{equation*}

\noindent Using
\begin{equation*}
\begin{aligned}
\frac{1}{\Delta u_{i-1}}
&\int_{u_i}^{u_{i+1}}
(u - u_i)
\frac{dL^{-1}(u)}{du}
\,du\\
&=\frac{1}{2r_i^{(u)}}
\left(
\left.\frac{dL^{-1}(u)}{du}
\right|_{u_i}
\Delta u_i
+
O((\Delta u_i)^2)
\right)\\
&=\frac{1}{2r_i^{(u)}}
\left(
\Delta L^{-1}(u_{i})
+ \mathcal O((\Delta u_i)^2)
\right)
\end{aligned}
\end{equation*}
\noindent where $r_i^{(u)} := \frac{\Delta u_{i-1}}{\Delta u_i}$ and $r_i^{(v)} := \frac{\Delta v_{i-1}}{\Delta v_i}$, we obtain

\begin{equation*}
\begin{aligned}
\vx_{t_{i+1}} =
A\vx_{t_i}\;+\;
B\!\bigg[
&\vx_\theta(u_i) \Delta L^{-1}(u_i)
+\frac{\Delta \vx_\theta(u_{i-1})}{2r_i^{(u)}}
\left(
\Delta L^{-1}(u_{i}) + \mathcal O((\Delta u_i)^2)
\right)\;+\;\\ 
&\bm\epsilon_\theta(v_i) \Delta L^{-1}(v_i)
+\frac{\Delta\bm\epsilon_\theta(v_{i-1})}{2r_i^{(v)}}
\left(
\Delta L^{-1}(v_{i}) + \mathcal O((\Delta v_i)^2)
\right)
\bigg].
\end{aligned}
\end{equation*}

\noindent Next, while preserving second-order accuracy, we incorporate the $K(\Delta u_i;\kappa_u)=\mathcal O((\Delta u_i)^2)$ and $K(\Delta v_i;\kappa_v)=\mathcal O((\Delta v_i)^2)$ residual terms, which yields the following second-order predictor of Dual-Solver.

\begin{equation}
\label{eq:so-pred}
\begin{aligned}
\vx_{t_{i+1}}^{\text{2nd-pred.}} =
A\vx_{t_i} \;+\; 
B\Big[
&\vx_\theta(u_i)\,\Delta L^{-1}(u_i)
+ \frac{\Delta\vx_\theta(u_{i-1})}{2r_i^{(u)}}\,\big(\Delta L^{-1}(u_i)+K(\Delta u_i;\kappa_u)\big) \\
&
+ \bm{\epsilon}_\theta(v_i)\,\Delta L^{-1}(v_i)
+ \frac{\Delta\bm{\epsilon}_\theta(v_{i-1})}{2r_i^{(v)}}\,\big(\Delta L^{-1}(v_i)+K(\Delta v_i;\kappa_v)\big)\Big]
\end{aligned}
\end{equation}

%% file: appendix/derivations/samplings/second_corrector2.tex
\subsubsection{Derivation of Second-Order Corrector}
\label{app:derivation_so_corr}
First, we approximate $\vx_\theta(u)$ and $\bm{\epsilon}_\theta(v)$ at $u_i$ and $v_i$ by a second-order forward-difference expansion.

\begin{equation*}
\begin{aligned}
\vx_{t_{i+1}} \approx
A\vx_{t_i}
\;+\;
B \bigg[
&\vx_\theta(u_i) 
\Delta L^{-1}(u_i)\;+\;
\frac{\Delta \vx_\theta(u_{i})}{\Delta u_{i}}
\int_{u_i}^{u_{i+1}}
(u-u_i)\frac{dL^{-1}(u)}{du}du\;+\;\\
&\bm\epsilon_\theta(v_i) 
\Delta L^{-1}(v_i)\;+\;
\frac{\Delta \bm\epsilon_\theta(v_{i})}{\Delta v_{i}}
\int_{v_i}^{v_{i+1}}
(v-v_i)\frac{dL^{-1}(v)}{dv}dv
\bigg].
\end{aligned}
\end{equation*}

\noindent Using
\begin{equation*}
\begin{aligned}
\frac{1}{\Delta u_{i}}
&\int_{u_i}^{u_{i+1}}
(u - u_i)
\frac{dL^{-1}(u)}{du}
\,du\\
&=\frac{1}{2}
\left(
\left.\frac{dL^{-1}(u)}{du}
\right|_{u_i}
\Delta u_i
+
O((\Delta u_i)^2)
\right)\\
&=\frac{1}{2}
\left(
\Delta L^{-1}(u_{i})
+ \mathcal O((\Delta u_i)^2)
\right)
\end{aligned}
\end{equation*}
\noindent we obtain

\begin{equation*}
\begin{aligned}
\vx_{t_{i+1}} = 
A\vx_{t_i}\;+\;
B\bigg[
&\vx_\theta(u_i) \Delta L^{-1}(u_i)
+\frac{\Delta \vx_\theta(u_{i})}{2}
\left(
\Delta L^{-1}(u_{i}) + \mathcal O((\Delta u_i)^2)
\right)\\ 
&+\bm\epsilon_\theta(v_i) \Delta L^{-1}(v_i)
+\frac{\Delta\bm\epsilon_\theta(v_{i})}{2}
\left(
\Delta L^{-1}(v_{i}) + \mathcal O((\Delta v_i)^2)
\right)
\bigg].
\end{aligned}
\end{equation*}

\noindent Next, while preserving second-order accuracy, we incorporate the $K(\Delta u_i;\kappa_u)=\mathcal O((\Delta u_i)^2)$ and $K(\Delta v_i;\kappa_v)=\mathcal O((\Delta v_i)^2)$ residual terms, which yields the  second-order corrector of Dual-Solver (Eq.~\ref{eq:so-corr}).

%% file: appendix/derivations/proofs/main.tex
\section{Local Truncation Error}
\label{app:local_truncation_error}
\input{appendix/derivations/proofs/first_predictor2}
\input{appendix/derivations/proofs/second_predictor2}
\input{appendix/derivations/proofs/second_corrector2}

%% file: appendix/derivations/proofs/first_predictor2.tex
\subsection{Local Truncation Error of First-Order Predictor}
\label{app:lte_first_predictor}

\begin{theorem} Assume that $\vx_\theta(u)$ and $\bm\epsilon_\theta(v)$ are $C^1$ on $[u_i,u_{i+1}]$ and $[v_i,v_{i+1}]$, respectively. Let $\vx^{\text{exact}}_{t_{i+1}}$ denote the exact update in \eqref{eq:invertible_transform_integral}, and let $\vx^{\text{1st-pred.}}_{t_{i+1}}$ denote the first–order predictor defined in \eqref{eq:fo-pred}. Then  we have
\begin{equation*}
\big\|\vx_{t_{i+1}}^{\text{exact}}-\vx_{t_{i+1}}^{\text{1st-pred.}} \big\|
\;=\;\mathcal{O}\!\big((\Delta u_i)^2+(\Delta v_i)^2\big).
\end{equation*}
\end{theorem}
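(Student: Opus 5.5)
Subtract the two expressions. Both share $A\vx_{t_i}$, and the bracket is multiplied by the same $B$, so the error reduces to two independent scalar-weighted integral remainders. I will treat $B$ as a bounded constant for fixed step endpoints. Explicitly, by \eqref{eq:invertible_transform_integral} and \eqref{eq:fo-pred},
\begin{equation*}
\vx_{t_{i+1}}^{\text{exact}}-\vx_{t_{i+1}}^{\text{1st-pred.}}
= B\Big[E_u + E_v\Big],
\end{equation*}
where
\begin{equation*}
E_u=\int_{u_i}^{u_{i+1}}\frac{dL^{-1}(u)}{du}\big(\vx_\theta(u)-\vx_\theta(u_i)\big)\,du-\vx_\theta(u_i)\,K(\Delta u_i;\kappa_u).
\end{equation*}
$E_v$ is defined analogously with $\bm\epsilon_\theta$, $v$ and $\kappa_v$. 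Here I use $\int_{u_i}^{u_{i+1}}\frac{dL^{-1}}{du}\,du=\Delta L^{-1}(u_i)$, which holds by the fundamental theorem of calculus since $L$ is a $C^1$ diffeomorphism. By the triangle inequality it then suffices to show $\|E_u\|=\mathcal O((\Delta u_i)^2)$, and the $v$ part is identical.

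For the integral term, I will use the $C^1$ assumption. By the mean value inequality, $\|\vx_\theta(u)-\vx_\theta(u_i)\|\le M_1|u-u_i|$ on $[u_i,u_{i+1}]$, where $M_1=\sup\|\vx_\theta'\|$ is finite by continuity on a compact interval. The weight $\frac{dL^{-1}}{du}=e^{\tau_u u}$ for the log-linear transform \eqref{eq:log-linear}, and it is continuous, so it is bounded by some $W$ on the interval. Hence the integral term is at most $W M_1\int_{u_i}^{u_{i+1}}|u-u_i|\,du=\tfrac12 W M_1(\Delta u_i)^2$. The residual term satisfies $\|\vx_\theta(u_i)K(\Delta u_i;\kappa_u)\|=|\kappa_u|\,\|\vx_\theta(u_i)\|\,(\Delta u_i)^2$, which is $\mathcal O((\Delta u_i)^2)$ because $\vx_\theta(u_i)$ is bounded.

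Combining the two bounds gives $\|E_u\|\le C_u(\Delta u_i)^2$, and similarly $\|E_v\|\le C_v(\Delta v_i)^2$. Multiplying by $|B|$ and taking $C=|B|\max(C_u,C_v)$ gives the claim. The only delicate point is the uniformity of the constants. The $\mathcal O$ is meant as $\Delta u_i,\Delta v_i\to0$ with $\gamma,\tau,\kappa$ fixed and the integration interval staying inside a compact set. On such a set $W$, $M_1$, $\sup\|\vx_\theta\|$ and $B$ are all bounded, and I will state this assumption explicitly. A second point to handle is the orientation of the interval, since $\Delta u_i$ may be negative. I handle it by writing $|\Delta u_i|$ throughout and integrating $|u-u_i|$ over the interval between the endpoints.
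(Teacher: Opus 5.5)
Your proposal is correct and follows the same route as the paper: the common $A\vx_{t_i}$ cancels, the error reduces to $B[(I_u-I'_u)+(I_v-I'_v)]$, the remainder $\int \frac{dL^{-1}}{du}(\vx_\theta(u)-\vx_\theta(u_i))\,du$ is $\mathcal O((\Delta u_i)^2)$, and $\vx_\theta(u_i)K(\Delta u_i;\kappa_u)$ is absorbed as another $\mathcal O((\Delta u_i)^2)$ term. The paper only asserts the expansion step, whereas you justify it with the mean value inequality and a bound on the continuous weight, and you also make the uniformity of the constants and the sign of $\Delta u_i$ explicit, which tightens the same argument.
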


\begin{proof}

From Eq.~\ref{eq:invertible_transform_integral},
the exact update can be written as
\begin{equation*}
\vx_{t_{i+1}}^{\text{exact}} = A\vx_{t_i} \;+\; B\left[I_u \;+\; I_v\right],
\end{equation*}
where $I_u=\int_{u_i}^{u_{i+1}}
\frac{dL^{-1}(u)}{du}\,\vx_\theta(u)\,du$ and $I_v=\int_{v_i}^{v_{i+1}}
\frac{dL^{-1}(v)}{dv}\,\bm{\epsilon}_\theta(v)\,dv$.

\vspace{4pt}
\noindent Eq.~\ref{eq:fo-pred} defines the first-order predictor
\begin{equation*}
\vx_{t_{i+1}}^{\text{1st-pred.}} =
A\vx_{t_i} \;+\; B\Big[I'_u \;+\; I'_v\Big],
\end{equation*}
with $I'_u=\vx_\theta(u_i)\,\big(\Delta L^{-1}(u_i)+K(\Delta u_i;\kappa_u)\big)$ and $I'_v=\bm{\epsilon}_\theta(v_i)\,\big(\Delta L^{-1}(v_i)+K(\Delta v_i;\kappa_v)\big)$.

The accuracy can be obtained by taking a Taylor expansion of $I_u$ and estimating the order of $I_u - I'_u$.
\paragraph{Expansion of $I_u$}
\begin{equation*}
\begin{aligned}
I_u
&=\int_{u_i}^{u_{i+1}}\frac{dL^{-1}(u)}{du}\vx_\theta(u)du \\
&=\vx_\theta(u_i)\Delta L^{-1}(u_i)\;+\;\mathcal{O}\big((\Delta u_i)^2\big).
\end{aligned}
\end{equation*}

\paragraph{Order estimate of $I_u - I'_u$}
\[
I_u - I'_u =
\mathcal{O}\!\big((\Delta u_i)^2\big) \;-\; \vx_\theta(u_i)\,K(\Delta u_i;\kappa_u).
\]
Since $\vx_\theta(u_i)=\mathcal{O}(1)$ and $K(\Delta u_i;\kappa_u)=\mathcal{O}((\Delta u_i)^2)$, it follows that
\[
\vx_\theta(u_i)\,K(\Delta u_i;\kappa_u)=\mathcal{O}\big((\Delta u_i)^2\big).
\]
Therefore,
\begin{equation*}
I_u - I'_u = \mathcal{O}\!\big((\Delta u_i)^2\big).
\end{equation*}

The bound $I_v - I'_v = \mathcal{O}\big((\Delta v_i)^2\big)$ can be derived in the same way.

\paragraph{Conclusion}
\begin{align*}
\vx_{t_{i+1}}^{\text{exact}} - \vx_{t_{i+1}}^{\text{1st-pred.}}
&= B\big[(I_u-I'_u)+(I_v-I'_v)\big] \\
&= B\Big[\mathcal{O}((\Delta u_i)^2)+\mathcal{O}((\Delta v_i)^2)\Big] \\
&= \mathcal{O}\big((\Delta u_i)^2+(\Delta v_i)^2\big).
\end{align*}
\end{proof}

%% file: appendix/derivations/proofs/second_predictor2.tex
\subsection{Local Truncation Error of Second-Order Predictor}
\label{app:lte_second_predictor}

\begin{theorem} Assume that $\vx_\theta(u)$ and $\bm\epsilon_\theta(v)$ are $C^2$ on $[u_{i-1},u_{i+1}]$ and $[v_{i-1},v_{i+1}]$, respectively. Let $\vx^{\text{exact}}_{t_{i+1}}$ denote the exact update in \eqref{eq:invertible_transform_integral}, and let $\vx^{\text{2nd-pred.}}_{t_{i+1}}$ denote the second–order predictor defined in \eqref{eq:so-pred}. Then we have
\[
\big\|\vx_{t_{i+1}}^{\text{exact}}-\vx_{t_{i+1}}^{\text{2nd-pred.}} \big\|
\;=\;\mathcal{O}\!\big((\Delta u_i)^3+(\Delta v_i)^3\big).
\]
\end{theorem}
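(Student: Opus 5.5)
The proof follows the template of the first-order predictor theorem. Write the exact update as $A\vx_{t_i}+B[I_u+I_v]$ and the second-order predictor (Eq.~\ref{eq:so-pred}) as $A\vx_{t_i}+B[I''_u+I''_v]$, with
\[
I''_u=\vx_\theta(u_i)\,\Delta L^{-1}(u_i)+\frac{\Delta\vx_\theta(u_{i-1})}{2r_i^{(u)}}\big(\Delta L^{-1}(u_i)+K(\Delta u_i;\kappa_u)\big),
\]
and analogously for $I''_v$. Since $A\vx_{t_i}$ cancels and $B$ is bounded, it suffices to show $I_u-I''_u=\mathcal O((\Delta u_i)^3)$; the $v$-part is identical. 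Throughout, assume the step ratio $r_i^{(u)}=\Delta u_{i-1}/\Delta u_i$ stays bounded above and away from zero, the standard assumption for multistep methods. Also assume $L^{-1}$ is smooth, which holds because $dL^{-1}/du=e^{\tau u}$.

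\textbf{Step 1: expand the exact integral.} Taylor-expand $\vx_\theta(u)=\vx_\theta(u_i)+\vx_\theta'(u_i)(u-u_i)+\mathcal O((u-u_i)^2)$ using $C^2$ regularity. This gives
\[
I_u=\vx_\theta(u_i)\Delta L^{-1}(u_i)+\vx_\theta'(u_i)\int_{u_i}^{u_{i+1}}(u-u_i)\frac{dL^{-1}}{du}\,du+\mathcal O((\Delta u_i)^3),
\]
because the weight is bounded on the interval.

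\textbf{Step 2: the moment integral.} Expanding the weight, $\int_{u_i}^{u_{i+1}}(u-u_i)\frac{dL^{-1}}{du}\,du=\tfrac12\frac{dL^{-1}}{du}\big|_{u_i}(\Delta u_i)^2+\mathcal O((\Delta u_i)^3)$. Likewise $\Delta L^{-1}(u_i)=\frac{dL^{-1}}{du}\big|_{u_i}\Delta u_i+\mathcal O((\Delta u_i)^2)$.

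\textbf{Step 3: expand the predictor's correction term.} The backward difference satisfies $\Delta\vx_\theta(u_{i-1})=\vx_\theta'(u_i)\Delta u_{i-1}+\mathcal O((\Delta u_{i-1})^2)$. Dividing by $r_i^{(u)}$ gives
\[
\frac{\Delta\vx_\theta(u_{i-1})}{r_i^{(u)}}=\vx_\theta'(u_i)\Delta u_i+\mathcal O((\Delta u_i)^2).
\]
Multiplying by $\tfrac12\big(\Delta L^{-1}(u_i)+K\big)=\tfrac12\frac{dL^{-1}}{du}\big|_{u_i}\Delta u_i+\mathcal O((\Delta u_i)^2)$ yields $\tfrac12\vx_\theta'(u_i)\frac{dL^{-1}}{du}\big|_{u_i}(\Delta u_i)^2+\mathcal O((\Delta u_i)^3)$. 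Here I use $K=\kappa_u(\Delta u_i)^2$ and the fact that the prefactor is $\mathcal O(\Delta u_i)$. This is the key point: $K$ enters only multiplied by a first-order difference, so it contributes $\mathcal O((\Delta u_i)^3)$.

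\textbf{Step 4: subtract.} The leading terms $\vx_\theta(u_i)\Delta L^{-1}(u_i)$ and the $(\Delta u_i)^2$ terms cancel, leaving $I_u-I''_u=\mathcal O((\Delta u_i)^3)$. Combining with the $v$-part and the boundedness of $B$ gives the claim.

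The main obstacle is the bookkeeping in Step 3. Converting $\mathcal O((\Delta u_{i-1})^2)/r_i^{(u)}$ into $\mathcal O((\Delta u_i)^2)$ requires the bounded step-ratio assumption, which the theorem does not state explicitly and which I would add. One must also track that each error term is multiplied by an $\mathcal O(\Delta u_i)$ factor, so that the result is cubic rather than only quadratic.
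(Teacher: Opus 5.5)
Your proposal is correct and takes essentially the same route as the paper. Both Taylor-expand $I_u$ to second order and match $\tfrac12\vx'_\theta(u_i)\Delta u_i\,\Delta L^{-1}(u_i)$ against the backward-difference term $\tfrac{1}{2r_i^{(u)}}\Delta\vx_\theta(u_{i-1})\,\Delta L^{-1}(u_i)$; both then note that $K$ is multiplied by an $\mathcal O(\Delta u_i)$ difference and so contributes only at third order. Your explicit bounded step-ratio assumption is the same condition the paper uses tacitly when it writes $\Delta\vx_\theta(u_{i-1})=\mathcal O(\Delta u_i)$ and replaces $\vx'_\theta(u_i)\Delta u_i$ by $\Delta\vx_\theta(u_{i-1})/r_i^{(u)}$, so stating it adds rigor without changing the argument.
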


\begin{proof}
From Eq.~\ref{eq:invertible_transform_integral}, 
the exact update is given by
\begin{equation*}
\vx_{t_{i+1}}^{\text{exact}} = A\vx_{t_i} \;+\; B\left[I_u \;+\; I_v\right],
\end{equation*}
where $I_u=\int_{u_i}^{u_{i+1}}
\frac{dL^{-1}(u)}{du}\,\vx_\theta(u)\,du$ and $I_v=\int_{v_i}^{v_{i+1}}
\frac{dL^{-1}(v)}{dv}\,\bm{\epsilon}_\theta(v)\,dv$.

\vspace{4pt}
\noindent Eq.~\ref{eq:so-pred} defines the second-order predictor
\begin{equation*}
\vx_{t_{i+1}}^{\text{2nd-pred.}} =
A\vx_{t_i} \;+\; B\Big[I'_u \;+\; I'_v\Big],
\end{equation*}
with $I'_u=\vx_\theta(u_i)\,\Delta L^{-1}(u_i)+\frac{\Delta\vx_\theta(u_{i-1})}{2r_i^{(u)}}\,\big(\Delta L^{-1}(u_i)+K(\Delta u_i;\kappa_u)\big)$ and $I'_v=\bm{\epsilon}_\theta(v_i)\,\Delta L^{-1}(v_i)+\frac{\Delta\bm{\epsilon}_\theta(v_{i-1})}{2r_i^{(v)}}\,\big(\Delta L^{-1}(v_i)+K(\Delta v_i;\kappa_v)\big)$.

The accuracy can be obtained by taking a Taylor expansion of $I_u$ and estimating the order of $I_u - I'_u$.
\paragraph{Expansion of $I_u$}
\begin{equation*}
\begin{aligned}
&I_u=\int_{u_i}^{u_{i+1}}\frac{dL^{-1}(u)}{du}\,\vx_\theta(u)\,du \\
&=\vx_\theta(u_i)\,\Delta L^{-1}(u_i)
  +\int_{u_i}^{u_{i+1}}\frac{dL^{-1}(u)}{du}\big(\vx_\theta(u)-\vx_\theta(u_i)\big)\,du \\
&=\vx_\theta(u_i)\,\Delta L^{-1}(u_i)
  +\vx'_\theta(u_i)\int_{u_i}^{u_{i+1}}(u-u_i)\frac{dL^{-1}(u)}{du}\,du\;+\;\mathcal{O}\!\big((\Delta u_i)^3\big) \\
&=\vx_\theta(u_i)\,\Delta L^{-1}(u_i)
  +\frac{1}{2}\,\vx'_\theta(u_i)\,\Delta u_i\,\Delta L^{-1}(u_i)\;+\;\mathcal{O}\!\big((\Delta u_i)^3\big) \\
&=\vx_\theta(u_i)\,\Delta L^{-1}(u_i)
  +\frac{1}{2r_i^{(u)}}\,\Delta\vx_\theta(u_{i-1})\,\Delta L^{-1}(u_i)\;+\;\mathcal{O}\!\big((\Delta u_i)^3\big).
\end{aligned}
\end{equation*}

\paragraph{Order estimate of $I_u - I'_u$}
\[
I_u - I'_u =
\mathcal{O}\!\big((\Delta u_i)^3\big) \;-\; \frac{1}{2r_i^{(u)}}\,\Delta \vx_\theta(u_{i-1})\,K(\Delta u_i;\kappa_u).
\]
Since $\Delta \vx_\theta(u_{i-1})=\mathcal{O}(\Delta u_i)$ and $K(\Delta u_i;\kappa_u)=\mathcal{O}((\Delta u_i)^2)$, it follows that
\[
\Delta \vx_\theta(u_{i-1})\,K(\Delta u_i;\kappa_u)=\mathcal{O}\big((\Delta u_i)^3\big).
\]
Therefore,
\begin{equation*}
I_u - I'_u = \mathcal{O}\!\big((\Delta u_i)^3\big).
\end{equation*}

The bound $I_v - I'_v = \mathcal{O}\big((\Delta v_i)^3\big)$ can be derived in the same way.

\paragraph{Conclusion}
\begin{align*}
\vx_{t_{i+1}}^{\text{exact}} - \vx_{t_{i+1}}^{\text{2nd-pred.}}
&= B\big[(I_u-I'_u)+(I_v-I'_v)\big] \\
&= B\Big[\mathcal{O}((\Delta u_i)^3)+\mathcal{O}((\Delta v_i)^3)\Big] \\
&= \mathcal{O}\big((\Delta u_i)^3+(\Delta v_i)^3\big).
\end{align*}
\end{proof}

%% file: appendix/derivations/proofs/second_corrector2.tex
\subsection{Local Truncation Error of Second-Order Corrector}
\label{app:lte_second_corrector}

\begin{theorem}
\label{thm:lte-so-corr}
Assume that $\vx_\theta(u)$ and $\bm\epsilon_\theta(v)$ are $C^2$ on $[u_i,u_{i+1}]$ and $[v_i,v_{i+1}]$, respectively. Let $\vx^{\text{exact}}_{t_{i+1}}$ denote the exact update in \eqref{eq:invertible_transform_integral}, and let $\vx^{\text{2nd-corr.}}_{t_{i+1}}$ denote the second–order corrector defined in \eqref{eq:so-corr}. Then  we have
\[
\big\|\vx_{t_{i+1}}^{\text{exact}}-\vx_{t_{i+1}}^{\text{2nd-corr.}} \big\|
\;=\;\mathcal{O}\!\big((\Delta u_i)^3+(\Delta v_i)^3\big).
\]
\end{theorem}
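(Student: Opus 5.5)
We follow the template of the two preceding local-truncation-error theorems. Writing the exact update from \eqref{eq:invertible_transform_integral} as $\vx^{\text{exact}}_{t_{i+1}}=A\vx_{t_i}+B[I_u+I_v]$ and the corrector \eqref{eq:so-corr} as $A\vx_{t_i}+B[I'_u+I'_v]$, we have
\[
I'_u=\vx_\theta(u_i)\Delta L^{-1}(u_i)+\tfrac12\Delta\vx_\theta(u_i)\big(\Delta L^{-1}(u_i)+K(\Delta u_i;\kappa_u)\big),
\]
and $I'_v$ is analogous. The $A\vx_{t_i}$ terms cancel, so it suffices to show $I_u-I'_u=\mathcal O((\Delta u_i)^3)$ and likewise for $v$. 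The result then follows since $B$ is bounded on the step.

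\textbf{Expansion of $I_u$.} First split
\[
I_u=\vx_\theta(u_i)\Delta L^{-1}(u_i)+\int_{u_i}^{u_{i+1}}\tfrac{dL^{-1}}{du}\big(\vx_\theta(u)-\vx_\theta(u_i)\big)\,du .
\]
Because $\vx_\theta\in C^2$, we can write $\vx_\theta(u)-\vx_\theta(u_i)=\vx'_\theta(u_i)(u-u_i)+\mathcal O((u-u_i)^2)$. The weight $e^{\tau u}$ is bounded on the interval, so the remainder integrates to $\mathcal O((\Delta u_i)^3)$. Next expand the weight as
\[
\tfrac{dL^{-1}}{du}(u)=\tfrac{dL^{-1}}{du}(u_i)+\mathcal O(u-u_i),
\]
using that $L^{-1}(\cdot;\tau)$ is smooth. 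This gives
\[
\int_{u_i}^{u_{i+1}}(u-u_i)\tfrac{dL^{-1}}{du}\,du=\tfrac12\tfrac{dL^{-1}}{du}(u_i)(\Delta u_i)^2+\mathcal O((\Delta u_i)^3).
\]
Since $\Delta L^{-1}(u_i)=\tfrac{dL^{-1}}{du}(u_i)\Delta u_i+\mathcal O((\Delta u_i)^2)$, this equals $\tfrac12\Delta u_i\,\Delta L^{-1}(u_i)+\mathcal O((\Delta u_i)^3)$. Combining,
\[
I_u=\vx_\theta(u_i)\Delta L^{-1}(u_i)+\tfrac12\vx'_\theta(u_i)\Delta u_i\,\Delta L^{-1}(u_i)+\mathcal O((\Delta u_i)^3).
\]

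\textbf{Replacing the derivative by the forward difference.} By Taylor's theorem with $C^2$ regularity,
\[
\Delta\vx_\theta(u_i)=\vx'_\theta(u_i)\Delta u_i+\mathcal O((\Delta u_i)^2).
\]
Multiplying by $\tfrac12\Delta L^{-1}(u_i)=\mathcal O(\Delta u_i)$ costs only $\mathcal O((\Delta u_i)^3)$. Hence
\[
I_u=\vx_\theta(u_i)\Delta L^{-1}(u_i)+\tfrac12\Delta\vx_\theta(u_i)\Delta L^{-1}(u_i)+\mathcal O((\Delta u_i)^3).
\]
Subtracting $I'_u$ leaves
\[
I_u-I'_u=\mathcal O((\Delta u_i)^3)-\tfrac12\Delta\vx_\theta(u_i)\,K(\Delta u_i;\kappa_u).
\]
Here $\Delta\vx_\theta(u_i)=\mathcal O(\Delta u_i)$ by the mean value theorem and $K(\Delta u_i;\kappa_u)=\kappa_u(\Delta u_i)^2$, so the product is $\mathcal O((\Delta u_i)^3)$. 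The $v$-integral is handled identically.

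\textbf{Main obstacle.} The step needing the most care is bookkeeping the implicit constants. These depend on $\sup|\vx''_\theta|$, on $\sup e^{\tau u}$ and $\sup \tau e^{\tau u}$ over the interval, and on $\kappa_u$; all are bounded for fixed parameters and a compact interval, so the $\mathcal O$ bounds are uniform. A further subtlety is that the corrector evaluates $\vx_\theta(u_{i+1})$ at the predicted state $\vx'_{t_{i+1}}$, not the exact one. As in the paper's convention for local truncation error, we assume exact evaluations along the true trajectory. Under a Lipschitz assumption on $\vx_\theta$, the predictor error $\mathcal O((\Delta u_i)^2+(\Delta v_i)^2)$ from the first-order predictor theorem enters multiplied by $\Delta L^{-1}(u_i)=\mathcal O(\Delta u_i)$. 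Its contribution is therefore also third order, provided $\Delta u_i$ and $\Delta v_i$ are comparable, i.e., both $\mathcal O(h)$ for a common step size $h$.
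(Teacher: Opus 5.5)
Your proposal is correct and follows essentially the same route as the paper's proof. Both arguments isolate $\vx_\theta(u_i)\Delta L^{-1}(u_i)$, Taylor-expand the remainder to $\tfrac12\Delta\vx_\theta(u_i)\,\Delta L^{-1}(u_i)+\mathcal{O}((\Delta u_i)^3)$, and absorb $\tfrac12\Delta\vx_\theta(u_i)\,K(\Delta u_i;\kappa_u)$ as a product of an $\mathcal{O}(\Delta u_i)$ and an $\mathcal{O}((\Delta u_i)^2)$ factor. Your justification of $\int_{u_i}^{u_{i+1}}(u-u_i)\tfrac{dL^{-1}}{du}\,du=\tfrac12\Delta u_i\,\Delta L^{-1}(u_i)+\mathcal{O}((\Delta u_i)^3)$ and your remark on evaluating at the predicted state add details the paper leaves implicit; the paper simply assumes exact evaluations.
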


\begin{proof}
From Eq.~\ref{eq:invertible_transform_integral},
the exact update is expressed as
\begin{equation*}
\vx_{t_{i+1}}^{\text{exact}} = A\vx_{t_i} \;+\; B\left[I_u \;+\; I_v\right],
\end{equation*}
where $I_u=\int_{u_i}^{u_{i+1}}
\frac{dL^{-1}(u)}{du}\,\vx_\theta(u)\,du$ and $I_v=\int_{v_i}^{v_{i+1}}
\frac{dL^{-1}(v)}{dv}\,\bm{\epsilon}_\theta(v)\,dv$.

\vspace{4pt}
\noindent Eq.~\ref{eq:so-corr} defines the second-order corrector
\begin{equation*}
\vx_{t_{i+1}}^{\mathrm{2nd\text{-}corr.}} =
A\vx_{t_i} \;+\; B\Big[I'_u \;+\; I'_v\Big],
\end{equation*}
where $I'_u=\vx_\theta(u_i)\,\Delta L^{-1}(u_i)+\frac{\Delta\vx_\theta(u_i)}{2}\,\big(\Delta L^{-1}(u_i)+K(\Delta u_i;\kappa_u)\big)$ and $I'_v=\bm{\epsilon}_\theta(v_i)\,\Delta L^{-1}(v_i)+\frac{\Delta\bm{\epsilon}_\theta(v_i)}{2}\,\big(\Delta L^{-1}(v_i)+K(\Delta v_i;\kappa_v)\big)$.

The accuracy can be obtained by taking a Taylor expansion of $I_u$ and estimating the order of $I_u - I'_u$.

\paragraph{Expansion of $I_u$}
\begin{equation*}
\begin{aligned}
&I_u
=\int_{u_i}^{u_{i+1}}\frac{dL^{-1}(u)}{du}\vx_\theta(u)du \\
&=\vx_\theta(u_i)\Delta L^{-1}(u_i)
+\int_{u_i}^{u_{i+1}}\frac{dL^{-1}(u)}{du}\big(\vx_\theta(u)-\vx_\theta(u_i)\big)du \\
&=\vx_\theta(u_i)\Delta L^{-1}(u_i)
+\vx'_\theta(u_i)\int_{u_i}^{u_{i+1}}(u-u_i)\frac{dL^{-1}(u)}{du}du\;+\;\mathcal{O}\big((\Delta u_i)^3\big)\\
&=\vx_\theta(u_i)\Delta L^{-1}(u_i)
+\frac12\vx'_\theta(u_i)\Delta u_i\Delta L^{-1}(u_i)\;+\;\mathcal{O}\big((\Delta u_i)^3\big) \\
&=\vx_\theta(u_i)\Delta L^{-1}(u_i)
+\frac12\Delta \vx_\theta(u_i)\Delta L^{-1}(u_i)+\mathcal{O}\big((\Delta u_i)^3\big).
\end{aligned}
\end{equation*}

\paragraph{Order estimate of $I_u - I'_u$}
\[
I_u - I'_u =
\mathcal{O}\!\big((\Delta u_i)^3\big) \;-\; \frac12\,\Delta \vx_\theta(u_i)\,K(\Delta u_i;\kappa_u).
\]
Since $\Delta \vx_\theta(u_i)=\mathcal{O}(\Delta u_i)$ and $K(\Delta u_i;\kappa_u)=\mathcal{O}((\Delta u_i)^2)$, it follows that
\[
\Delta \vx_\theta(u_i)\,K(\Delta u_i;\kappa_u)=\mathcal{O}\big((\Delta u_i)^3\big).
\]
Therefore,
\begin{equation*}
I_u - I'_u = \mathcal{O}\!\big((\Delta u_i)^3\big).
\end{equation*}

The bound $I_v - I'_v = \mathcal{O}\big((\Delta v_i)^3\big)$ can be derived in the same way.

\paragraph{Conclusion}
\begin{align*}
\vx_{t_{i+1}}^{\text{exact}} - \vx_{t_{i+1}}^{\text{2nd-corr.}}
&= B\big[(I_u-I'_u)+(I_v-I'_v)\big] \\
&= B\Big[\mathcal{O}((\Delta u_i)^3)+\mathcal{O}((\Delta v_i)^3)\Big] \\
&= \mathcal{O}\big((\Delta u_i)^3+(\Delta v_i)^3\big).
\end{align*}
\end{proof}

%% file: appendix/experiment_details/main.tex
\section{Experimental Details}
\label{app:experiment_details}
\input{appendix/experiment_details/setup}
\input{appendix/experiment_details/results}

%% file: appendix/experiment_details/setup.tex
\subsection{Setup}

\paragraph{Environment details.}
We run all experiments on a single NVIDIA RTX 6000 pro (Driver~575.57.08) under Ubuntu~24.04 with Python~3.11.13, PyTorch~2.8.0, and CUDA~12.9.

\paragraph{Backbone Details.}
We evaluate DiT-XL/2~\citep{peebles2023scalable}, GM-DiT~\citep{chen2025gaussian}, SANA~\citep{xie2024sana}, and PixArt-$\alpha$~\citep{chen2023pixart}.
All models are obtained via the diffusers library~\citep{von-platen-etal-2022-diffusers} pipelines and run in evaluation mode with bfloat16.
The model identifiers are:
\begin{itemize}\setlength{\itemsep}{2pt}
  \item DiT: \texttt{facebook/DiT-XL-2-256}
  \item GM-DiT: \texttt{Lakonik/gmflow\_imagenet\_k8\_ema}
  \item SANA: \texttt{Efficient-Large-Model/Sana\_600M\_512px\_diffusers}
  \item PixArt-$\alpha$: \texttt{PixArt-alpha/PixArt-XL-2-512x512}
\end{itemize}

\paragraph{Solver Details.}
The solvers used in our experiments are the diffusion-dedicated DDIM~\citep{song2020denoising}, the second-order multistep DPM-Solver++~\citep{lu2022dpmpp}, and the learned solvers BNS-Solver~\citep{shaul2024bespoke} and DS-Solver~\citep{wang2025differentiable}.
We use DDIM as the first-order counterpart of DPM-Solver++ (as proposed in \citet{lu2022dpm}).
Implementations of DPM-Solver++ and DS-Solver are taken from their official GitHub repositories\footnote{\url{https://github.com/LuChengTHU/dpm-solver}}$^{,}$\footnote{\url{https://github.com/MCG-NJU/NeuralSolver}}, and BNS-Solver is implemented according to the paper.

\paragraph{Learning Details.}
We train with AdamW~\citep{loshchilov2017decoupled} using $\beta=(0.9,0.999)$, $\epsilon=1\mathrm{e}{-}8$, and weight decay $0.01$. The learning rate decays from $2\times10^{-3}$ to $1\times10^{-4}$ over $20\mathrm{k}$ steps via cosine annealing~\citep{loshchilov2016sgdr}. The batch size is fixed to $10$ for all experiments. For regression-based learning, the teacher trajectory is a $200$-step DDIM~\citep{song2020denoising} method on $1\mathrm{k}$ samples.

\paragraph{Sampling Details.}
We evaluate all backbones on an NFE grid of ${3,4,5,6,7,8,9}$.
For classifier-free guidance (CFG; \citealp{ho2021classifier}), we use backbone-specific fixed scales: DiT~$1.5$, GM\mbox{-}DiT~$1.4$, SANA~$4.5$, and PixArt\mbox{-}$\alpha$~$3.5$.
We compute FID using the publicly released ImageNet training-set statistics~\citep{dhariwal2021diffusion} for ImageNet, and MSCOCO~2014 validation-set statistics for text-to-image (SANA, PixArt-$\alpha$).
CLIP scores are computed with the official RN101 variant~\citep{radford2021learning}.
We generate 50k images for ImageNet and 30k images for text-to-image evaluation.

%% file: appendix/experiment_details/results.tex
\subsection{Quantitative Results}

\begin{table}[h]
\centering
\caption{\textbf{DiT}~\citep{peebles2023scalable}: FID (↓) vs. NFE. ImageNet generation, 50k samples.}
\label{tab:dit_fid}
{\setlength{\tabcolsep}{5pt}\renewcommand{\arraystretch}{1.15}
\begin{tabular}{lccccccc}
\toprule
\textbf{Method} & \textbf{3} & \textbf{4} & \textbf{5} & \textbf{6} & \textbf{7} & \textbf{8} & \textbf{9} \\
\midrule
DDIM~\citep{song2020denoising}           &89.33&56.33&32.91&20.06&13.64&9.55&7.42\\
DPM-Solver++\citep{lu2022dpmpp}    &88.46&47.64&22.19&11.49&7.06&5.19&4.43\\
BNS-Solver\citep{shaul2024bespoke}      &103.26&38.20&14.53&6.37&4.25&3.37&3.05\\
DS-Solver\citep{wang2025differentiable}       &67.31&17.31&7.66&5.46&3.79&3.08&3.02\\
\midrule
Dual-Solver (Ours) &\textbf{24.91}&\textbf{6.05}&\textbf{3.52}&\textbf{3.13}&\textbf{2.75}&\textbf{2.60}&\textbf{2.67}\\
\bottomrule
\end{tabular}
}
\end{table}

\begin{table}[h]
\centering
\caption{\textbf{GM-DiT}~\cite{chen2025gaussian}: FID (↓) vs. NFE. ImageNet generation, 50k samples.}
\label{tab:gmdit_fid}
{\setlength{\tabcolsep}{5pt}\renewcommand{\arraystretch}{1.15}
\begin{tabular}{lccccccc}
\toprule
\textbf{Method} & \textbf{3} & \textbf{4} & \textbf{5} & \textbf{6} & \textbf{7} & \textbf{8} & \textbf{9} \\
\midrule
DDIM~\citep{song2020denoising}           &70.15&35.98&19.70&12.55&9.03&7.01&5.78\\
DPM-Solver++\citep{lu2022dpmpp}    &63.24&19.53&7.85&4.74&3.65&3.15&2.89 \\
BNS-Solver\citep{shaul2024bespoke}   &57.88&26.64&10.31&5.02&3.43&2.66&2.44\\
DS-Solver\citep{wang2025differentiable}    &34.15&11.60&5.64&3.49&\textbf{2.70}&2.48&2.41\\
\midrule
Dual-Solver-R (Ours) &45.53&14.43&7.49&3.75&2.77&\textbf{2.44}&\textbf{2.32}\\
Dual-Solver-C (Ours) &\textbf{6.81}&\textbf{3.76}&\textbf{3.09}&\textbf{2.97}&2.77&2.70&2.60\\
\bottomrule
\end{tabular}
}
\smallskip\\
\small R = trajectory regression-based; C = classification-based.
\end{table}

\begin{table}[h]
\centering
\caption{\textbf{PixArt-$\alpha$}~\citep{chen2023pixart}: FID (↓) and CLIP score (↑) vs. NFE.
Text-to-image on MSCOCO 2014~\citep{lin2014microsoft} with 30k samples.}
\label{tab:pixart_fid}
\begin{adjustbox}{max width=\linewidth}
{\setlength{\tabcolsep}{6pt}\renewcommand{\arraystretch}{1.15}
\begin{tabular}{lcccccccc}
\toprule
\textbf{Method} & \textbf{3} & \textbf{4} & \textbf{5} & \textbf{6} & \textbf{7} & \textbf{8} & \textbf{9} \\
\midrule
\multicolumn{8}{c}{\textbf{FID (↓)}} \\
\midrule
DDIM~\citep{song2020denoising}
& 71.37 & 45.21 & 35.12 & 30.92 & 28.60 & 27.39 & 26.58 \\
DPM-Solver++~\citep{lu2022dpmpp}
& 76.01 & 41.77 & 31.48 & 27.83 & 26.56 & 25.82 & 25.48 \\
BNS-Solver~\citep{shaul2024bespoke}
& 125.65 & 66.94 & 41.31 & 32.55 & 28.55 & 25.18 & 24.15 \\
DS-Solver~\citep{wang2025differentiable}
& 118.01 & 66.09 & 43.62 & 29.74 & 28.25 & 26.57 & 25.22 \\
\midrule
Dual-Solver (Ours)
& \textbf{66.61} & \textbf{31.61} & \textbf{24.68} & \textbf{22.39} & \textbf{22.51} & \textbf{21.96} & \textbf{22.01} \\
\midrule
\multicolumn{8}{c}{\textbf{CLIP (RN101, ↑)}} \\
\midrule
DDIM~\citep{song2020denoising}
& 0.4469 & 0.4670 & 0.4739 & 0.4764 & 0.4763 & 0.4778 & 0.4779 \\
DPM-Solver++~\citep{lu2022dpmpp}
& 0.4422 & 0.4676 & 0.4746 & 0.4763 & 0.4768 & 0.4771 & 0.4771 \\
BNS-Solver~\citep{shaul2024bespoke}
& 0.4320 & 0.4582 & 0.4694 & 0.4733 & 0.4746 & 0.4755 & 0.4757 \\
DS-Solver~\citep{wang2025differentiable}
& 0.4303 & 0.4568 & 0.4692 & 0.4748 & 0.4754 & 0.4762 & 0.4764 \\
\midrule
Dual-Solver (Ours)
& \textbf{0.4499} & \textbf{0.4732} & \textbf{0.4784} & \textbf{0.4803} & \textbf{0.4806} & \textbf{0.4814} & \textbf{0.4815} \\
\bottomrule
\end{tabular}
}
\end{adjustbox}
\end{table}

\begin{table}[h]
\centering
\caption{\textbf{SANA}~\citep{xie2024sana}: FID (↓) and CLIP score (RN101, ↑) vs. NFE.
Text-to-image on MSCOCO 2014~\citep{lin2014microsoft} with 30k samples.}
\label{tab:sana_fid}
\resizebox{\linewidth}{!}{%
{\setlength{\tabcolsep}{6pt}\renewcommand{\arraystretch}{1.15}
\begin{tabular}{lcccccccc}
\toprule
\textbf{Method} & \textbf{3} & \textbf{4} & \textbf{5} & \textbf{6} & \textbf{7} & \textbf{8} & \textbf{9} \\
\midrule
\multicolumn{8}{c}{\textbf{FID (↓)}} \\
\midrule
DDIM~\citep{song2020denoising}
& 45.05 & 27.72 & 23.93 & 23.06 & 22.99 & 22.96 & 22.97 \\
DPM-Solver++~\citep{lu2022dpmpp}
& 45.33 & 26.12 & 22.56 & 22.48 & 22.79 & 22.90 & 23.11 \\
BNS-Solver~\citep{shaul2024bespoke}
& 48.16 & 26.37 & 21.04 & 20.66 & 20.79 & 21.13 & 21.64 \\
DS-Solver~\citep{wang2025differentiable}
& 48.65 & 29.15 & 21.66 & 20.65 & 21.43 & 21.80 & 22.27 \\
\midrule
Dual-Solver (Ours)
& \textbf{21.79} & \textbf{19.40} & \textbf{18.81} & \textbf{18.52} & \textbf{19.57} & \textbf{19.43} & \textbf{19.77} \\
\midrule
\multicolumn{8}{c}{\textbf{CLIP (↑)}} \\
\midrule
DDIM~\citep{song2020denoising}
& 0.4656 & 0.4782 & 0.4813 & 0.4821 & 0.4824 & 0.4824 & 0.4824 \\
DPM-Solver++~\citep{lu2022dpmpp}
& 0.4652 & 0.4779 & 0.4813 & 0.4821 & 0.4822 & 0.4820 & 0.4820 \\
BNS-Solver~\citep{shaul2024bespoke}
& 0.4651 & 0.4772 & 0.4800 & 0.4807 & 0.4808 & 0.4809 & 0.4808 \\
DS-Solver~\citep{wang2025differentiable}
& 0.4635 & 0.4765 & 0.4808 & 0.4816 & 0.4818 & 0.4818 & 0.4816 \\
\midrule
Dual-Solver (Ours)
& \textbf{0.4795} & \textbf{0.4801} & \textbf{0.4821} & \textbf{0.4837} & \textbf{0.4843} & \textbf{0.4844} & \textbf{0.4849} \\
\bottomrule
\end{tabular}
}}
\end{table}

%% file: appendix/accuracy/main.tex
\section{Classifier Accuracy vs. Sample Quality}
\label{app:accuracy_vs_fid}
As examined in Sec.~\ref{sec:ablation_study}, we study the relationship between classifier accuracy and FID. In this section, Table~\ref{tab:acc-fid-gflops-weights} provides detailed numerical results, and Fig.~\ref{fig:precision_recall} analyzes the trend from the perspectives of precision and recall~\citep{kynkaanniemi2019improved}. Table~\ref{tab:acc-fid-gflops-weights} reports the pretrained weights from TorchVision~\citep{torchvision2016} and the results of training Dual-Solver with each set of weights. Using a GM-DiT~\citep{chen2025gaussian} backbone, we present FID, precision, and recall at NFE$=3$ and $9$. The best value for each metric is highlighted in bold.

\begin{table}[h]
\centering
\caption{\textbf{ImageNet per-classifier metrics.} Top-5 accuracy, FID, precision/recall at NFE=3 and 9, and GFLOPs; FID is measured on 10k samples after training Dual-Solver for each classifier.}
\label{tab:acc-fid-gflops-weights}
\resizebox{\textwidth}{!}{%
\begin{tabular}{lrrrrrrrr}
\toprule
\textbf{Weights\tablefootnote{\url{https://docs.pytorch.org/vision/main/models.html}}}
& \textbf{Top-5 Acc. (\%)} & \textbf{FID@3} & \textbf{FID@9}
& \textbf{Precision@3} & \textbf{Precision@9}
& \textbf{Recall@3} & \textbf{Recall@9}
& \textbf{GFLOPS} \\
\midrule
\texttt{ViT\_H\_14\_Weights.IMAGENET1K\_SWAG\_E2E\_V1}         & \textbf{98.694} & 10.71 & 6.72 & 0.8376 & 0.8960 & 0.7420 & 0.7311 & \textbf{1016.72} \\
\texttt{RegNet\_Y\_128GF\_Weights.IMAGENET1K\_SWAG\_LINEAR\_V1} & 97.844 & 12.99 & 5.61 & 0.8230 & 0.8996 & 0.7327 & 0.7336 & 127.52 \\
\texttt{RegNet\_Y\_16GF\_Weights.IMAGENET1K\_SWAG\_LINEAR\_V1}  & 97.244 &  9.97 & 5.32 & \textbf{0.8573} & 0.9007 & 0.7392 & 0.7467 &  15.91 \\
\texttt{ConvNeXt\_Base\_Weights.IMAGENET1K\_V1}                & 96.870 & 10.28 & 5.78 & 0.8529 & 0.9040 & 0.7359 & 0.7327 &  15.36 \\
\texttt{EfficientNet\_B5\_Weights.IMAGENET1K\_V1}              & 96.628 & 11.53 & 6.01 & 0.8379 & 0.9048 & 0.7393 & 0.7310 &  10.27 \\
\texttt{RegNet\_Y\_16GF\_Weights.IMAGENET1K\_V2}               & 96.328 &  9.63 & 5.21 & 0.8559 & 0.9061 & 0.7396 & 0.7455 &  15.91 \\
\texttt{Swin\~V2\_T\_Weights.IMAGENET1K\_V1}                   & 96.132 &  9.73 & 5.34 & 0.8555 & 0.9078 & 0.7439 & 0.7410 &   5.94 \\
\texttt{Swin\_T\_Weights.IMAGENET1K\_V1}                       & 95.776 &  9.65 & 5.29 & 0.8559 & 0.9055 & 0.7448 & 0.7425 &   4.49 \\
\texttt{RegNet\_Y\_32GF\_Weights.IMAGENET1K\_V1}               & 95.340 &  9.55 & 5.04 & 0.8520 & 0.9043 & 0.7464 & 0.7455 &  32.28 \\
\texttt{RegNet\_Y\_8GF\_Weights.IMAGENET1K\_V1}                & 95.048 &  9.54 & 5.10 & 0.8559 & 0.9074 & 0.7438 & 0.7414 &   8.47 \\
\texttt{RegNet\_Y\_3\_2GF\_Weights.IMAGENET1K\_V1}             & 94.576 &  9.50 & 5.01 & 0.8564 & 0.9065 & 0.7448 & 0.7436 &   3.18 \\
\texttt{ResNet152\_Weights.IMAGENET1K\_V1}                     & 94.046 &  9.49 & 5.03 & 0.8537 & 0.9089 & 0.7493 & 0.7459 &  11.51 \\
\texttt{ResNet101\_Weights.IMAGENET1K\_V1}                     & 93.546 &  9.59 & 4.98 & 0.8529 & 0.9067 & 0.7459 & 0.7454 &   7.80 \\
\texttt{RegNet\_Y\_800MF\_Weights.IMAGENET1K\_V1}              & 93.136 &  \textbf{9.40} & 4.99 & 0.8532 & 0.9081 & \textbf{0.7508} & 0.7437 &   0.83 \\
\texttt{MobileNet\_V3\_Large\_Weights.IMAGENET1K\_V2}          & 92.566 &  9.44 & \textbf{4.87} & 0.8523 & 0.9062 & 0.7467 & 0.7475 &   0.22 \\
\texttt{RegNet\_Y\_400MF\_Weights.IMAGENET1K\_V1}              & 91.716 &  9.50 & \textbf{4.87} & 0.8560 & 0.9082 & 0.7470 & \textbf{0.7489} &   0.40 \\
\texttt{RegNet\_X\_400MF\_Weights.IMAGENET1K\_V1}              & 90.950 &  9.60 & 4.93 & 0.8553 & 0.9091 & 0.7448 & 0.7410 &   0.41 \\
\texttt{MobileNet\_V2\_Weights.IMAGENET1K\_V1}                 & 90.286 &  9.52 & 5.03 & 0.8541 & 0.9118 & 0.7471 & 0.7331 &   0.30 \\
\texttt{ShuffleNet\_V2\_X1\_0\_Weights.IMAGENET1K\_V1}         & 88.316 & 11.57 & 5.51 & 0.8368 & 0.9017 & 0.7388 & 0.7436 &   0.14 \\
\texttt{AlexNet\_Weights.IMAGENET1K\_V1}                       & 79.066 & 11.20 & 6.22 & 0.8300 & \textbf{0.9163} & 0.7455 & 0.7187 &   0.71 \\
\bottomrule
\end{tabular}%
}
\end{table}


\begin{figure*}[h]
  \centering
  \begin{subfigure}[t]{0.45\textwidth}
    \centering
    \includegraphics[width=\linewidth]{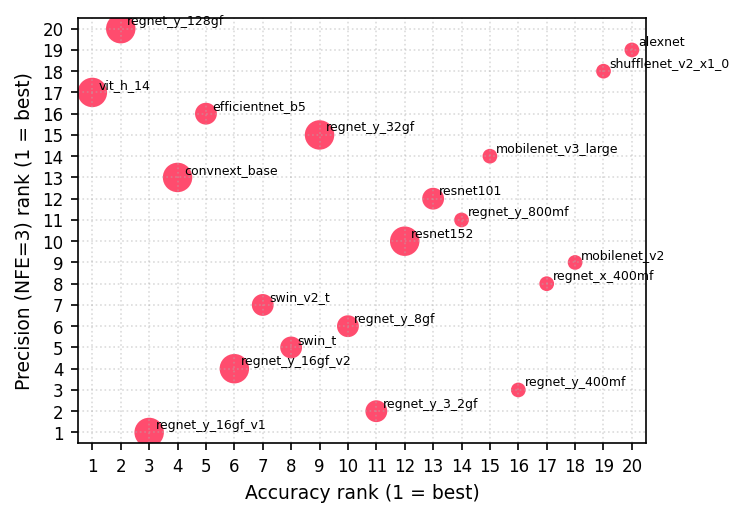}
    \caption{Accuracy vs.\ Precision @ NFE$=3$}
    \label{fig:acc_prec_nfe3}
  \end{subfigure}\hfill
  \begin{subfigure}[t]{0.45\textwidth}
    \centering
    \includegraphics[width=\linewidth]{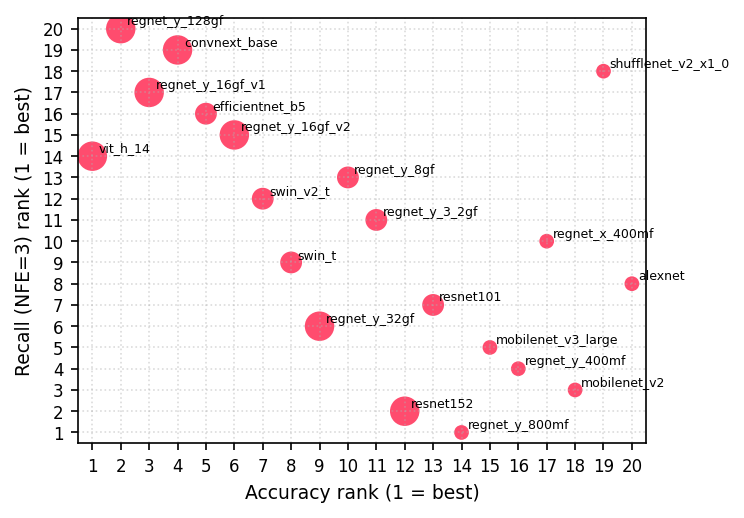}
    \caption{Accuracy vs.\ Recall @ NFE$=3$}
    \label{fig:acc_rec_nfe3}
  \end{subfigure}

  \vspace{0.6\baselineskip}

  \begin{subfigure}[t]{0.45\textwidth}
    \centering
    \includegraphics[width=\linewidth]{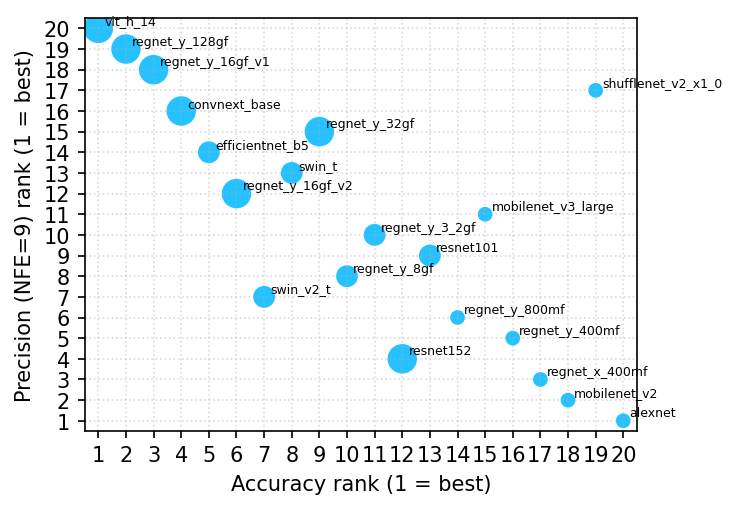}
    \caption{Accuracy vs.\ Precision @ NFE$=9$}
    \label{fig:acc_prec_nfe9}
  \end{subfigure}\hfill
  \begin{subfigure}[t]{0.45\textwidth}
    \centering
    \includegraphics[width=\linewidth]{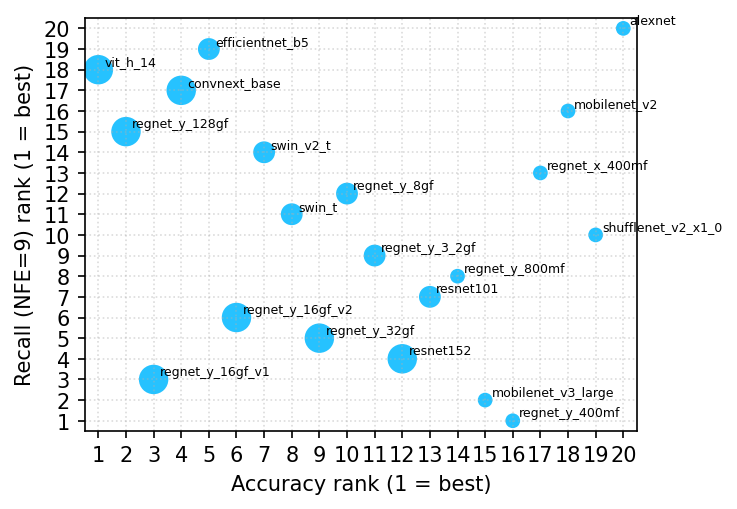}
    \caption{Accuracy vs.\ Recall @ NFE$=9$}
    \label{fig:acc_rec_nfe9}
  \end{subfigure}

  \caption{\textbf{Accuracy vs. precision/recall.} We select 20 TorchVision classifiers sorted by accuracy, learn the Dual-Solver for each, and report precision/recall at NFE=3 and 9 on 10k samples.}
  \label{fig:precision_recall}
\end{figure*}

\paragraph{Classifier accuracy vs.\ precision and recall.}
\noindent Fig.~\ref{fig:precision_recall} (a) and (b) plot precision and recall versus classifier accuracy at NFE$=3$, respectively. The results show little relationship between accuracy and precision, while accuracy versus recall follows the V-shape seen in Fig.~\ref{fig:accuracy}. In other words, neither very high nor very low accuracy helps recall; a moderate level yields higher recall. The pattern differs at NFE$=9$. Fig.~\ref{fig:precision_recall} (c) and (d) plot precision and recall versus classifier accuracy at NFE$=9$. Unlike the NFE$=3$ case, precision exhibits a strong negative correlation with accuracy, and accuracy appears largely unrelated to recall.

\paragraph{OpenCLIP accuracy vs. FID.}
Table~\ref{ta:mscoco_accuracy} reports the FID evaluation used to select the CLIP model for learning Dual-Solver on the text-to-image task.
All weights are from OpenCLIP~\citep{ilharco_gabriel_2021_5143773} and are available from the official repository\footnote{\url{https://github.com/mlfoundations/open_clip}}.
Using the SANA~\citep{xie2024sana} backbone we trained the Dual-Solver for 20k steps. At NFE = 3 and 6, we generated 30k samples with MSCOCO 2014~\citep{lin2014microsoft} prompts and measured FID on the evaluation set. Based on these results, we chose the RN101 weights—which achieved an FID of 18.52 at NFE = 6—as the model for learning the Dual-Solver in the main text-to-image experiment. Notably, this result also indicates that models with somewhat lower classification accuracy can yield lower FID.

\begin{table}[h]
\centering
\caption{\textbf{OpenCLIP per-classifier metrics.} MSCOCO accuracy, FID at NFE = 3 and 6, and GFLOPs; FID is measured on 30k samples after learning Dual-Solver for each classifier.}
\label{ta:mscoco_accuracy}
\label{tab:clip-coco-fid-flops}
\small
\resizebox{\textwidth}{!}{%
\begin{tabular}{lrrrr}
\toprule
\textbf{Weights} & \textbf{MSCOCO Acc. (\%)} & \textbf{FID@3} & \textbf{FID@6} & \textbf{GFLOPs} \\
\midrule
\texttt{ViT-H-14-378-quickgelu, dfn5b}                                 & \textbf{63.76} & 23.98 & 23.28 & \textbf{1054.05} \\
\texttt{coca\_ViT-L-14, mscoco\_finetuned\_laion2b\_s13b\_b90k}        & 60.28 & 23.09 & 21.22 & 214.52 \\
\texttt{EVA02-E-14, laion2b\_s4b\_b115k}                                & 58.92 & 22.41 & 21.12 & 1007.93 \\
\texttt{convnext\_xxlarge, laion2b\_s34b\_b82k\_augreg}                 & 58.34 & 21.05 & 20.86 & 800.88 \\
\texttt{ViT-B-16-SigLIP-256, webli}                                    & 57.24 & 21.03 & 19.87 & 57.84 \\
\texttt{EVA02-L-14-336, merged2b\_s6b\_b61k}                            & 56.05 & 23.15 & 23.28 & 167.50 \\
\texttt{ViT-L-14, commonpool\_xl\_laion\_s13b\_b90k}                    & 55.13 & 23.46 & 22.81 & 175.33 \\
\texttt{convnext\_base\_w, laion\_aesthetic\_s13b\_b82k}                & 52.38 & 20.97 & 19.86 & 49.38 \\
\texttt{convnext\_base\_w\_320, laion\_aesthetic\_s13b\_b82k\_augreg}   & 51.42 & \textbf{20.69} & 20.26 & 175.33 \\
\texttt{ViT-B-16-plus-240, laion400m\_e32}                              & 49.79 & 21.66 & 21.18 & 64.03 \\
\texttt{ViT-B-32, laion2b\_e16}                                        & 47.68 & 23.75 & 23.49 & 14.78 \\
\texttt{ViT-B-32-quickgelu, metaclip\_fullcc}                           & 46.62 & 22.46 & 21.42 & 14.78 \\
\texttt{RN50x16, openai}                                                & 45.38 & 22.49 & 21.36 & 33.34 \\
\texttt{ViT-B-32, laion400m\_e31}                                       & 43.27 & 22.40 & 21.70 & 14.78 \\
\texttt{RN101, openai}                                                  & 40.25 & 21.78 & \textbf{18.52} & 25.50 \\
\texttt{ViT-B-16, commonpool\_l\_text\_s1b\_b8k}                        & 37.30 & 23.50 & 23.54 & 41.09 \\
\texttt{ViT-B-16, commonpool\_l\_s1b\_b8k}                              & 28.55 & 22.94 & 24.73 & 41.09 \\
\texttt{ViT-B-32, commonpool\_m\_text\_s128m\_b4k}                      & 14.52 & 22.39 & 22.55 & 14.78 \\
\texttt{ViT-B-32, commonpool\_s\_clip\_s13m\_b4k}                       &  2.24 & 22.32 & 21.31 & 14.78 \\
\texttt{coca\_ViT-B-32, mscoco\_finetuned\_laion2b\_s13b\_b90k}         &  0.60 & 23.76 & 23.14 & 33.34 \\
\bottomrule
\end{tabular}%
}
\end{table}



%% file: appendix/learned_parameters/main.tex
\section{Learned Parameters}
\label{sec:learned_parameters}
In Figs.~\ref{fig:learned_dit},~\ref{fig:learned_gmdit},~\ref{fig:learned_sana},~\ref{fig:learned_pixart} we plot the Dual-Solver parameters learned via classification-based learning (Sec.~\ref{sec:classifier_based_parameter_learning}), as defined in Sec.~\ref{sec:learning_parameters}.
For the DiT, GM-DiT, SANA, and PixArt-$\alpha$ backbones, we set CFG to 1.5, 1.4, 4.5, and 3.5, respectively, and train for 20k steps.
We plot results at NFE = 3, 5, 7, and 9 for DiT and GM-DiT, and at NFE = 3, 4, 5, and 6 for SANA and PixArt-$\alpha$.
Separate parameter sets are learned for the predictor and the corrector, and they are shown in different colors.
Within the same backbone, the parameter curves are similar even across different NFEs. We conjecture that this arises from the backbone’s intrinsic trajectory.

\input{appendix/learned_parameters/dit/main}
\input{appendix/learned_parameters/gmdit/main}
\input{appendix/learned_parameters/sana/main}
\input{appendix/learned_parameters/pixart/main}

%% file: appendix/learned_parameters/dit/main.tex

\begin{figure*}[h]
  \centering
  \begin{subfigure}{\linewidth}
    \centering
    \includegraphics[width=\linewidth]{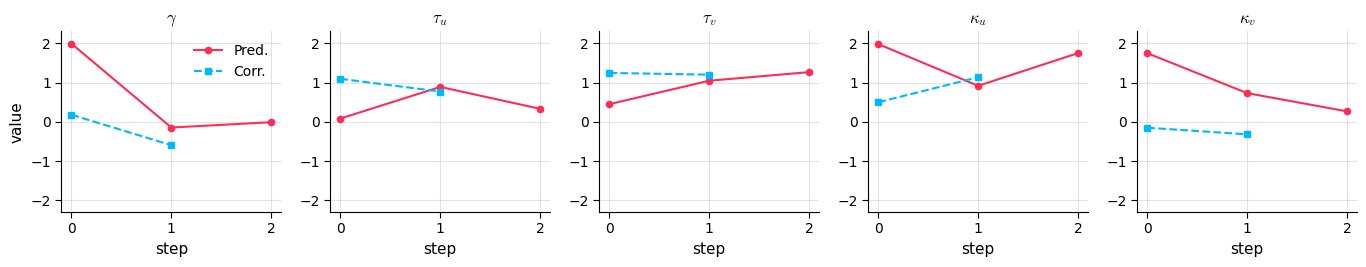}%
    \caption{NFE = 3}
    \label{fig:params_s3}
  \end{subfigure}
  \vspace{0.6em}

  \begin{subfigure}{\linewidth}
    \centering
    \includegraphics[width=\linewidth]{appendix/learned_parameters/dit/s5.png}
    \caption{NFE = 5}
    \label{fig:params_s5}
  \end{subfigure}
  \vspace{0.6em}

  \begin{subfigure}{\linewidth}
    \centering
    \includegraphics[width=\linewidth]{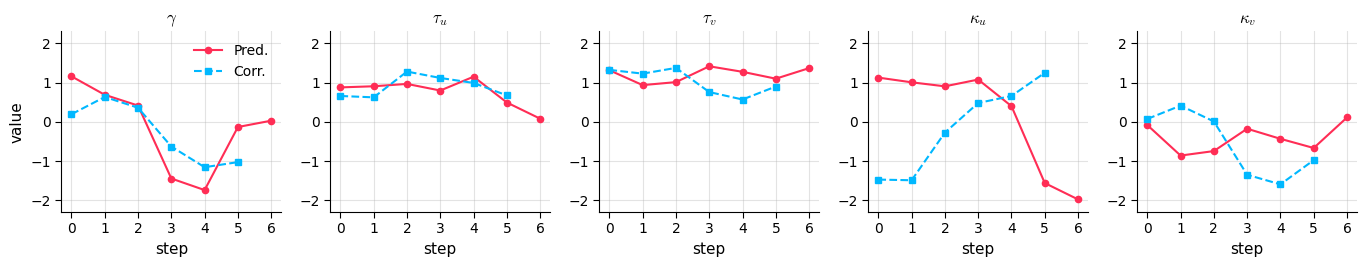}%
    \caption{NFE = 7}
    \label{fig:params_s7}
  \end{subfigure}
  \vspace{0.6em}

  \begin{subfigure}{\linewidth}
    \centering
    \includegraphics[width=\linewidth]{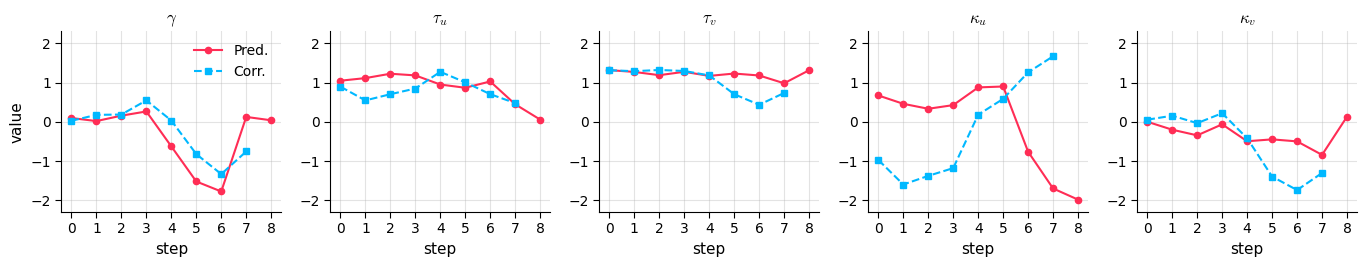}%
    \caption{NFE = 9}
    \label{fig:params_s9}
  \end{subfigure}

  \caption{\textbf{Learned parameters.} $\{\gamma, \tau_u, \tau_v, \kappa_u, \kappa_v\}$ for DiT~\citep{peebles2023scalable}.}
   \label{fig:learned_dit}
\end{figure*}

%% file: appendix/learned_parameters/gmdit/main.tex

\begin{figure*}[h]
  \centering
  \begin{subfigure}{\linewidth}
    \centering
    \includegraphics[width=\linewidth]{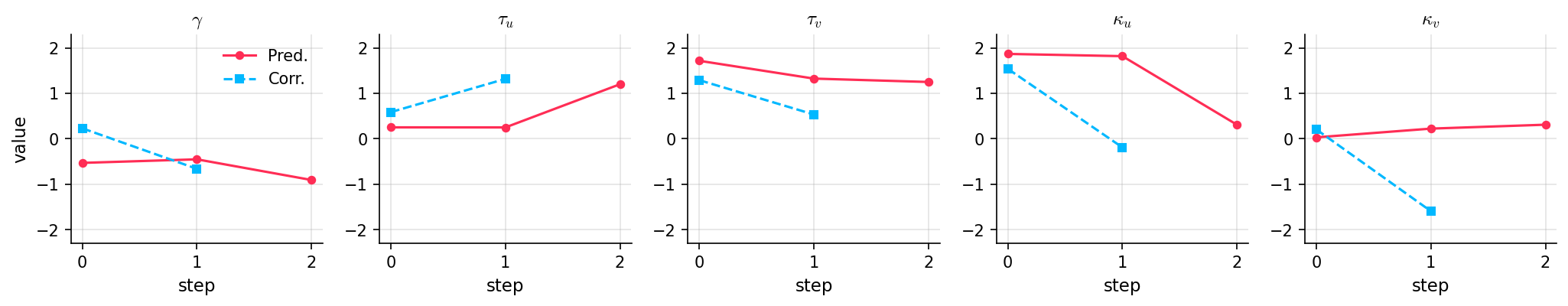}%
    \caption{NFE = 3}
    \label{fig:params_s3}
  \end{subfigure}
  \vspace{0.6em}

  \begin{subfigure}{\linewidth}
    \centering
    \includegraphics[width=\linewidth]{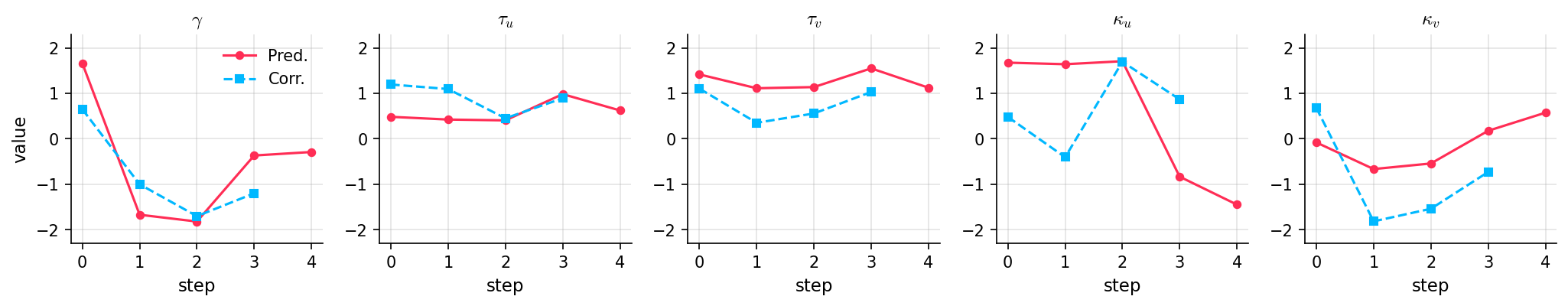}
    \caption{NFE = 5}
    \label{fig:params_s5}
  \end{subfigure}
  \vspace{0.6em}

  \begin{subfigure}{\linewidth}
    \centering
    \includegraphics[width=\linewidth]{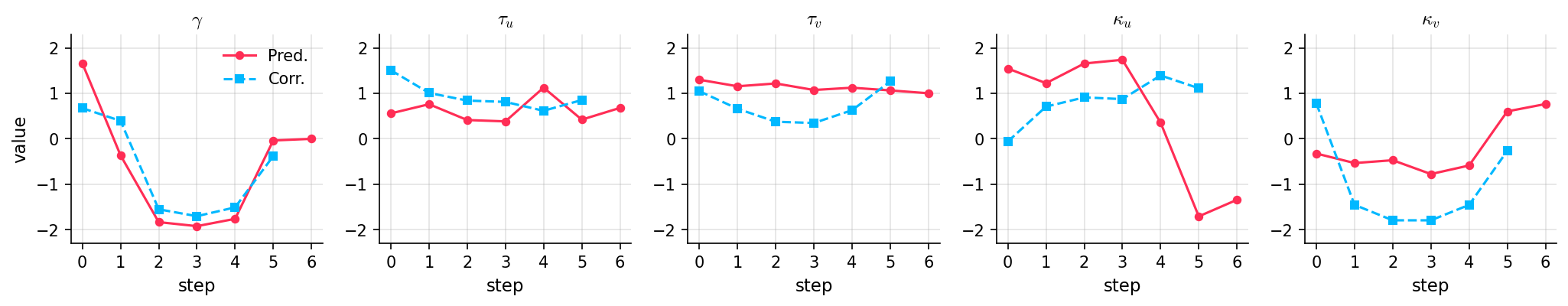}%
    \caption{NFE = 7}
    \label{fig:params_s7}
  \end{subfigure}
  \vspace{0.6em}

  \begin{subfigure}{\linewidth}
    \centering
    \includegraphics[width=\linewidth]{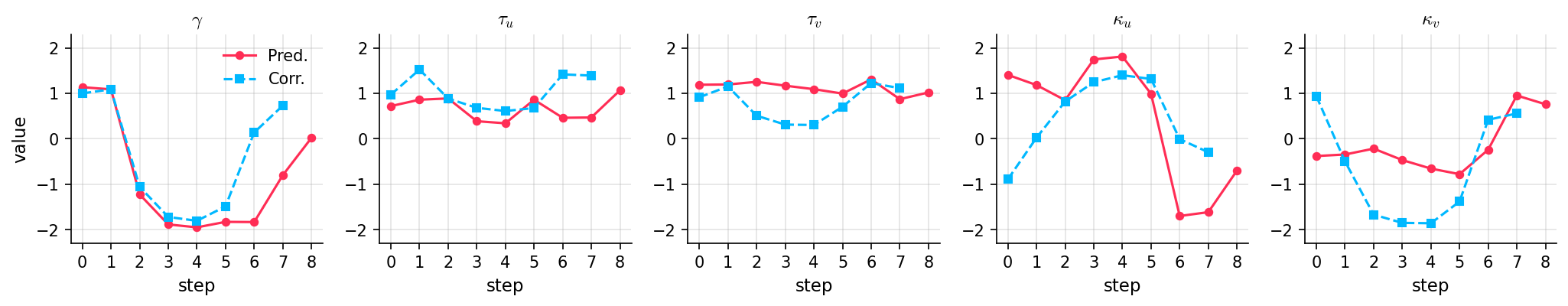}%
    \caption{NFE = 9}
    \label{fig:params_s9}
  \end{subfigure}

  \caption{\textbf{Learned parameters.} $\{\gamma, \tau_u, \tau_v, \kappa_u, \kappa_v\}$ for GM-DiT~\citep{chen2025gaussian}.}
   \label{fig:learned_gmdit}
\end{figure*}

%% file: appendix/learned_parameters/sana/main.tex

\begin{figure*}[h]
  \centering
  \begin{subfigure}{\linewidth}
    \centering
    \includegraphics[width=\linewidth]{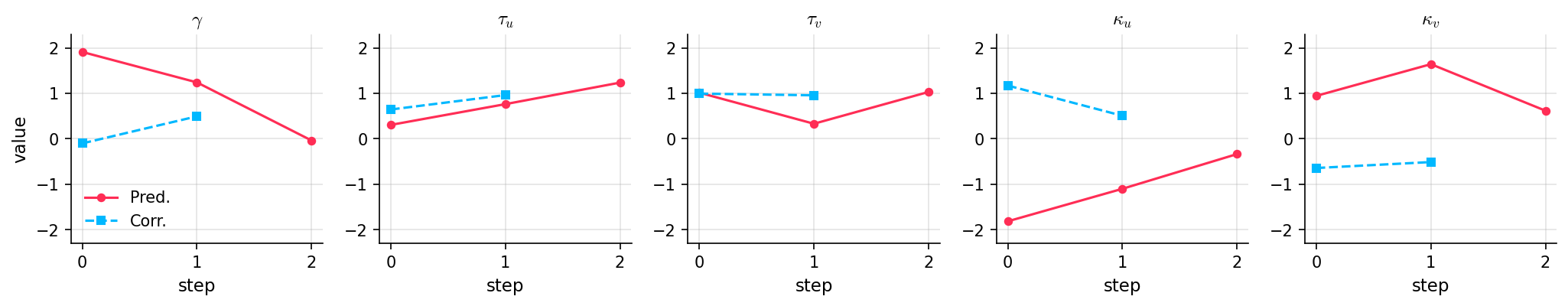}%
    \caption{NFE = 3}
    \label{fig:params_s3}
  \end{subfigure}
  \vspace{0.6em}

  \begin{subfigure}{\linewidth}
    \centering
    \includegraphics[width=\linewidth]{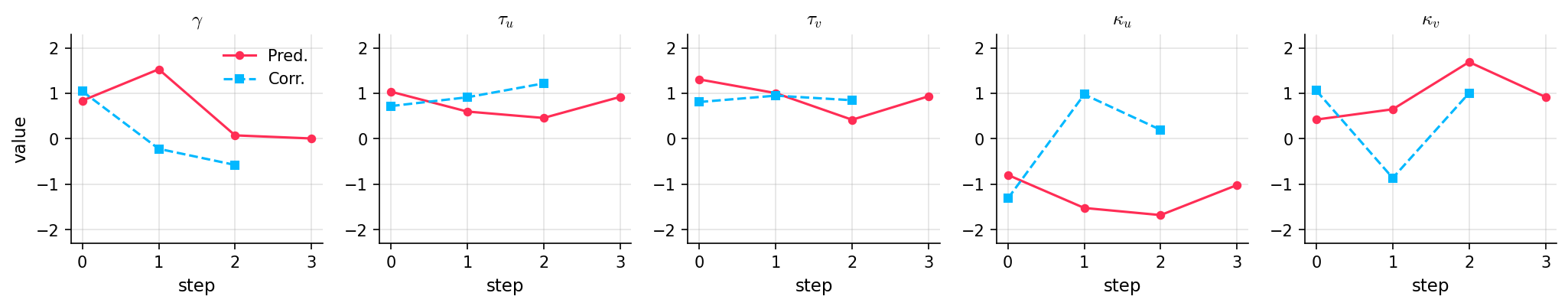}
    \caption{NFE = 4}
    \label{fig:params_s5}
  \end{subfigure}
  \vspace{0.6em}

  \begin{subfigure}{\linewidth}
    \centering
    \includegraphics[width=\linewidth]{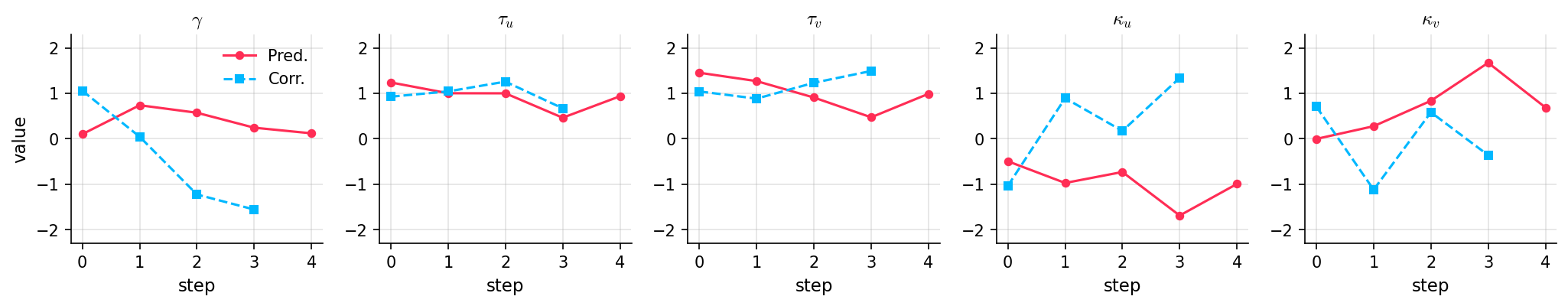}%
    \caption{NFE = 5}
    \label{fig:params_s7}
  \end{subfigure}
  \vspace{0.6em}

  \begin{subfigure}{\linewidth}
    \centering
    \includegraphics[width=\linewidth]{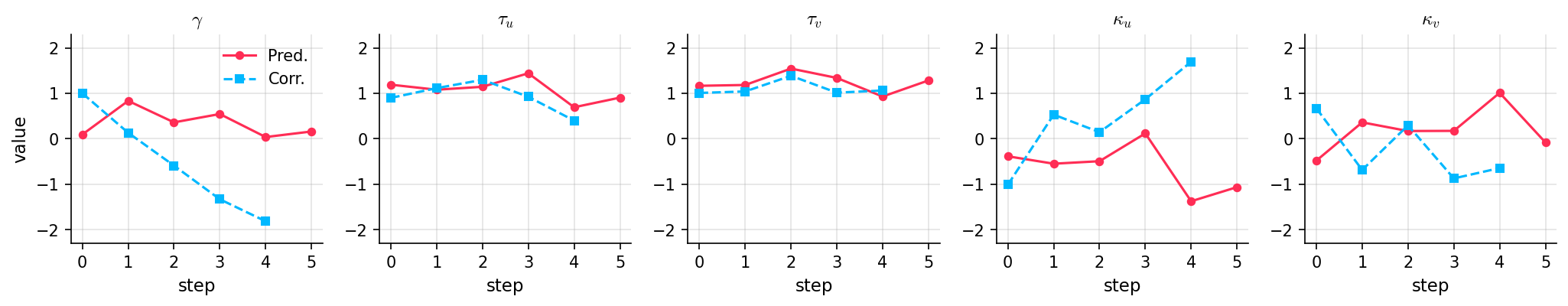}%
    \caption{NFE = 6}
    \label{fig:params_s9}
  \end{subfigure}

  \caption{\textbf{Learned parameters.} $\{\gamma, \tau_u, \tau_v, \kappa_u, \kappa_v\}$ for SANA~\citep{xie2024sana}.}
   \label{fig:learned_sana}
\end{figure*}

%% file: appendix/learned_parameters/pixart/main.tex

\begin{figure*}[h]
  \centering
  \begin{subfigure}{\linewidth}
    \centering
    \includegraphics[width=\linewidth]{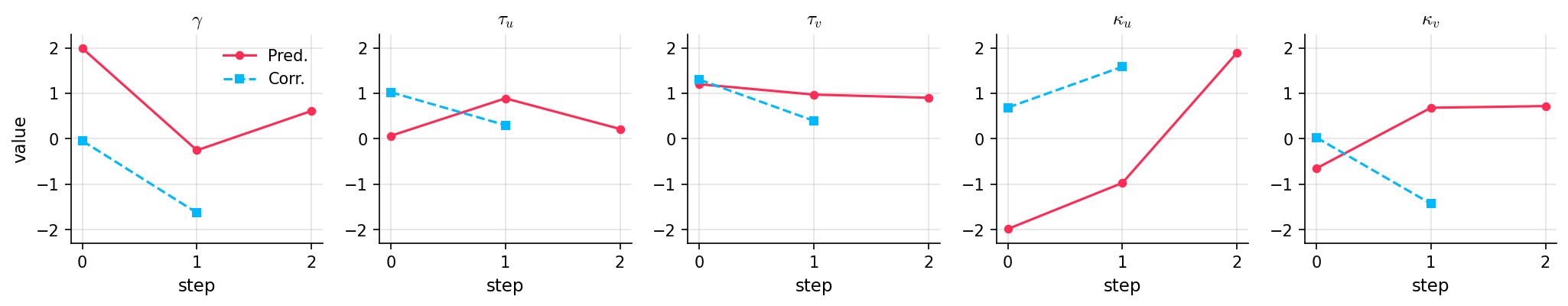}%
    \caption{NFE = 3}
    \label{fig:params_s3}
  \end{subfigure}
  \vspace{0.6em}

  \begin{subfigure}{\linewidth}
    \centering
    \includegraphics[width=\linewidth]{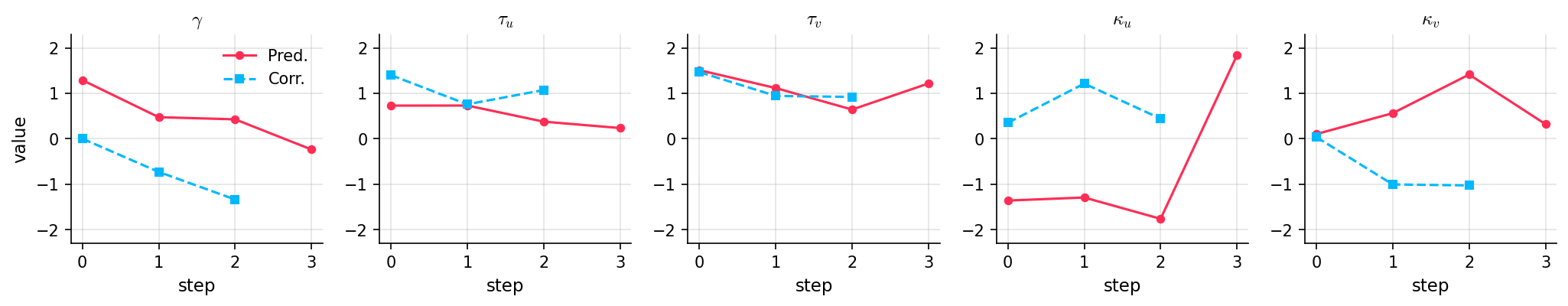}
    \caption{NFE = 4}
    \label{fig:params_s5}
  \end{subfigure}
  \vspace{0.6em}

  \begin{subfigure}{\linewidth}
    \centering
    \includegraphics[width=\linewidth]{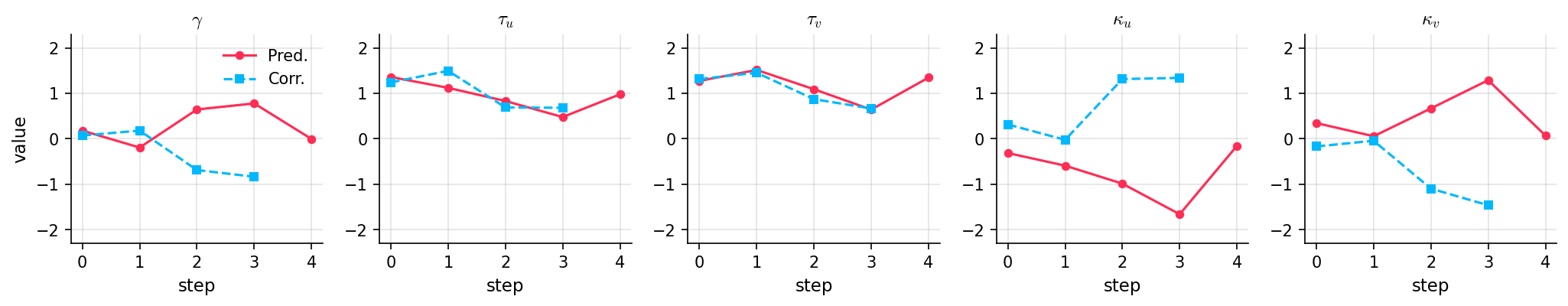}%
    \caption{NFE = 5}
    \label{fig:params_s7}
  \end{subfigure}
  \vspace{0.6em}

  \begin{subfigure}{\linewidth}
    \centering
    \includegraphics[width=\linewidth]{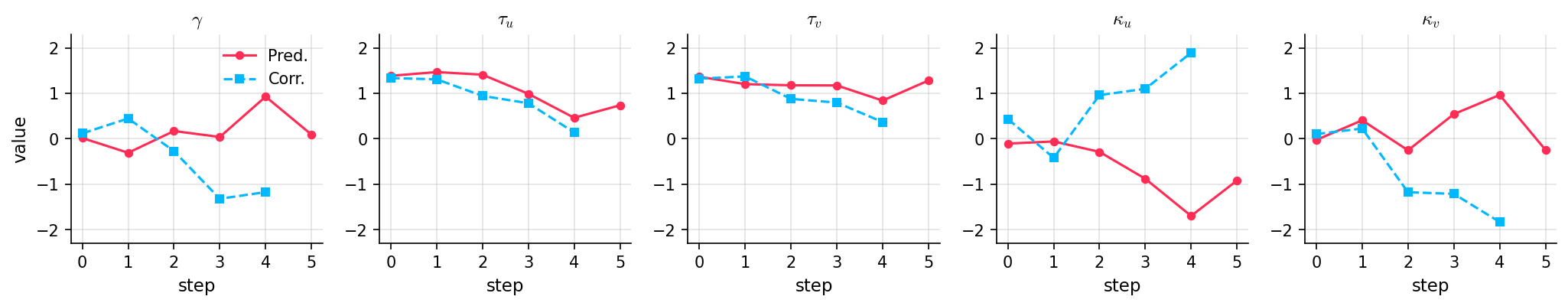}%
    \caption{NFE = 6}
    \label{fig:params_s9}
  \end{subfigure}

  \caption{\textbf{Learned parameters.} $\{\gamma, \tau_u, \tau_v, \kappa_u, \kappa_v\}$ for PixArt-$\alpha$~\citep{chen2023pixart}.}
   \label{fig:learned_pixart}
\end{figure*}

%% file: appendix/parameter_interpolation/main.tex
\section{Details of parameter interpolation across NFEs}
\label{app:parameter_interpolation}
{
\color{sky}
In this section, we describe how to interpolate the parameters discussed in Sec.~\ref{sec:interpolation} so that they can be used at other NFEs.
The parameters of Dual-Solver,
\[
\phi=\{\gamma^{\mathrm{pred}}, \tau_u^{\mathrm{pred}}, \tau_v^{\mathrm{pred}}, \kappa_u^{\mathrm{pred}}, \kappa_v^{\mathrm{pred}}, \gamma^{\mathrm{corr}}, \tau_u^{\mathrm{corr}}, \tau_v^{\mathrm{corr}}, \kappa_u^{\mathrm{corr}}, \kappa_v^{\mathrm{corr}}\},
\]
are given as arrays whose length equals the NFE.
(The corrector parameters have length NFE$-1$, and we match the length to NFE by repeating the last element.)
For example, $\gamma^{\mathrm{pred}} = (\gamma^{\mathrm{pred}}_0,\dots,\gamma^{\mathrm{pred}}_{NFE-1})$.
To interpolate these parameters, we consider the following linear interpolation scheme for a generic array.

\begin{definition}[Linear interpolation]
Let $f^{(M)} = (f^{(M)}_0,\dots,f^{(M)}_{M-1})$ be an array of length $M$.
The linearly interpolated array $\mathrm{Interp}(f^{(M)}; N)$ of length $N$ is defined as follows.
First, set
\[
t_i = \frac{i}{N-1}\,(M-1), \qquad
j_i = \big\lfloor t_i \big\rfloor, \qquad
\alpha_i = t_i - j_i,
\]
and for each $i = 0,\dots,N-1$ define
\[
\mathrm{Interp}(f^{(M)}; N)[i]
= (1-\alpha_i)\, f^{(M)}_{j_i} + \alpha_i\, f^{(M)}_{j_i + 1}.
\]
\end{definition}

\begin{definition}[Averaged linear interpolation]
Let $M < N < L$ be three NFEs, and let
$f^{(M)} \in \mathbb{R}^M$ and $f^{(L)} \in \mathbb{R}^L$ be the corresponding arrays.
We first obtain their linearly interpolated versions of length $N$,
\[
\tilde{f}^{(M)} = \mathrm{Interp}(f^{(M)}; N), \qquad
\tilde{f}^{(L)} = \mathrm{Interp}(f^{(L)}; N).
\]
We then use the relative position of $N$ between $M$ and $L$ as weights and define
\[
w_M = \frac{L - N}{L - M}, \qquad
w_L = \frac{N - M}{L - M},
\]
and, for each $i = 0,\dots,N-1$,
\[
\mathrm{Interp}(f^{(M)}, f^{(L)}; N)[i]
= w_M \,\tilde{f}^{(M)}[i]
+ w_L \,\tilde{f}^{(L)}[i].
\]
\end{definition}

Using this procedure, we obtain the array for an intermediate NFE from the two arrays at neighboring NFEs, and apply the same construction to every parameter in $\phi$.
Examples of interpolated parameters are shown in Fig.~\ref{fig:learned_dit_interpolated}.
Specifically, we obtain the parameters for NFE = 4, 6, and 8 by interpolating those learned at NFE = (3, 5), (5, 7), and (7, 9), respectively.
}
\input{appendix/parameter_interpolation/figure}

%% file: appendix/parameter_interpolation/figure.tex
\begin{figure*}[t]
  \centering

  \begin{adjustbox}{max height=0.50\textheight, keepaspectratio, center}
  \begin{minipage}{\linewidth}

    \begin{subfigure}{\linewidth}
      \centering
      \includegraphics[width=\linewidth]{appendix/learned_parameters/dit/s3.png}%
      \caption{NFE = 3}
      \label{fig:params_s3}
    \end{subfigure}
    \vspace{0.6em}

    \begin{subfigure}{\linewidth}
      \centering
      \includegraphics[width=\linewidth]{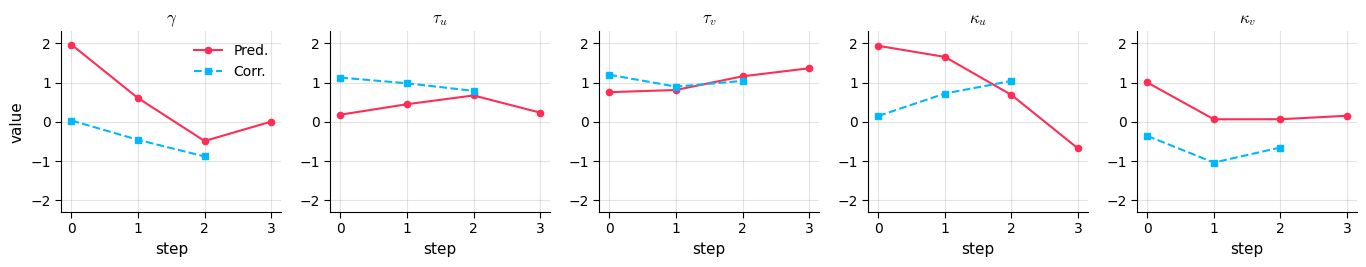}%
      \caption{NFE = 4, interpolated from (3, 5)}
      \label{fig:params_s4}
    \end{subfigure}
    \vspace{0.6em}

    \begin{subfigure}{\linewidth}
      \centering
      \includegraphics[width=\linewidth]{appendix/learned_parameters/dit/s5.png}
      \caption{NFE = 5}
      \label{fig:params_s5}
    \end{subfigure}
    \vspace{0.6em}

    \begin{subfigure}{\linewidth}
      \centering
      \includegraphics[width=\linewidth]{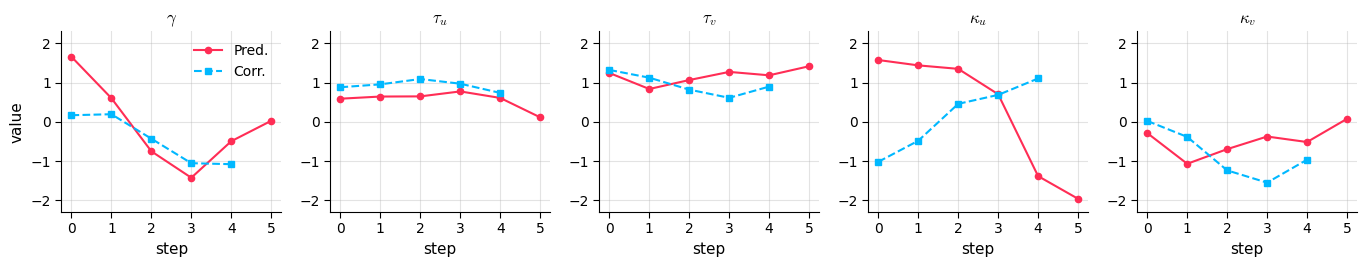}%
      \caption{NFE = 6, interpolated from (5, 7)}
      \label{fig:params_s6}
    \end{subfigure}
    \vspace{0.6em}

    \begin{subfigure}{\linewidth}
      \centering
      \includegraphics[width=\linewidth]{appendix/learned_parameters/dit/s7.png}%
      \caption{NFE = 7}
      \label{fig:params_s7}
    \end{subfigure}
    \vspace{0.6em}

    \begin{subfigure}{\linewidth}
      \centering
      \includegraphics[width=\linewidth]{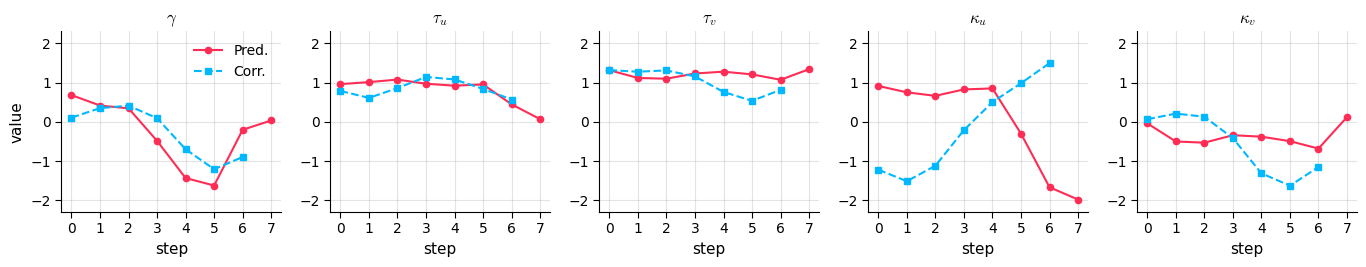}%
      \caption{NFE = 8, interpolated from (7, 9)}
      \label{fig:params_s8}
    \end{subfigure}
    \vspace{0.6em}

    \begin{subfigure}{\linewidth}
      \centering
      \includegraphics[width=\linewidth]{appendix/learned_parameters/dit/s9.png}%
      \caption{NFE = 9}
      \label{fig:params_s9}
    \end{subfigure}

  \end{minipage}
  \end{adjustbox}

  \caption{\textbf{Interpolated parameters.} $\{\gamma, \tau_u, \tau_v, \kappa_u, \kappa_v\}$ for DiT~\citep{peebles2023scalable}.}
  \label{fig:learned_dit_interpolated}
\end{figure*}

%% file: appendix/additional_sampling_results/main.tex
\input{appendix/additional_sampling_results/dit4}
\input{appendix/additional_sampling_results/gmdit3}
\input{appendix/additional_sampling_results/sana3}
\input{appendix/additional_sampling_results/pixart5}

%% file: appendix/additional_sampling_results/dit4.tex
\begin{figure}[p]
    \centering
    \begin{subfigure}{\linewidth}
        \centering
        \includegraphics[width=0.95\linewidth]{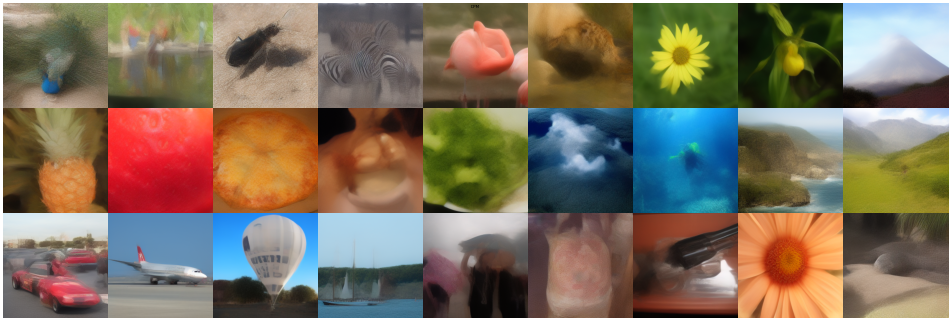}
        \caption{DPM-Solver++~\citep{lu2022dpmpp}}
        \label{fig:dpm_solverpp}
    \end{subfigure}
    \vspace{0.5em}

    \begin{subfigure}{\linewidth}
        \centering
        \includegraphics[width=0.95\linewidth]{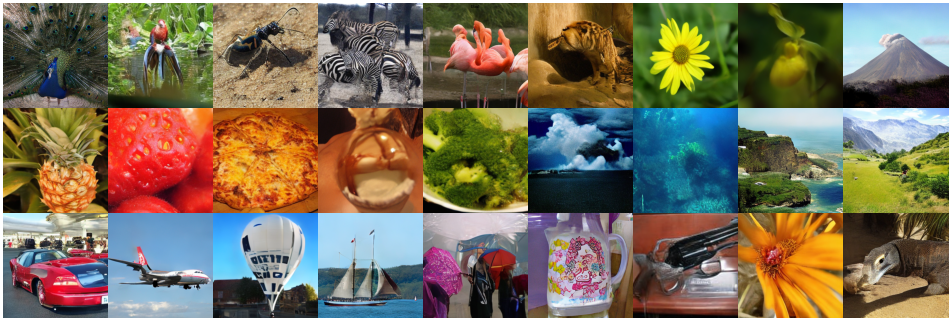}
        \caption{Dual-Solver (Ours)}
        \label{fig:dual_solver}
    \end{subfigure}
    \vspace{0.5em}

    \begin{subfigure}{\linewidth}
        \centering
        \includegraphics[width=0.95\linewidth]{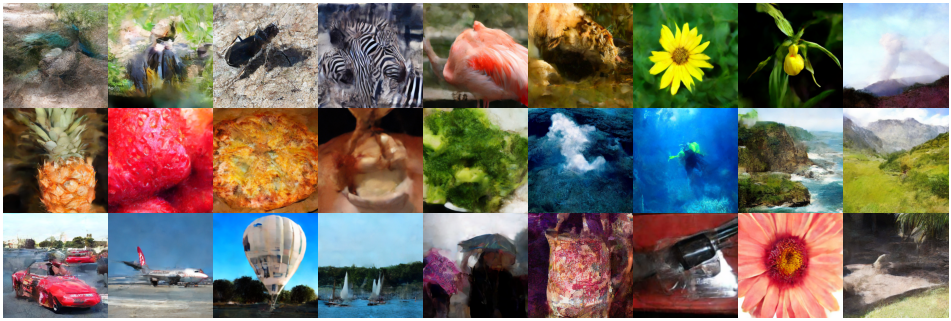}
        \caption{BNS-Solver~\citep{shaul2024bespoke}}
        \label{fig:bns_solver}
    \end{subfigure}
    \vspace{0.5em}

    \begin{subfigure}{\linewidth}
        \centering
        \includegraphics[width=0.95\linewidth]{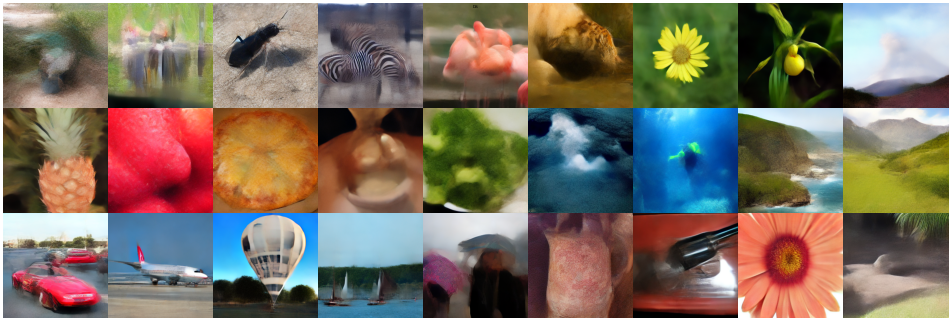}
        \caption{DS-Solver~\citep{wang2025differentiable}}
        \label{fig:ds_solver}
    \end{subfigure}

    \caption{\textbf{Additional sampling results.} DiT-XL/2 256$\times$256 (NFE=4, CFG=1.5)}
    \label{fig:additional_dit}
\end{figure}

%% file: appendix/additional_sampling_results/gmdit3.tex
\begin{figure}[p]
    \centering
    \begin{subfigure}{\linewidth}
        \centering
        \includegraphics[width=0.95\linewidth]{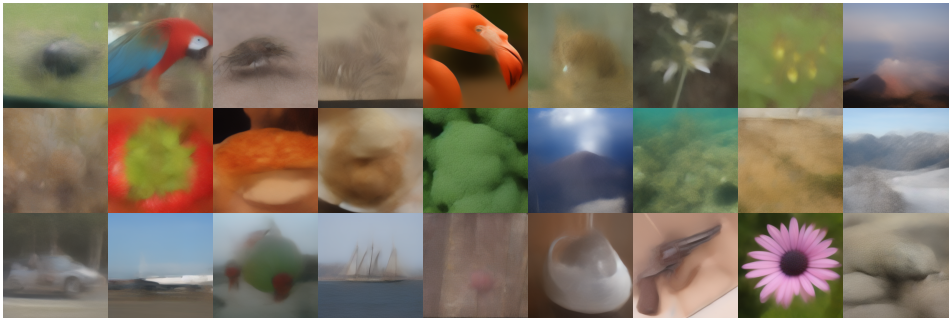}
        \caption{DPM-Solver++~\citep{lu2022dpmpp}}
        \label{fig:dpm_solverpp}
    \end{subfigure}
    \vspace{0.5em}

    \begin{subfigure}{\linewidth}
        \centering
        \includegraphics[width=0.95\linewidth]{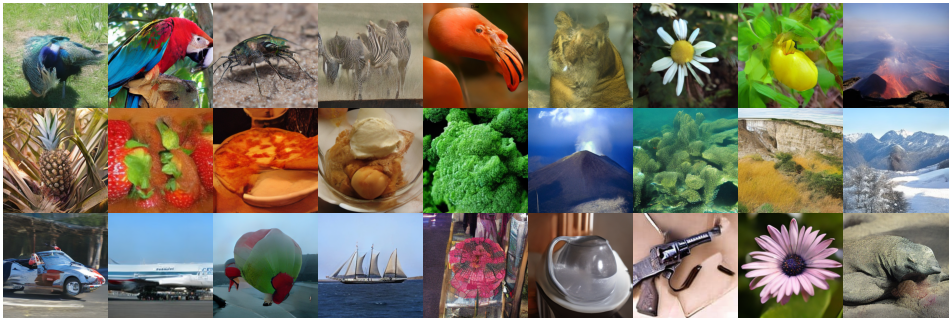}
        \caption{Dual-Solver (Ours)}
        \label{fig:dual_solver}
    \end{subfigure}
    \vspace{0.5em}

    \begin{subfigure}{\linewidth}
        \centering
        \includegraphics[width=0.95\linewidth]{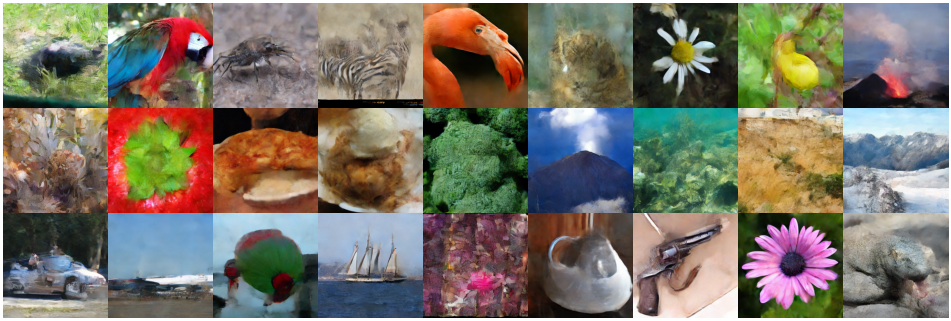}
        \caption{BNS-Solver~\citep{shaul2024bespoke}}
        \label{fig:bns_solver}
    \end{subfigure}
    \vspace{0.5em}

    \begin{subfigure}{\linewidth}
        \centering
        \includegraphics[width=0.95\linewidth]{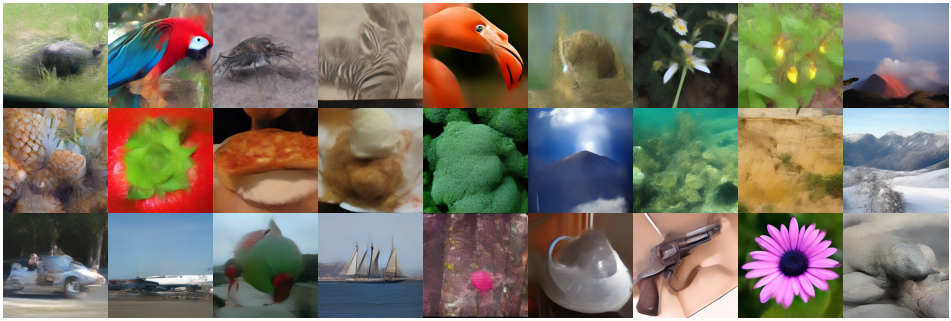}
        \caption{DS-Solver~\citep{wang2025differentiable}}
        \label{fig:ds_solver}
    \end{subfigure}

    \caption{\textbf{Additional sampling results.} GM-DiT 256$\times$256 (NFE=3, CFG=1.4)}
    \label{fig:additional_gmdit}
\end{figure}

%% file: appendix/additional_sampling_results/sana3.tex

\begin{figure*}[p]
\captionsetup[subfigure]{justification=centering, singlelinecheck=false}
\centering

  \rowprompt{\textit{Golden autumn park of falling leaves, graceful girl playing violin, flowing satin dress, impressionist brush strokes}}
  \begin{subfigure}[t]{0.24\textwidth}
    \includegraphics[width=\linewidth]{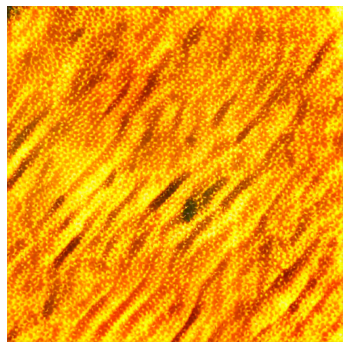}
    \label{fig:comp_dpmpp_top}
  \end{subfigure}\hfill
  \begin{subfigure}[t]{0.24\textwidth}
    \includegraphics[width=\linewidth]{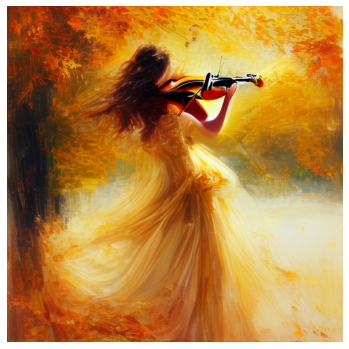}
    \label{fig:comp_dual_top}
  \end{subfigure}\hfill
  \begin{subfigure}[t]{0.24\textwidth}
    \includegraphics[width=\linewidth]{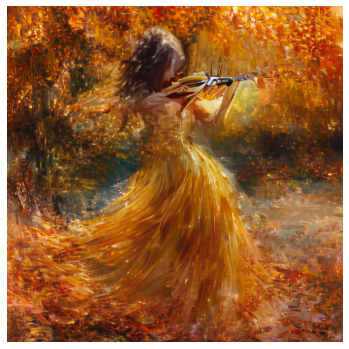}
    \label{fig:comp_bns_top}
  \end{subfigure}\hfill
  \begin{subfigure}[t]{0.24\textwidth}
    \includegraphics[width=\linewidth]{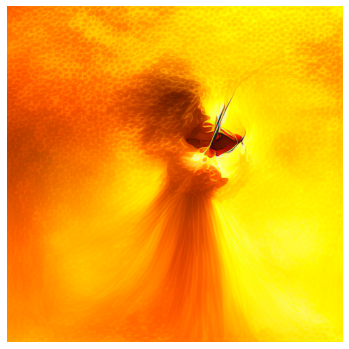}
    \label{fig:comp_ds_top}
  \end{subfigure}

  \vspace{-0.25\baselineskip}  

  \rowprompt{\textit{Towering neon-lit cyberpunk skyline, armored samurai with glowing katana, chrome textures, cinematic digital art}}
  \begin{subfigure}[t]{0.24\textwidth}
    \includegraphics[width=\linewidth]{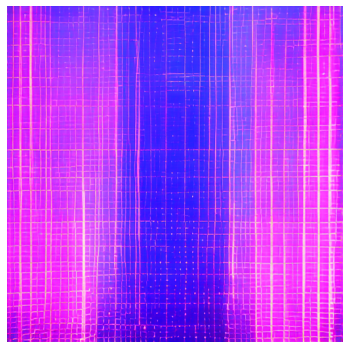}
    \label{fig:comp_dpmpp_top}
  \end{subfigure}\hfill
  \begin{subfigure}[t]{0.24\textwidth}
    \includegraphics[width=\linewidth]{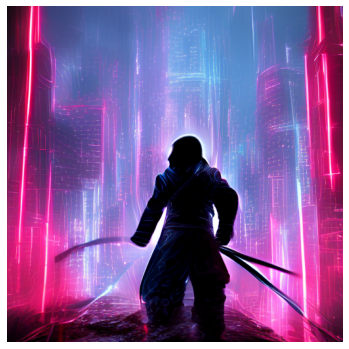}
    \label{fig:comp_dual_top}
  \end{subfigure}\hfill
  \begin{subfigure}[t]{0.24\textwidth}
    \includegraphics[width=\linewidth]{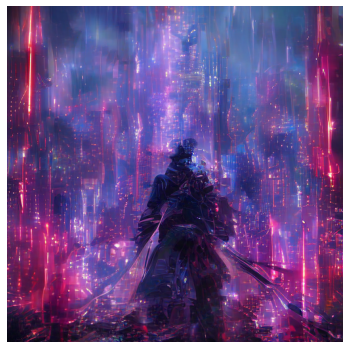}
    \label{fig:comp_bns_top}
  \end{subfigure}\hfill
  \begin{subfigure}[t]{0.24\textwidth}
    \includegraphics[width=\linewidth]{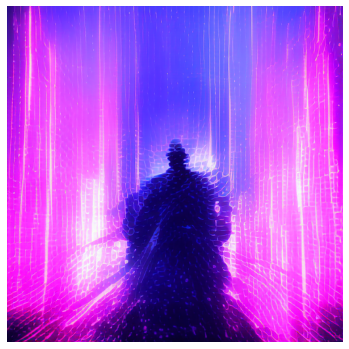}
    \label{fig:comp_ds_top}
  \end{subfigure}

  \vspace{-0.25\baselineskip}  

  \rowprompt{\textit{Remote desert oasis at sunrise, powerful white stallion galloping, glossy mane and muscles, hyperreal photo realism}}
  \begin{subfigure}[t]{0.24\textwidth}
    \includegraphics[width=\linewidth]{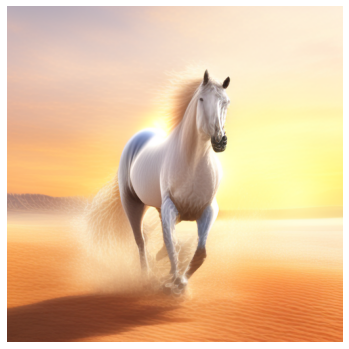}
    \label{fig:comp_dpmpp_top}
  \end{subfigure}\hfill
  \begin{subfigure}[t]{0.24\textwidth}
    \includegraphics[width=\linewidth]{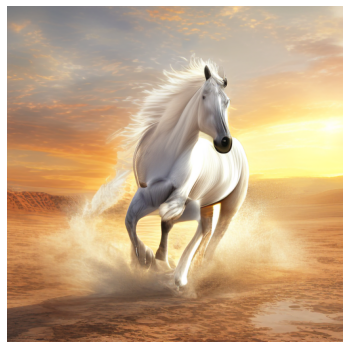}
    \label{fig:comp_dual_top}
  \end{subfigure}\hfill
  \begin{subfigure}[t]{0.24\textwidth}
    \includegraphics[width=\linewidth]{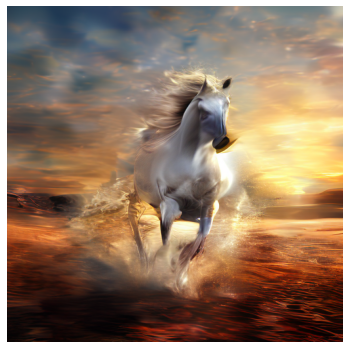}
    \label{fig:comp_bns_top}
  \end{subfigure}\hfill
  \begin{subfigure}[t]{0.24\textwidth}
    \includegraphics[width=\linewidth]{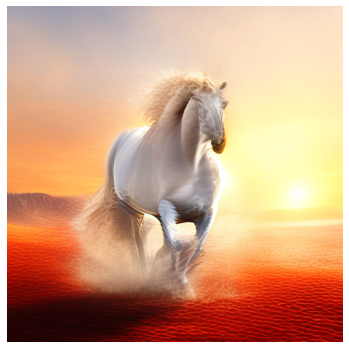}
    \label{fig:comp_ds_top}
  \end{subfigure}

  \vspace{-0.25\baselineskip}

  \rowprompt{\textit{Vibrant tropical beach at sunset, pirate ship anchored offshore, weathered wood hull, playful cartoon illustration}}
  \begin{subfigure}[t]{0.24\textwidth}
    \includegraphics[width=\linewidth]{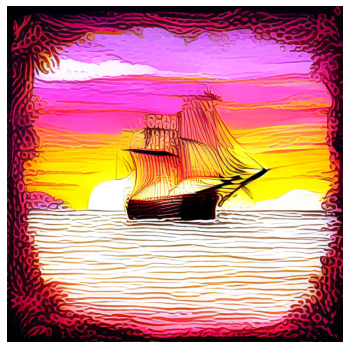}
    \label{fig:comp_dpmpp_mid}
  \end{subfigure}\hfill
  \begin{subfigure}[t]{0.24\textwidth}
    \includegraphics[width=\linewidth]{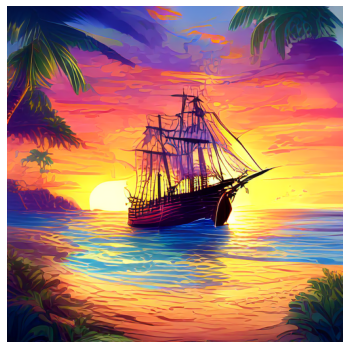}
    \label{fig:comp_dual_mid}
  \end{subfigure}\hfill
  \begin{subfigure}[t]{0.24\textwidth}
    \includegraphics[width=\linewidth]{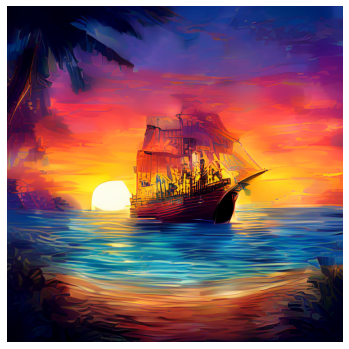}
    \label{fig:comp_bns_mid}
  \end{subfigure}\hfill
  \begin{subfigure}[t]{0.24\textwidth}
    \includegraphics[width=\linewidth]{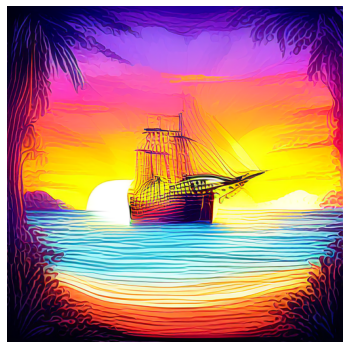}
    \label{fig:comp_ds_mid}
  \end{subfigure}

  \vspace{-0.25\baselineskip}

  \rowprompt{\textit{Overgrown jungle temple ruins, spotted leopard stalking prey, sleek fur patterns, painterly realism concept art}}
  \captionsetup[subfigure]{justification=centering, singlelinecheck=false}
  \begin{subfigure}[t]{0.24\textwidth}
    \includegraphics[width=\linewidth]{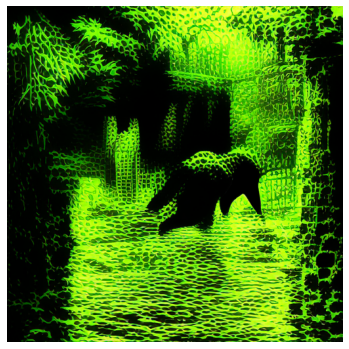}
    \caption{DPM-Solver++\\\citep{lu2022dpmpp}}
    \label{fig:comp_dpmpp}
  \end{subfigure}\hfill
  \begin{subfigure}[t]{0.24\textwidth}
    \includegraphics[width=\linewidth]{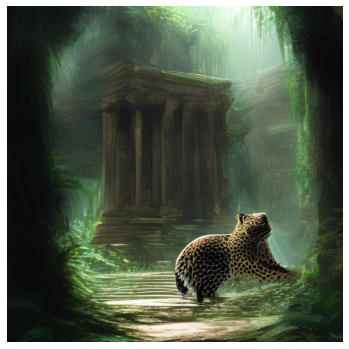}
    \caption{Dual-Solver (Ours)}
    \label{fig:comp_dual}
  \end{subfigure}\hfill
  \begin{subfigure}[t]{0.24\textwidth}
    \includegraphics[width=\linewidth]{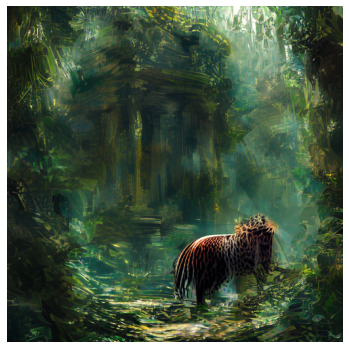}
    \caption{BNS-Solver\\\citep{shaul2024bespoke}}
    \label{fig:comp_bns}
  \end{subfigure}\hfill
  \begin{subfigure}[t]{0.24\textwidth}
    \includegraphics[width=\linewidth]{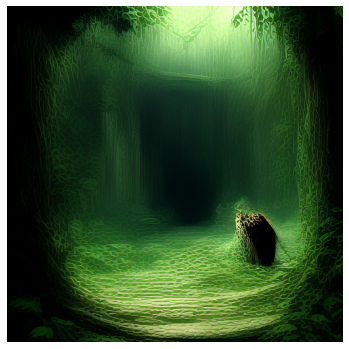}
    \caption{DS-Solver\\\citep{wang2025differentiable}}
    \label{fig:comp_ds}
  \end{subfigure}

  \caption{\textbf{Additional sampling results.} SANA~\citep{xie2024sana}, NFE=3, CFG=4.5.}
  \label{fig:additional_sana}
\end{figure*}

%% file: appendix/additional_sampling_results/pixart5.tex
\begin{figure*}[p]
\captionsetup[subfigure]{justification=centering, singlelinecheck=false}
\centering

  \rowprompt{\textit{In a misty emerald forest clearing, a majestic golden wolf with silky glowing fur, painted in luminous oil style}}
  \begin{subfigure}[t]{0.24\textwidth}
    \includegraphics[width=\linewidth]{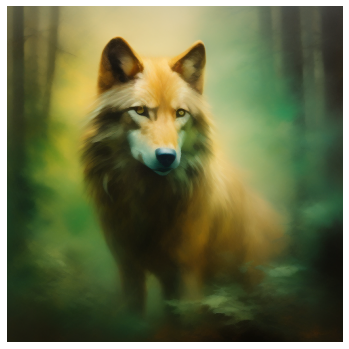}
    \label{fig:comp_dpmpp_top}
  \end{subfigure}\hfill
  \begin{subfigure}[t]{0.24\textwidth}
    \includegraphics[width=\linewidth]{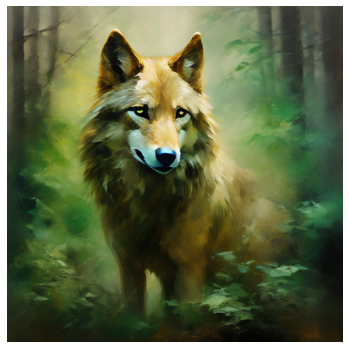}
    \label{fig:comp_dual_top}
  \end{subfigure}\hfill
  \begin{subfigure}[t]{0.24\textwidth}
    \includegraphics[width=\linewidth]{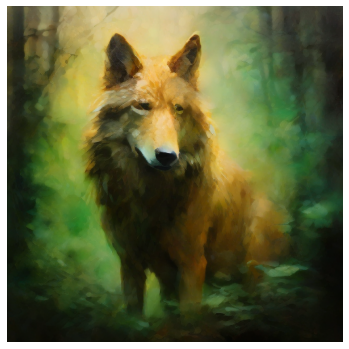}
    \label{fig:comp_bns_top}
  \end{subfigure}\hfill
  \begin{subfigure}[t]{0.24\textwidth}
    \includegraphics[width=\linewidth]{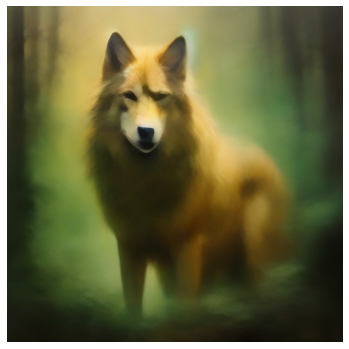}
    \label{fig:comp_ds_top}
  \end{subfigure}

  \vspace{-0.25\baselineskip}  

  \rowprompt{\textit{Towering neon-lit cyberpunk skyline, armored samurai with glowing katana, chrome textures, cinematic digital art}}
  \begin{subfigure}[t]{0.24\textwidth}
    \includegraphics[width=\linewidth]{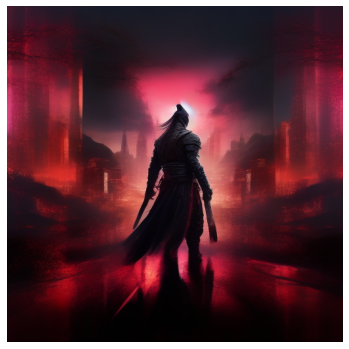}
    \label{fig:comp_dpmpp_top}
  \end{subfigure}\hfill
  \begin{subfigure}[t]{0.24\textwidth}
    \includegraphics[width=\linewidth]{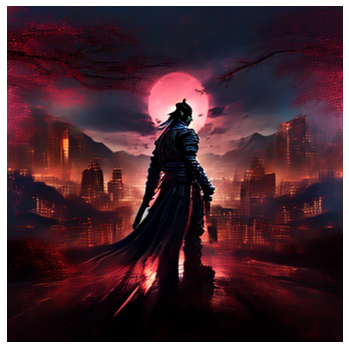}
    \label{fig:comp_dual_top}
  \end{subfigure}\hfill
  \begin{subfigure}[t]{0.24\textwidth}
    \includegraphics[width=\linewidth]{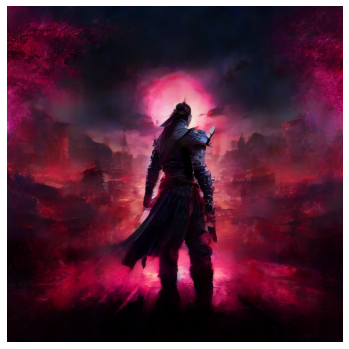}
    \label{fig:comp_bns_top}
  \end{subfigure}\hfill
  \begin{subfigure}[t]{0.24\textwidth}
    \includegraphics[width=\linewidth]{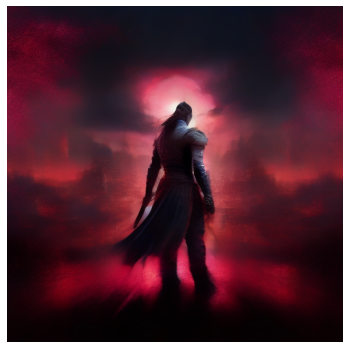}
    \label{fig:comp_ds_top}
  \end{subfigure}

  \vspace{-0.25\baselineskip}  

  \rowprompt{\textit{Vast desert beneath a starry cosmos, colossal sphinx carved in sandstone, rough surface, surreal dreamlike painting}}
  \begin{subfigure}[t]{0.24\textwidth}
    \includegraphics[width=\linewidth]{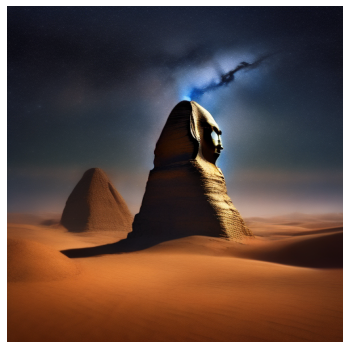}
    \label{fig:comp_dpmpp_top}
  \end{subfigure}\hfill
  \begin{subfigure}[t]{0.24\textwidth}
    \includegraphics[width=\linewidth]{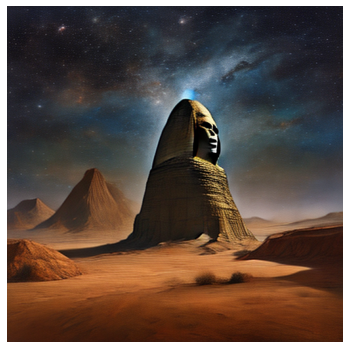}
    \label{fig:comp_dual_top}
  \end{subfigure}\hfill
  \begin{subfigure}[t]{0.24\textwidth}
    \includegraphics[width=\linewidth]{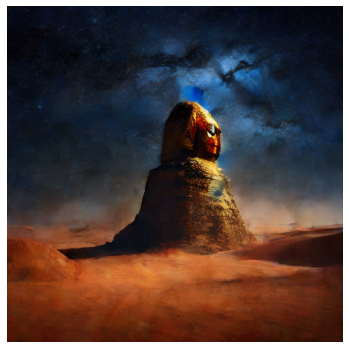}
    \label{fig:comp_bns_top}
  \end{subfigure}\hfill
  \begin{subfigure}[t]{0.24\textwidth}
    \includegraphics[width=\linewidth]{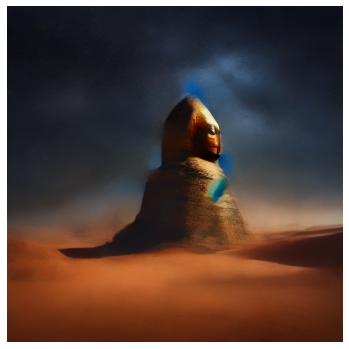}
    \label{fig:comp_ds_top}
  \end{subfigure}

  \vspace{-0.25\baselineskip}

  \rowprompt{\textit{Sleek futuristic laboratory interior, humanoid AI robot in motion, polished steel surfaces, sci-fi blueprint style}}
  \begin{subfigure}[t]{0.24\textwidth}
    \includegraphics[width=\linewidth]{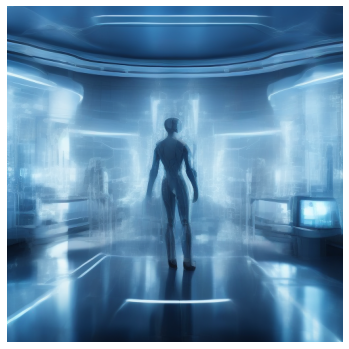}
    \label{fig:comp_dpmpp_top}
  \end{subfigure}\hfill
  \begin{subfigure}[t]{0.24\textwidth}
    \includegraphics[width=\linewidth]{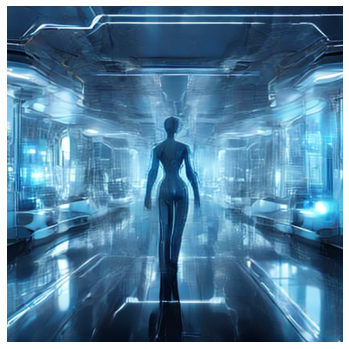}
    \label{fig:comp_dual_top}
  \end{subfigure}\hfill
  \begin{subfigure}[t]{0.24\textwidth}
    \includegraphics[width=\linewidth]{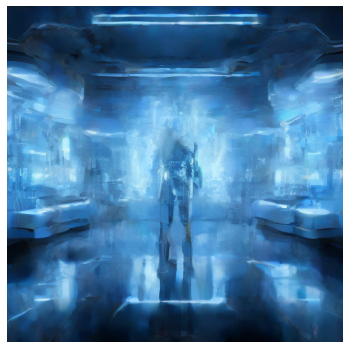}
    \label{fig:comp_bns_top}
  \end{subfigure}\hfill
  \begin{subfigure}[t]{0.24\textwidth}
    \includegraphics[width=\linewidth]{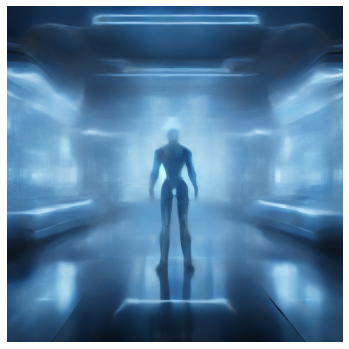}
    \label{fig:comp_ds_top}
  \end{subfigure}

  \vspace{-0.25\baselineskip}

  \rowprompt{\textit{Storm-drenched battlefield ruins, giant mech rising from fire, rusted armor plates, gritty photorealistic concept art}}
  \captionsetup[subfigure]{justification=centering, singlelinecheck=false}
  \begin{subfigure}[t]{0.24\textwidth}
    \includegraphics[width=\linewidth]{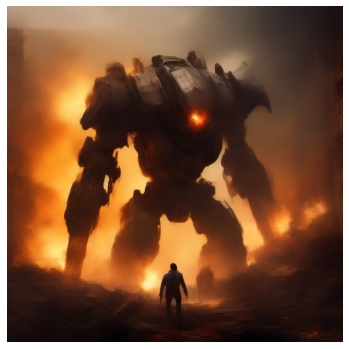}
    \caption{DPM-Solver++\\\citep{lu2022dpmpp}}
    \label{fig:comp_dpmpp}
  \end{subfigure}\hfill
  \begin{subfigure}[t]{0.24\textwidth}
    \includegraphics[width=\linewidth]{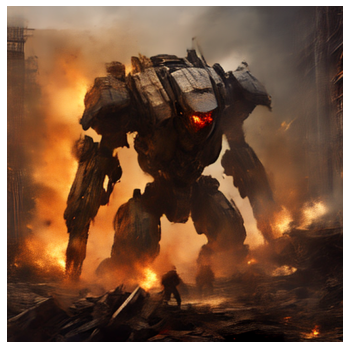}
    \caption{Dual-Solver (Ours)}
    \label{fig:comp_dual}
  \end{subfigure}\hfill
  \begin{subfigure}[t]{0.24\textwidth}
    \includegraphics[width=\linewidth]{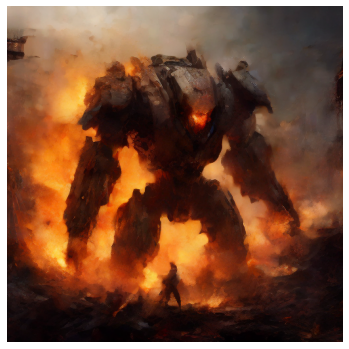}
    \caption{BNS-Solver\\\citep{shaul2024bespoke}}
    \label{fig:comp_bns}
  \end{subfigure}\hfill
  \begin{subfigure}[t]{0.24\textwidth}
    \includegraphics[width=\linewidth]{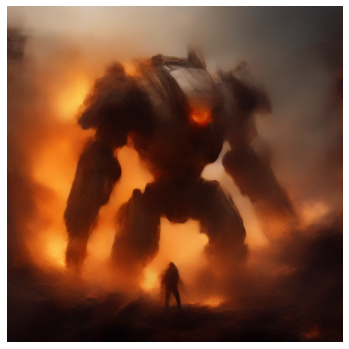}
    \caption{DS-Solver\\\citep{wang2025differentiable}}
    \label{fig:comp_ds}
  \end{subfigure}

  \caption{\textbf{Additional sampling results.} PixArt-$\alpha$~\citep{chen2023pixart}, NFE=5, CFG=3.5.}
  \label{fig:additional_pixart}
\end{figure*}